\newtheorem{theorem}{Theorem}
\newtheorem{lemma}{Lemma}
\newtheorem{definition}{Definition}
\newtheorem{remark}{Remark}
\DeclareMathOperator*{\argmin}{arg\,min}
\newcommand*{\inftybar}{\medskip \begin{center}\rule[.5ex]{15ex}{0.5pt} ~$\infty$~ \rule[.5ex]{15ex}{0.5pt}\end{center}}
\newcommand{\Z}{\mathbb{Z}}
\definecolor{paleblue}{RGB}{240, 248, 255}      
\definecolor{paleviolet}{RGB}{245, 240, 255}    
\definecolor{palegray}{RGB}{248, 248, 250}      
\title{One Rank at a Time: \\ Cascading Error Dynamics in Sequential Learning}
\author{%
  Mahtab Alizadeh Vandchali\thanks{Equal contribution.} \And Fangshuo (Jasper) Liao\thanks{Equal contribution.} \And Anastasios Kyrillidis \\
  Department of Computer Science\\
  Rice University\\
  \texttt{ma202@rice.edu, fl15@rice.edu, anastasios@rice.edu}
}
\begin{document}

\maketitle

\begin{abstract}
Sequential learning --where complex tasks are broken down into simpler, hierarchical components-- has emerged as a paradigm in AI. 
This paper views sequential learning through the lens of low-rank linear regression, focusing specifically on how errors propagate when learning rank-1 subspaces sequentially. 
We present an analysis framework that decomposes the learning process into a series of rank-1 estimation problems, where each subsequent estimation depends on the accuracy of previous steps. 
Our contribution is a characterization of the error propagation in this sequential process, establishing bounds on how errors --e.g., due to limited computational budgets and finite precision-- affect the overall model accuracy. 
We prove that these errors compound in predictable ways, with implications for both algorithmic design and stability guarantees. 
\end{abstract}

\clearpage

\newtcolorbox{papersummary}{
    enhanced,                    
    colback=blue!5!gray!10,    
    colframe=blue!20!gray!30,  
    arc=10pt,
    boxrule=1pt,
    left=10pt,
    right=10pt,
    top=8pt,
    bottom=8pt,
    fonttitle=\bfseries\large,
    title style={colback=white, colframe=gray!50},
    attach boxed title to top center={yshift=-2mm},
    boxed title style={arc=5pt, boxrule=1pt}
}

\begin{papersummary}
\begin{center}
    \textbf{Paper Meta-Analysis Card of ``One Rank at a Time: \\ Cascading Error Dynamics in Sequential Learning''}
\end{center}
\textbf{Authors:} Mahtab Alizadeh Vandchali, Fangshuo (Jasper) Liao, Anastasios Kyrillidis

\textbf{Institution:} Rice CS

\medskip
\textbf{Research genesis:} Current sequential learning approaches lack theoretical understanding of how numerical errors compound through hierarchical decomposition. While methods like LoRA demonstrate empirical success, the question of error propagation in sequential rank-1 subspace learning remains uncharacterized. 

\medskip
\textbf{Thought process:} Here, we first focus on the linear case as a foundational and more tractable setting to develop theoretical understanding. We recognized that sequential learning can be mathematically formulated as iterative rank-1 matrix deflation, where each step depends on the accuracy of previous estimations. This led us to decompose the problem into studying how approximation errors from individual rank-1 subroutines propagate through the sequential process.

\medskip
\textbf{Methodology:} The core innovation lies in characterizing error propagation through recursive bounds that depend on the spectral properties of the data matrix. The analysis decomposes overall error into ground-truth approximation, propagation, and optimization components, establishing that errors compound multiplicatively with factors determined by singular value gaps ($\mathcal{T}_k^\star$) and matrix condition numbers.

\medskip
\textbf{What remains open:} Extension to non-linear transformations and complex neural architectures represents the primary theoretical challenge. The optimal allocation of resources across sequential components lacks complete characterization.

\medskip
\textbf{Limitations:} The theoretical framework is constrained to linear low-rank regression settings, limiting direct application to modern deep learning architectures. Experimental validation focuses on relatively simple scenarios (feedforward networks, basic classification), and the analysis assumes specific spectral properties that may not hold in general practice.

\medskip
\textbf{Practical considerations:} Implementation requires careful management of iteration budgets, with theoretical results suggesting front-loading computational effort on early components. The approach offers adaptive rank determination capabilities but demands more total training iterations compared to simultaneous optimization.

\medskip
\textbf{Theoretical implications:} The analysis reveals that error propagation follows predictable mathematical patterns, challenging the view that sequential approaches are inherently less stable than simultaneous methods and providing a foundation for principled algorithm design in hierarchical learning systems.

\medskip
\begin{minipage}[t]{0.45\textwidth}
\vspace{-2.5cm}
\textbf{Date:} 05/22/2025\\
\textbf{Correspondence:} anastasios@rice.edu\\
\end{minipage}
\hfill
\begin{minipage}[b]{0.45\textwidth}
\vspace{0pt}
\begin{flushright}
\includegraphics[width=0.7\linewidth]{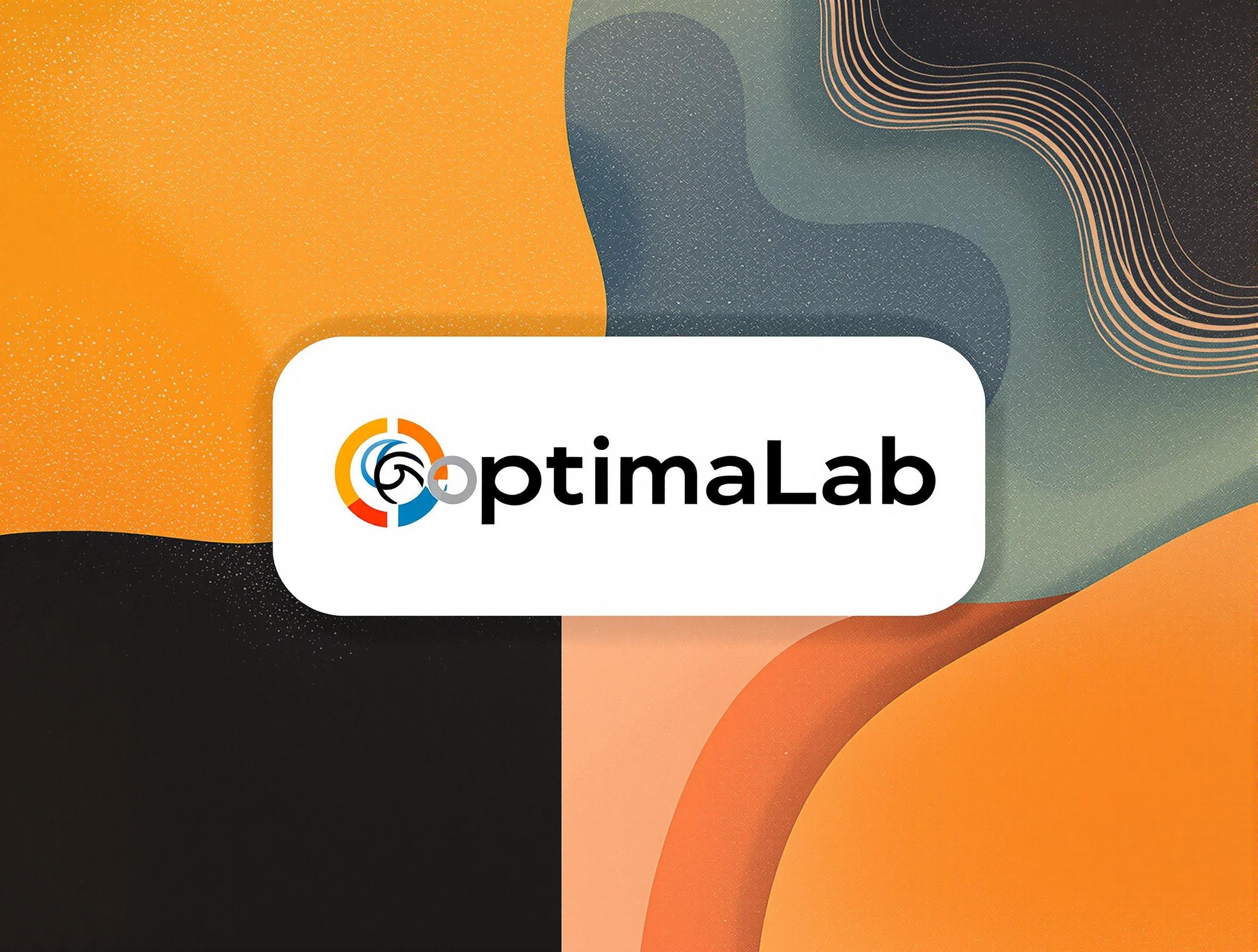}
\end{flushright}
\end{minipage}
\end{papersummary}

\clearpage

\vspace{-0.2cm}
\section{Introduction}
\vspace{-0.2cm}

Sequential learning \cite{barto2003recent, pateria2021hierarchical, sahni2017learning, ostapenko2024towards, sordoni2024joint, page2024multi} is a concept found in cognitive science that posits that learning could be structured as a series of stages or levels
\cite{shapere1964structure, richardson2019models, okano2000learning, carey1999cognitive, ericsson1993role, national2000people}. 
Sequential learning is especially relevant when the notion of ``skills'' is orthogonal or correlated to each other, or even layered hierarchically \cite{farajtabar2020orthogonal, chaudhry2020continual}. 
For instance, in a multitask learning environment \cite{caruana1997multitask, ruder2017overview, andrychowicz2016learning, crawshaw2020multi}, basic skills might serve for common tasks, while specialized skills might be required for specific tasks. 

Understanding fully sequential learning is an open question, even for simple models:
researchers study not just how AI systems learn, but why they fail to learn when they do, and under what conditions they can learn better \cite{zhang2017understanding, bjorck2018understanding, baldi2013understanding,
sun2020optimization, 
liu2020understanding, 
zhou2022understanding, wang2021rethinking, yang2020rethinking}.
Just to provide a non-exhaustive list of recent efforts: \cite{zhao2024learning} presents a sequential learning strategy on videos and text for sentiment analysis, where learning features sequentially --from simpler to more complex-- led to better performance. \cite{mcalister2024sequential} combines deep learning with symbolic AI to tackle sequential learning tasks. 
\cite{bian2023multi} deals with recommendation systems, such as those used by Netflix or Amazon and propose a method for these systems to learn from sequences of user interactions with multiple types of data (e.g., text, images, videos) for better recommendations.

\textbf{Our setting.}
We focus on low-rank subspaces as feature representations \cite{sanyal2018robustness}.
Low-rank models are compelling due to their interpretable solutions that capture influential factors in the data \cite{jolliffe1995rotation, koren2009matrix, vidal2014low}. 
Assuming low-rank linear regression \cite{candes2012exact, recht2010guaranteed, rohde2011estimation, ha2015robust} and given input $\mathbf{X} = \left[\mathbf{x}_1,\dots,\mathbf{x}_n\right] \in \mathbb{R}^{d \times n}$, the goal is to approximate the relationship between a dependent variable $\mathbf{Y} = \left[\mathbf{y}_1,\dots,\mathbf{y}_n\right]\in \mathbb{R}^{m \times n}$ and independent unknown variables $\mathbf{B} \in \mathbb{R}^{m \times r}$ and $\mathbf{A} \in \mathbb{R}^{r \times d}$ that result into a lower rank $r \ll \min(m, d)$ matrix $\mathbf{W} = 
\mathbf{B} \mathbf{A} \in \mathbb{R}^{m \times d}$ such that:
\begin{align*}
\mathbf{Y} \approx \mathbf{B} \mathbf{A} \mathbf{X}  = \mathbf{W} \mathbf{X}.   
\end{align*}
Here, $(\mathbf{x}_i, \mathbf{y}_i)$ represents a data sample of a dataset $\mathcal{D} := \{(\mathbf{x}_i, \mathbf{y}_i)\}_{i=1}^n$. 
Conceptually, the matrix $\mathbf{W}$ projects the original $d$ features onto a $m$-dimensional space, given that first $\mathbf{x}_i$ is passed through a $r$-dimensional ``funnel'', thus reducing the dimensionality and complexity of the model. 

In view of this, we utilize linear low-rank regression as a framework to study sequential learning processes, and how errors propagate through sequential rank-1 subspace learning. 
Algorithmically, the approach we consider relates to deflation in PCA \cite{hotelling1933analysis, mackey2008deflation, zhang2006schur, sriperumbudur2007sparse, saad1988projection, danisman2014comparison}.
That is, given a a rank-$1$ estimate of $\mathbf{W}$ based on label $\mathbf{Y}_k$
\begin{align}\label{eq:rank_1_equation}
    \mathbf{a}_k, \mathbf{b}_k = \operatorname*{arg\,min}_{\mathbf{a} \in \mathbb{R}^d, \mathbf{b} \in \mathbb{R}^m}\tfrac{1}{2}\left\| \mathbf{Y}_k - \mathbf{b} \mathbf{a}^{\top} \mathbf{X} \right\|_F^2.
\end{align}
Our approach starts off with $\mathbf{Y}_1 = \mathbf{Y}$ and obtain $\mathbf{a}_1,\mathbf{b}_1$. The matrix $\mathbf{Y}_1$ is further processed to exist on a ``subspace'' where the contributions of $(\mathbf{a}_1, \mathbf{b}_1)$ are removed:
$\mathbf{Y}_{2} := \mathbf{Y}_1 - \mathbf{b}_1 \mathbf{a}_1^\top \mathbf{X}$.
This process is repeated by applying sequentially rank-1 updates on the deflated matrix, which leads to an approximation of the second pair $(\mathbf{a}_2, \mathbf{b}_2)$, and so on.
Overall:
\begin{equation}
    \label{eq:rank1_deflation}
    \begin{gathered}
    \mathbf{Y}_1 = \mathbf{Y};\quad (\mathbf{a}_k, \mathbf{b}_k) = \texttt{rank-1}(\mathbf{Y}_k, \mathbf{X}, t); \quad \mathbf{Y}_{k+1} := \mathbf{Y}_k - \mathbf{b}_k \mathbf{a}_k^\top \mathbf{X},
    \end{gathered}
\end{equation}
where $\texttt{rank-1}(\mathbf{Y}_k, \mathbf{X}, t)$ returns an approximation of a rank-1 estimate in \eqref{eq:rank_1_equation} that minimizes the mean-squared error $\frac{1}{2} \left\| \mathbf{Y}_k - \mathbf{b} \mathbf{a}^{\top} \mathbf{X} \right\|_F^2$, using $t$ iterations.
We then estimate the subsequent subspaces by running the same rank-1 algorithm repetitively.

\textbf{Motivation.} 
While subspace tracking and estimate has a long history (see \cite{yang1995projection, vaswani2018robust, balzano2018streaming, peng2023ideal} and references to these works), low-rank subspaces have recently gained attention due to emerging applications in AI. 
Parameter-efficient fine-tuning (PEFT) methods, such as LoRA \cite{hu2021lora}, have demonstrated that representing weight updates as low-rank matrices can effectively adapt large language models, while maintaining performance. 
But even beyond AI, recommendation systems \cite{koren2009matrix} require real-time updates, and sequential low-rank modifications offer a computationally efficient way to incorporate new user-item interactions.
This validates the hypothesis that complex transformations can be approximated through a series of low-rank updates. 
However, to our knowledge, the theoretical understanding of how errors accumulate in such sequential approximations remains limited.



\textbf{Contributions.}
This work presents a mathematical formulation of linear low-rank regression that emphasizes its decomposition into rank-1 problems. 
We focus on the scenario where the sub-routine, $\texttt{rank-1}(\mathbf{Y}_k, \mathbf{X}, t)$, incurs numerical errors: even solving  \eqref{eq:rank1_deflation} for a single pair, our estimate is only an approximation to the true pair. 
This view offers a pathway to examine how errors from each rank-1 estimation affect subsequent estimations. 
Since each step in the deflation process depends on the accuracy of the previous steps, any error in estimating a component propagates to the next step, affecting the overall accuracy of the model. 
The following contributions are made: \vspace{-0.15cm}
\begin{itemize} [leftmargin=*]
    \item \textit{Hierarchical Learning Analysis.} We provide a theoretical analysis of hierarchical learning, illustrating how each rank-1 component builds upon the previous components. \vspace{-0.1cm}
    \item \textit{Error Propagation Study.} We provide examination of how errors propagate through the deflation process in linear low-rank regression, highlighting implications in the stability and accuracy.  \vspace{-0.1cm}
    \item \textit{Generalization Ability.} We analyze in the setting of noiseless and noisy labels how sequentially discovering rank-1 components can learn a model that enjoys provable generalization guarantee.\vspace{-0.1cm}
    \item \textit{Experimental Validation.} We validate our theory on both linear low-rank matrix regression problems and simple PEFT settings.\vspace{-0.1cm}
\end{itemize}

\vspace{-0.2cm}\section{Background}
\label{problem_setup}
\vspace{-0.2cm}
We use $\|\bm{a}\|_2$ to denote the $\ell_2$-norm of vector $\mathbf{a}$; $\|\mathbf{A}\|_2$ denotes the spectral norm, $\|\mathbf{A}\|_F$ the Frobenius norm of matrix $\mathbf{A}$; \( \text{sv}_\text{L}(\mathbf{A}) \) and \( \text{sv}_\text{R}(\mathbf{A}) \) denote the normalized top left and right singular vectors of \( \mathbf{A} \).

\textbf{Problem setup.}
Let $\mathbf{X} \in \mathbb{R}^{d \times n}$ be the input matrix of $n$ data points, each with $d$ features. 
For simplicity, we will assume that $\mathbf{X}$ is sampled entrywise from the normal distribution with zero mean and unit variance, followed by a row-wise normalizing process, unless otherwise stated. 
Let $\mathbf{Y} \in \mathbb{R}^{m \times n}$ be the output matrix based on the noiseless generative model, as in:
    $\mathbf{Y} = \mathbf{W}^\star \mathbf{X}$,
that simulates the process of ``inserting'' data samples (columns of $\mathbf{X}$) through a low-rank linear channel $\mathbf{W}^\star \in \mathbb{R}^{m \times d}$ to obtain the corresponding column in $\mathbf{Y}$.
The goal is, then, to estimate the best low rank parameter $\mathbf{W}$ given data $(\mathbf{Y}, \mathbf{X})$ as a low-rank linear regression problem:
\begin{equation}\label{eq:3}
\begin{aligned}
& \underset{\mathbf{W} \in \mathbb{R}^{m \times d}}{\text{min}} 
& & f(\mathbf{W}) :=\tfrac{1}{2} \|\mathbf{Y} - \mathbf{W} \mathbf{X}\|_F^2
& \text{s.t.}
& & \text{rank}(\mathbf{W}) \leq r.
\end{aligned} 
\end{equation}
\textbf{Solutions.} This problem has a long history with various approaches, including convex \cite{recht2010guaranteed, lee2009guaranteed, liu2009interior}, non-convex projected-gradient descent \cite{jain2010guaranteed,lee2010admira,kyrillidis2014matrix, kyrillidis2011recipes, khanna2017iht, xu2018accelerated}, as well as matrix factorization ones \cite{burer2003nonlinear, jain2013provable, chen2015fast, zhao2015nonconvex, zheng2015convergent, tu2016low, kyrillidis2018provable, park2016non, sun2016guaranteed, bhojanapalli2016dropping, bhojanapalli2016global, park2016finding, ge2017no, hsieh2017non, kyrillidis2017provable, kim2023fast}.
In the latter, the problem turns into:
\begin{equation} \label{eq:4}
\begin{aligned}
\underset{\mathbf{A} \in \mathbb{R}^{r \times d},~\mathbf{B} \in \mathbb{R}^{m \times r}}{\text{min}} f(\mathbf{A}, \mathbf{B}) := \tfrac{1}{2} \|\mathbf{Y} -  \mathbf{B} \mathbf{A} \mathbf{X} \|_F^2, 
\end{aligned}
\end{equation}
which is related to modernized task adaptation in neural network training, like LoRA \cite{hu2021lora, ostapenko2024towards}.
Key difference in our analysis is that we study the sequential nature of learning; in contrast, in the above scenarios, one often utilizes factorized gradient descent, a low-rank solver that \textit{updates all $r$ rank-1 components simultaneously}, as follows: 
\begin{align*}
\mathbf{A}_{t+1} &= \mathbf{A}_t - \eta_{\mathbf{A}} \nabla_\mathbf{A} f(\mathbf{A}_t, \mathbf{B}_t), \quad 
\mathbf{B}_{t+1} = \mathbf{B}_t - \eta_{\mathbf{B}} \nabla_{\mathbf{B}} f(\mathbf{A}_t, \mathbf{B}_t), \quad \text{with $\eta_{\mathbf{A}}$, $\eta_{\mathbf{B}}$ learning rates.}
\end{align*}
We acknowledge solving \eqref{eq:3}-\eqref{eq:4} directly with these methods when $r$ is known is more efficient and could be preferable in terms of accuracy, yet does not fall into the sequential scenario we focus on.

\textbf{Learning rank-1 subspaces sequentially.}
Our aim is to study routines like the ones described in \eqref{eq:rank_1_equation} and \eqref{eq:rank1_deflation}.
I.e., we are interested in the \emph{sequential, rank-1-updated} linear regression setting, and our focus will be on the theoretical understanding of how errors in calculations in \eqref{eq:rank1_deflation} affect the overall performance. 
To do so, we will need to understand the behavior of both the \textit{exact sequential low-rank recovery}, as well as the \textit{inexact sequential low-rank recovery}.
We describe some simple algorithms to motivate our work.

\begin{minipage}{.49\linewidth}
    \begin{algorithm}[H]\small
        \caption{Exact Sequential Low-Rank}
        \label{alg:exact-main-alg}
        \begin{algorithmic}[1]
            \Require Input data $\mathbf{X} \in \mathbb{R}^{d \times n}$, output data $\mathbf{Y} \in \mathbb{R}^{m \times n}$, target rank $r$
            \Ensure Rank-1 components $\left\{ (\mathbf{a}^\star_k, \mathbf{b}^\star_k) \right\}_{k=1}^{r}$
            \State $\mathbf{Y}^\star_1 \gets \mathbf{Y}$
            \For{$k = 1$ \textbf{to} $r$}
                \State $
                \begin{aligned}
                (\mathbf{a}_k^\star, \mathbf{b}_k^\star) \mathrel{\gets} 
                \operatorname*{arg\,min}_{\mathbf{a} \in \mathbb{R}^d,\, \mathbf{b} \in \mathbb{R}^m} 
                \frac{1}{2} \left\| \mathbf{Y}^\star_k \!-\! \mathbf{b} \mathbf{a}^\top \mathbf{X} \right\|_F^2
                \end{aligned}
                $

                \State $\mathbf{Y}^\star_{k+1} \gets \mathbf{Y}^\star_k - \mathbf{b}_k^\star \mathbf{a}_k^{\star \top} \mathbf{X}$
            \EndFor
            \State \Return $\left\{ (\mathbf{a}_k^\star, \mathbf{b}_k^\star) \right\}_{k=1}^{r}$
        \end{algorithmic}
    \end{algorithm}
\end{minipage}
\hspace{0.1cm}
\begin{minipage}{.49\linewidth}
    \begin{algorithm}[H]\small
        \caption{Inexact Sequential Low-Rank}
        \label{alg:inexact-main-alg}
        \begin{algorithmic}[1]
            \Require Input data $\mathbf{X} \in \mathbb{R}^{d \times n}$, output data $\mathbf{Y} \in \mathbb{R}^{m \times n}$, target rank $r$, sub-routine steps $T$
            \Ensure Approx. rank-1 components $\left\{ (\mathbf{a}_k, \mathbf{b}_k) \right\}_{k=1}^{r}$
            \State $\mathbf{Y}_1 \gets \mathbf{Y}$
            \For{$k = 1$ \textbf{to} $r$}
                \State $(\mathbf{a}_k, \mathbf{b}_k) \gets \texttt{rank-1}(\mathbf{Y}_k, \mathbf{X}, t)$
                \State $\mathbf{Y}_{k+1} \gets \mathbf{Y}_k - \mathbf{b}_k \mathbf{a}_k^\top \mathbf{X}$
            \EndFor
            \State \Return $\left\{ (\mathbf{a}_k, \mathbf{b}_k) \right\}_{k=1}^{r}$
        \end{algorithmic}
        \vspace*{1.4em}
    \end{algorithm}
\end{minipage}



Algorithm \ref{alg:exact-main-alg} aims to find exact low-rank subspaces by iteratively computing pairs of vectors $(\mathbf{a}_k^\star, \mathbf{b}_k^\star)$. 
It starts with the original output data $\mathbf{Y}$ as $\mathbf{Y}_1^\star$ (Line 1). 
In each iteration \( k \), an optimization problem is solved to find the \textbf{best} pair \( \left(\mathbf{a}_k^\star, \mathbf{b}_k^\star\right) \) that minimizes the Frobenius norm of the difference between the current matrix \( \mathbf{Y}_k^\star \) and the rank-1 estimate $\mathbf{b} \mathbf{a}^\top \mathbf{X}$ (see Line 3)\footnote{We note that, although $\sum_{k=1}^r\mathbf{b}_k^{\star}\mathbf{a}_k^{\star} = \mathbf{W}^{\star}$, $\left(\mathbf{b}_k^{\star},\mathbf{a}_k^{\star}\right)$ not necessarily aligns with the $k$th left- and right- singular vectors of $\mathbf{W}^{\star}$.}. By applying the singular value decomposition (SVD) to \( \mathbf{Y} \), we decompose it as: $\mathbf{Y} = \sum_{k=1}^{p} \sigma_k^\star \mathbf{u}_k^\star \mathbf{v}_k^{\star\top}$,
where \(\sigma_k^\star \) are the singular values, and \( \mathbf{u}_k^\star \) and \( \mathbf{v}_k^\star \) are the left and right singular vectors, respectively. Notice that we denote $p = \text{rank}(\mathbf{Y})$, but when executing Algorithm~\ref{alg:exact-main-alg} and Algorithm~\ref{alg:inexact-main-alg} we may choose a target rank $r\neq p$. 
\begin{lemma}
    \label{lem:sequential_decomposition}
    According to the Eckart-Young-Mirsky theorem, under our defined settings and deflation method, for each \( k \), we have that $\mathbf{Y}^\star_{k}=\sum_{k' = k}^p \sigma_{k'}^\star \mathbf{u}_{k'}^\star \mathbf{v}_{k'}^{\star\top}$ and $\mathbf{b}_k^\star \mathbf{a}_k^{\star \top} \mathbf{X} = \sigma_k^\star \mathbf{u}_k^\star \mathbf{v}_k^{\star\top}$.
\end{lemma}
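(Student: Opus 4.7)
The plan is to prove the statement by induction on $k$, using Eckart-Young-Mirsky (EYM) at each step to identify the rank-1 subproblem's minimizer with the top singular component of the current residual.

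For the base case $k=1$, I would begin by writing $\mathbf{Y}_1^\star = \mathbf{Y} = \sum_{k'=1}^p \sigma_{k'}^\star \mathbf{u}_{k'}^\star \mathbf{v}_{k'}^{\star\top}$, which is trivially the SVD of $\mathbf{Y}$. The rank-1 subproblem in Line 3 minimizes $\tfrac{1}{2}\|\mathbf{Y}_1^\star - \mathbf{b}\mathbf{a}^\top\mathbf{X}\|_F^2$ over the class $\mathcal{C} := \{\mathbf{b}\mathbf{a}^\top\mathbf{X} : \mathbf{b} \in \mathbb{R}^m,\, \mathbf{a} \in \mathbb{R}^d\}$, which is a subset of the class $\mathcal{R}_1$ of all rank-at-most-$1$ matrices in $\mathbb{R}^{m \times n}$. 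EYM gives that the best rank-1 approximation over $\mathcal{R}_1$ is $\sigma_1^\star\mathbf{u}_1^\star\mathbf{v}_1^{\star\top}$. The crucial subclaim is that this minimizer lies inside $\mathcal{C}$, which reduces to showing $\mathbf{v}_1^\star \in \mathrm{rowspace}(\mathbf{X})$ so that we may write $\mathbf{v}_1^{\star\top} = \mathbf{a}^\top \mathbf{X}$ and set $\mathbf{b} = \sigma_1^\star \mathbf{u}_1^\star$. This follows since $\mathbf{Y} = \mathbf{W}^\star \mathbf{X}$, so $\mathrm{rowspace}(\mathbf{Y}) \subseteq \mathrm{rowspace}(\mathbf{X})$, and every right singular vector of $\mathbf{Y}$ sits in $\mathrm{rowspace}(\mathbf{Y})$. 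Because the unconstrained minimizer is already feasible in $\mathcal{C}$, it is also the constrained minimizer, hence $\mathbf{b}_1^\star\mathbf{a}_1^{\star\top}\mathbf{X} = \sigma_1^\star \mathbf{u}_1^\star\mathbf{v}_1^{\star\top}$. Subtracting yields $\mathbf{Y}_2^\star = \sum_{k'=2}^p \sigma_{k'}^\star \mathbf{u}_{k'}^\star \mathbf{v}_{k'}^{\star\top}$.

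For the inductive step, assume $\mathbf{Y}_k^\star = \sum_{k'=k}^p \sigma_{k'}^\star \mathbf{u}_{k'}^\star \mathbf{v}_{k'}^{\star\top}$. Since the $\mathbf{u}_{k'}^\star$ and $\mathbf{v}_{k'}^\star$ remain mutually orthonormal and the singular values $\sigma_{k'}^\star$ are nonincreasing, this expansion is already the SVD of $\mathbf{Y}_k^\star$, whose leading component is $\sigma_k^\star \mathbf{u}_k^\star \mathbf{v}_k^{\star\top}$. Repeating the base-case argument: $\mathbf{v}_k^\star$ is in $\mathrm{rowspace}(\mathbf{Y}_k^\star) \subseteq \mathrm{rowspace}(\mathbf{Y}) \subseteq \mathrm{rowspace}(\mathbf{X})$, so the EYM minimizer is again feasible as $\mathbf{b}\mathbf{a}^\top\mathbf{X}$, forcing $\mathbf{b}_k^\star \mathbf{a}_k^{\star\top}\mathbf{X} = \sigma_k^\star \mathbf{u}_k^\star \mathbf{v}_k^{\star\top}$. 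Substituting into the deflation rule yields the claimed expression for $\mathbf{Y}_{k+1}^\star$, closing the induction.

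The only substantive obstacle is the feasibility step: Eckart-Young-Mirsky by itself identifies the best unconstrained rank-1 approximation, but the algorithm optimizes over the strictly smaller set $\mathcal{C}$ of matrices of the form $\mathbf{b}\mathbf{a}^\top\mathbf{X}$, and there is no a priori reason the minimizers over the two sets must coincide. I would therefore emphasize the row-space containment $\mathrm{rowspace}(\mathbf{Y}) \subseteq \mathrm{rowspace}(\mathbf{X})$, which is the structural reason the sequential deflation produces the true singular components. A secondary point worth noting (non-obstacle) is that the pair $(\mathbf{a}_k^\star, \mathbf{b}_k^\star)$ is only determined up to invertible rescaling and up to the ambiguity $\mathbf{a}_k^\star \mapsto \mathbf{a}_k^\star + \mathbf{z}$ for $\mathbf{z} \in \ker(\mathbf{X}^\top)$; the lemma is only stated about the product $\mathbf{b}_k^\star \mathbf{a}_k^{\star\top}\mathbf{X}$, so these ambiguities do not affect the conclusion, which matches the authors' footnote that $(\mathbf{b}_k^\star, \mathbf{a}_k^\star)$ need not be singular vectors of $\mathbf{W}^\star$.
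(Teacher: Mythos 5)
Your proof follows essentially the same route as the paper's: induction on $k$, applying Eckart--Young--Mirsky to identify the minimizer of the rank-1 subproblem with the leading singular component of $\mathbf{Y}_k^\star$, then deflating. The one place you go beyond the paper is the feasibility step: the paper's proof passes directly from ``$\mathbf{b}\mathbf{a}^\top\mathbf{X}$ is rank-1, and the best rank-1 approximation is $\sigma_1^\star\mathbf{u}_1^\star\mathbf{v}_1^{\star\top}$'' to the conclusion, without verifying that this unconstrained minimizer is attainable within the set $\{\mathbf{b}\mathbf{a}^\top\mathbf{X}\}$; your observation that $\mathrm{rowspace}(\mathbf{Y}_k^\star)\subseteq\mathrm{rowspace}(\mathbf{Y})\subseteq\mathrm{rowspace}(\mathbf{X})$ (via $\mathbf{Y}=\mathbf{W}^\star\mathbf{X}$) is exactly the missing justification, so your version is a strictly more careful rendering of the same argument.
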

The proof of Lemma~\ref{lem:sequential_decomposition} is provided in Appendix~\ref{app:lemma1}. Namely, when $\mathbf{X}$ has full rank with $n\geq d$, $\mathbf{b}_k^\star$ and $\mathbf{a}_k^\star$ can be uniquely identified up to scalar multiplication. 

After determining the pair \( (\mathbf{a}_k^\star, \mathbf{b}_k^\star) \), the matrix \( \mathbf{Y}_{k+1}^\star \) is updated by subtracting the rank-1 component \( \mathbf{b}_k^\star \mathbf{a}_k^{\star \top} \mathbf{X} \) from \( \mathbf{Y}_k^\star \) (see Line 4). This iterative process continues for \( r \) iterations, generating \( r \) pairs of vectors, which collectively represent the exact low-rank subspaces. Algorithm \ref{alg:inexact-main-alg} differs from Algorithm \ref{alg:exact-main-alg} in Lines 3 and 4. In Algorithm \ref{alg:inexact-main-alg}, Line 3 executes sub-routine for $t$ iterations, denoted by $\texttt{rank-1}(\mathbf{Y}_k, \mathbf{X}, t)$, to approximate the solution of \eqref{eq:rank_1_equation} and return estimates $(\mathbf{a}_k, \mathbf{b}_k)$. 
The $t$ parameter represents the number of iterations for this approximate computation. An example of the $\texttt{rank-1}$ subroutine can be the gradient descent algorithm, which executes:
\begin{small}
\begin{equation}
    \label{eq:factor_rank_1}
    \begin{aligned}
        \mathbf{a}^{(t+1)} & = \mathbf{a}^{(t)} - \eta_{\mathbf{a}}\mathbf{X}\left(\mathbf{b}^{(t)}\mathbf{a}^{(t)\top}\mathbf{X} - \mathbf{Y}\right)^\top\mathbf{b}^{(t)}, ~~ 
        \mathbf{b}^{(t+1)} = \mathbf{b}^{(t)} - \eta_{\mathbf{b}}\left(\mathbf{b}^{(t)}\mathbf{a}^{(t)\top}\mathbf{X} - \mathbf{Y}\right)\mathbf{X}^\top\mathbf{a}^{(t)}.
    \end{aligned}
\end{equation}
\end{small}
Iterative algorithms such as (\ref{eq:factor_rank_1}) often produce numerical errors, leading to $\mathbf{b}_k\mathbf{a}_k^\top\neq \mathbf{b}_k^\star\mathbf{a}_k^{\star\top}$. Subsequently, this affects the quality of the remaining information in $\mathbf{Y}_{k+1}$ in Line 4, since the ``deflation'' step $\mathbf{Y}_k - \mathbf{b}_k \mathbf{a}_k^\top \mathbf{X}$ is based on an approximate deflated matrix $\mathbf{Y}_k$, coming from the $k-1$ iteration that is not equal to $\mathbf{Y}_k^\star$ in Algorithm \ref{alg:exact-main-alg}, as well as depends on approximate current estimates $\mathbf{b}_k \mathbf{a}_k^\top \mathbf{X}$, and not $\mathbf{b}_k^\star \mathbf{a}_k^{\star \top} \mathbf{X}$ as in the exact case. To study the influence of the numerical errors produced by (\ref{eq:factor_rank_1}), we introduce the following definition:

\begin{definition}[Numerical Error]
    \label{def:num_err}
    Let \((\overline{\mathbf{a}}_k, \overline{\mathbf{b}}_k)\) represents the exact rank-1 solution that approximates the processed label matrix $\mathbf{Y}_k$ using data $\mathbf{X}$:\vspace{-0.2cm}
    \begin{equation}
        \label{eq:rank_k_equation}
        \overline{\mathbf{a}}_k, \overline{\mathbf{b}}_k = \operatorname*{arg\,min}_{\mathbf{a} \in \mathbb{R}^d, \mathbf{b} \in \mathbb{R}^m} \frac{1}{2} \left\| \mathbf{Y}_k - \mathbf{b} \mathbf{a}^{\top} \mathbf{X} \right\|_F^2.
    \end{equation}
    and recall that $\mathbf{a}_k,\mathbf{b}_k$ are outputs of $\texttt{rank-1}\left(\mathbf{Y}_k,\mathbf{X},T\right)$. We define the numerical errors incurred at iteration $k$ from the $\texttt{rank-1}$ sub-routine as
    \begin{equation}
        \label{eq:def-delta-k}
        \bm{\delta}_k := \mathbf{b}_k \mathbf{a}^\top_k - \overline{\mathbf{b}}_k \overline{\mathbf{a}}_k^{\top} ;\quad \|\bm{\delta}_k\|_F\geq 0.
    \end{equation}
\end{definition}
\vspace{-0.2cm}Notice that the definition of $\bm{\delta}_k$ is based on \(\mathbf{Y}_k\), not $\mathbf{Y}_k^\star$; recall that \(\mathbf{Y}_k\) is constructed recursively using \(\mathbf{b}_k \mathbf{a}_k^\top\). 
When \(\mathbf{b}_k \mathbf{a}_k^\top\) is solved inexactly, we cannot guarantee that \(\mathbf{Y}_k = \mathbf{Y}^\star_k\). 
Consequently, it is almost always the case that \((\overline{\mathbf{a}}_k, \overline{\mathbf{b}}_k) \neq (\mathbf{a}_k^\star, \mathbf{b}_k^\star)\), implying \(\|\mathbf{b}_k^\star \mathbf{a}_k^{\star\top} - \overline{\mathbf{b}}_k \overline{\mathbf{a}}_k^{\top}\|_F > 0\).
However, since the \(\texttt{rank-1}\) subroutine only has access to \(\mathbf{Y}_k\), its output \((\mathbf{a}_k, \mathbf{b}_k)\) converges to \((\overline{\mathbf{a}}_k, \overline{\mathbf{b}}_k)\) instead of \((\mathbf{a}_k^\star, \mathbf{b}_k^\star)\) as the number of iterations in the \(\texttt{rank-1}\) subroutine increases.

\textbf{Related works.}
The task of sequential low-rank subspace identification has been studied in the context of Principal Component Analysis (PCA) \cite{jolliffe1995rotation, golub2013matrix}.
There are hierarchical game-theoretic approaches with multiple rank-1 players that provide a framework for understanding the decomposition of data into a hierarchy of skills or components \cite{gemp2021eigengame, gemp2021eigengame2}. 
There, each rank-1 player can be seen as an agent learning a distinct, singular skill or feature from the dataset. 
The game-theoretic aspect ensures that each player (or component) optimizes a particular ``deflated'' objective \cite{hotelling1933analysis, mackey2008deflation, zhang2006schur, sriperumbudur2007sparse, saad1988projection, danisman2014comparison}. 

\textit{Incremental learning of Eigenspaces.} 
\cite{arora2019implicitregularizationdeepmatrix} identified that in deep matrix factorization, the low-rank components are discovered in a sequential manner. \cite{jin2023understandingincrementallearninggradient} extends this observation to the case of symmetric matrix sensing, backed with a detailed theoretical analysis. This analysis is further generalized to the asymmetric case by \cite{soltanolkotabi2023implicit}. Noticeably, this implicit sequential recovery of the low-rank components can be leveraged to efficiently compress the learned model \cite{kwon2024efficientcompressionoverparameterizeddeep}. Nevertheless, it should be noted that this sequential nature appears only when the model is deep enough or under a proper initialization. On the other hand, our work considers the simple model of low-rank linear regression by explicitly enforcing the sequential learning. A more similar work to ours is \cite{wang2023adaptive}, but their algorithmic design is more specific to the task of matrix completion.

\textit{Low-Rank Adapter (LoRA).} The sequential learning of low-rank subspaces has connections with PEFT methods like LoRA \cite{hu2021lora}. 
A stronger connection is present when LoRA is applied to the scheme of continual learning, when the low-rank adapters are learned in a sequence when new tasks come in \cite{wistuba2023continuallearninglowrank}. 
Later works impose additional orthogonality constraints between the subspaces learned by the adapters to prevent catastrophic forgetting \cite{wang2023orthogonalsubspacelearninglanguage}.
While recent theoretical work has shown that LoRA can adapt any model $f$ to accurately represent a target model if the LoRA-rank is sufficiently large \cite{zhang2023adalora}, the dynamics of how errors propagate when using lower ranks remains unexplored.
Recent works consider a collection of LoRAs via merging, such as \cite{zhao2024merging, dimitriadis2024pareto, wu2024mixture, ostapenkotowards, huh2024training, xia2024chain}.


\vspace{-0.2cm}
\section{Error propagation during training}
\label{sec:train_err}
\vspace{-0.2cm}
Recall that Lemma~\ref{lem:sequential_decomposition} guarantees that the rank-1 components given by the exact Algorithm~\ref{alg:exact-main-alg} recovers the top-$r$ singular vector/values of $\mathbf{Y}^\star$. In this section, we study the recovery error under the inexact Algorithm~\ref{alg:inexact-main-alg}. 
To effectively compare the outputs of Algorithm \ref{alg:exact-main-alg} with those of Algorithm \ref{alg:inexact-main-alg}, we express these outputs in terms of the singular values and singular vectors of the deflated matrices $\mathbf{Y}_k^\star$ and $\mathbf{Y}_k$. 
We apply similar reasoning as in Lemma \ref{lem:sequential_decomposition} for the term \(\overline{\mathbf{b}}_k \overline{\mathbf{a}}_k^\top \mathbf{X}\) based on \eqref{eq:rank_k_equation}.
To do so, we define $\sigma_{i_k}, \mathbf{u}_{i_k}$, and $\mathbf{v}_{i_k}$ as the $i$-th top singular value and vectors pairs of the matrix \(\mathbf{Y}_k\). 
Note that $\sigma_{i_k} \neq \sigma_i^\star$ and $(\mathbf{u}_{i_k}, \mathbf{v}_{i_k}) \neq (\mathbf{u}_i^\star, \mathbf{v}_i^\star)$, for all $i$.
Then, for each \(k\), the SVD on \(\mathbf{Y}_k\) gives us:
$\mathbf{Y}_k = \sum_{i=1}^{p-k+1} \sigma_{i_k} \mathbf{u}_{i_k} \mathbf{v}_{i_k}^\top$.
Since \(\overline{\mathbf{b}}_k \overline{\mathbf{a}}_k^\top \mathbf{X}\) is also rank-$1$, Eckart-Young-Mirsky theorem implies that it is the optimal rank-1 approximation of \(\mathbf{Y}_k\) based on \eqref{eq:rank_k_equation}. Thus
$\overline{\mathbf{b}}_k \overline{\mathbf{a}}_k^\top \mathbf{X} = \sigma_{1_k} \mathbf{u}_{1_k} \mathbf{v}_{1_k}^\top$,
where \(\sigma_{1_k}\), \(\mathbf{u}_{1_k}\), and \(\mathbf{v}_{1_k}\) correspond to the top singular value and singular vectors of \(\mathbf{Y}_k\).

Recall that \( \mathbf{u}^\star_k \) and \( \mathbf{v}^\star_k \) are the top left and right singular vectors, respectively, of \( \mathbf{Y}_k^\star \). Since singular vectors are unique only up to a sign, both \( \text{nsv1}_\text{L}(\mathbf{Y}_k) \) and \( -\text{nsv1}_\text{L}(\mathbf{Y}_k) \) are valid left singular vectors, and similarly, both \( \text{nsv1}_\text{R}(\mathbf{Y}_k) \) and \( -\text{nsv1}_\text{R}(\mathbf{Y}_k) \) are valid right singular vectors of \( \mathbf{Y}_k \). So we will choose $\mathbf{u}_{1_k}$ and $\mathbf{v}_{1_k}$ to be the ones such that $0\leq\mathbf{v}_k^{\star\top}\mathbf{v}_{1_k}$ and $0\leq\mathbf{u}_k^{\star\top}\mathbf{u}_{1_k}$:
\begin{equation*}
    \mathbf{u}_{1_k}:=  \text{sv}_\text{L}(\mathbf{Y}_k)\cdot \operatorname*{arg\,min}_{s \in \{ \pm 1 \}} \| s \cdot \texttt{sv}_\text{L}(\mathbf{Y}_k) - \mathbf{u}_k^\star \|_2;\;\mathbf{v}_{1_k}:= \text{sv}_\text{R}(\mathbf{Y}_k)\cdot \operatorname*{arg\,min}_{s \in \{ \pm 1 \}} \| \cdot \texttt{sv}_\text{R}(\mathbf{Y}_k) - \mathbf{v}_k^\star \|_2.
\end{equation*}
We provide a characterization of the error propagation in the deflation methods in Algorithm~\ref{alg:inexact-main-alg} that is agnostic to the detail of the sub-routine \texttt{rank-1}, i.e., when one only has knowledge about $\|\bm{\delta}_k\|_F$.
The proof can be found in Appendix \ref{app:1}.

\begin{theorem}\label{thm:main_theorem_1}
    Let $\left\{\left(\mathbf{a}_k,\mathbf{b}_k\right)\right\}_{k=1}^r$ be the output of Algorithm~\ref{alg:inexact-main-alg}. Let $\bm{\delta}_k$ be given as in Definition~\ref{def:num_err} with $\left\|\bm{\delta}_k\right\|_F > 0$. Let $\sigma_1^\star,\dots,\sigma_{r^\star}^\star$ denote the singular values of $\mathbf{Y}$. Define the minimum singular value gap as $\mathcal{T}^\star_k := \min\left\{\min_{j>k} |\sigma^\star_k - \sigma^\star_j|, \sigma^\star_k\right\}$. Also, define an error bound $E(k)$ as:
    \[
        E(k):=\sigma_{\max}(\mathbf{X})\sum_{k'=0}^{k-1}\left\|\bm{\delta}_{k'}\right\|_F\prod_{j=k'+1}^{k-1}\left(2 + \frac{6\sigma_j^\star}{\mathcal{T}_k^\star} \right).
    \]
     If $E(k) < \tfrac{1}{2}\min_{j>k}|\sigma^\star_k - \sigma^\star_j|$, then the output of Algorithm \ref{alg:inexact-main-alg} satisfies:
    \begin{equation}
    \begin{aligned}
        \left\|\mathbf{Y}-\sum_{k=1}^r \mathbf{b}_k \mathbf{a}_k^ \top \mathbf{X}\right\|_F
         \leq \left(\sum_{k=r+1}^{p}\sigma_k^\star\right) + \sigma_{\max}(\mathbf{X})\sum_{k=1}^{r}\sum_{k'=0}^{k} \left\|\mathbf{\bm{\delta}}_{k'}\right\|_F \prod_{j = k'+1}^{k}\left(2 + \frac{6\sigma_j^\star}{\mathcal{T}_k^\star} \right)
        \end{aligned}
    \end{equation}
\end{theorem}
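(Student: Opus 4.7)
The residual in the theorem is exactly the final deflated matrix: $\mathbf{Y} - \sum_{k=1}^{r}\mathbf{b}_k\mathbf{a}_k^\top\mathbf{X} = \mathbf{Y}_{r+1}$. My plan is to telescope the difference against the ``clean'' trajectory, writing
$\mathbf{Y}_{r+1} = \mathbf{Y}_{r+1}^\star + \sum_{k=1}^{r}\bigl(\sigma_k^\star\mathbf{u}_k^\star\mathbf{v}_k^{\star\top} - \mathbf{b}_k\mathbf{a}_k^\top\mathbf{X}\bigr)$,
so that by the triangle inequality and Lemma~\ref{lem:sequential_decomposition} the residual is controlled by the spectral tail $\sum_{k=r+1}^p\sigma_k^\star$ plus a sum of per-step discrepancies $\|\mathbf{b}_k\mathbf{a}_k^\top\mathbf{X} - \sigma_k^\star\mathbf{u}_k^\star\mathbf{v}_k^{\star\top}\|_F$. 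The first piece matches the first term of the theorem directly; all the work is in the second.

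\textbf{Per-step discrepancy via Weyl and Wedin.} Inserting the intermediate object $\overline{\mathbf{b}}_k\overline{\mathbf{a}}_k^\top\mathbf{X} = \sigma_{1_k}\mathbf{u}_{1_k}\mathbf{v}_{1_k}^\top$ (the top rank-$1$ SVD of $\mathbf{Y}_k$) splits the discrepancy into (i) a purely numerical part $\bm{\delta}_k\mathbf{X}$, bounded trivially by $\sigma_{\max}(\mathbf{X})\|\bm{\delta}_k\|_F$ through Definition~\ref{def:num_err}, and (ii) a purely spectral part $\sigma_{1_k}\mathbf{u}_{1_k}\mathbf{v}_{1_k}^\top - \sigma_k^\star\mathbf{u}_k^\star\mathbf{v}_k^{\star\top}$. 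For the latter I would apply Weyl's inequality to get $|\sigma_{1_k} - \sigma_k^\star|\leq \Delta_k := \|\mathbf{Y}_k - \mathbf{Y}_k^\star\|_F$, and Wedin's $\sin\Theta$ theorem on $\mathbf{Y}_k$ versus $\mathbf{Y}_k^\star$ to bound $\|\mathbf{u}_{1_k}-\mathbf{u}_k^\star\|_2$ and $\|\mathbf{v}_{1_k}-\mathbf{v}_k^\star\|_2$ in terms of $\Delta_k/\mathcal{T}_k^\star$, where the passage from sine-distance to vector distance is legitimized by the sign-alignment convention on $\mathbf{u}_{1_k},\mathbf{v}_{1_k}$ fixed in the excerpt. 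The standard three-term rank-$1$ expansion
$\sigma_{1_k}\mathbf{u}_{1_k}\mathbf{v}_{1_k}^\top - \sigma_k^\star\mathbf{u}_k^\star\mathbf{v}_k^{\star\top} = (\sigma_{1_k}-\sigma_k^\star)\mathbf{u}_{1_k}\mathbf{v}_{1_k}^\top + \sigma_k^\star(\mathbf{u}_{1_k}-\mathbf{u}_k^\star)\mathbf{v}_{1_k}^\top + \sigma_k^\star\mathbf{u}_k^\star(\mathbf{v}_{1_k}-\mathbf{v}_k^\star)^\top$,
combined with $\sigma_{1_k}\leq \tfrac{3}{2}\sigma_k^\star$ (which holds once $\Delta_k\leq \mathcal{T}_k^\star/2\leq \sigma_k^\star/2$), yields a per-step bound of the form $\sigma_{\max}(\mathbf{X})\|\bm{\delta}_k\|_F + (1 + 6\sigma_k^\star/\mathcal{T}_k^\star)\Delta_k$, which is exactly the building block of $E(k)$ (with the slack $1\to 2$ absorbed loosely into the factor $2 + 6\sigma_j^\star/\mathcal{T}_k^\star$).

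\textbf{Recursion and main obstacle.} Subtracting the two deflation updates gives $\Delta_{k+1}\leq \Delta_k + \|\mathbf{b}_k\mathbf{a}_k^\top\mathbf{X} - \sigma_k^\star\mathbf{u}_k^\star\mathbf{v}_k^{\star\top}\|_F$, so the per-step bound above immediately yields the linear recursion $\Delta_{k+1}\leq (2+6\sigma_k^\star/\mathcal{T}_k^\star)\Delta_k + \sigma_{\max}(\mathbf{X})\|\bm{\delta}_k\|_F$. Starting from $\Delta_1=0$ (since $\mathbf{Y}_1=\mathbf{Y}_1^\star=\mathbf{Y}$) and unrolling produces exactly the nested sum-of-products $E(k)$; plugging $\Delta_k\leq E(k)$ back into the sum $\sum_{k=1}^{r}\|\mathbf{b}_k\mathbf{a}_k^\top\mathbf{X} - \sigma_k^\star\mathbf{u}_k^\star\mathbf{v}_k^{\star\top}\|_F$ then assembles the double sum in the theorem. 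The main obstacle I anticipate is making the whole argument self-consistent: Wedin's theorem requires a nonzero gap between $\sigma_{1_k}$ and the rest of the spectrum of $\mathbf{Y}_k$, and this gap is a perturbed version of $\mathcal{T}_k^\star$ that one wishes to replace by $\mathcal{T}_k^\star$ itself. The hypothesis $E(k) < \tfrac12\min_{j>k}|\sigma_k^\star-\sigma_j^\star|$ is precisely what is needed to carry $\Delta_k\leq E(k)$ inductively while simultaneously certifying (via Weyl) that the perturbed gaps stay within a constant factor of the clean $\mathcal{T}_k^\star$; setting up this two-part induction cleanly and tracking the numerical constant $6$ through the three-term expansion is the delicate bookkeeping on which the proof hinges.
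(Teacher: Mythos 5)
Your proposal follows essentially the same route as the paper's proof: the same reduction to the spectral tail $\sum_{k=r+1}^p \sigma_k^\star$ plus per-step discrepancies split through the intermediate object $\overline{\mathbf{b}}_k\overline{\mathbf{a}}_k^\top\mathbf{X} = \sigma_{1_k}\mathbf{u}_{1_k}\mathbf{v}_{1_k}^\top$, the same Weyl/Wedin treatment of the spectral part, and the same linear recursion on $\Delta_k = \|\mathbf{Y}_k - \mathbf{Y}_k^\star\|_F$ unrolled into $E(k)$. You also correctly pinpoint the one delicate point — using the hypothesis on $E(k)$ to keep the perturbed Wedin gap within a factor of two of $\mathcal{T}_k^\star$ — which is exactly how the paper closes the argument.
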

Theorem \ref{thm:main_theorem_1} characterizes how errors from approximately solving the rank-1 subroutine propagate through the deflation procedure in sequential low-rank approximations. 
The theorem asserts that, as long as each error \( \bm{\delta}_k \) is sufficiently small, the compounded effect of errors across the sequence remains bounded, thereby preserving the accuracy of the final low-rank approximation.
\begin{remark}
    The error bound in Theorem \ref{thm:main_theorem_1} reflects sensitivity to the eigenspectrum of the underlying data matrix. Notice that the upper bound in Theorem \ref{thm:main_theorem_1} involves a summation of summations over components that depend on the error of the sub-routine $\bm{\delta}_{k'}$. In particular, both the number of summands and the multiplicative factor of $\prod_{j = k'+1}^{k}\left(2 + \sfrac{6\sigma_j^\star}{\mathcal{T}_k^\star} \right)$ in each summand grows as $k$ increases. Notice that a slower decay of singular values —corresponding to a smaller eigengap—error propagation is amplified, making approximation steps more susceptible to the accumulation of individual errors $\bm{\delta}_{k}$, and vice versa. This dependency on the singular spectrum necessitates more precision in each step for data matrices with dense singular values to avoid error escalation.
\end{remark}

\vspace{-0.2cm}
\section{Generalization of sequential rank-$1$ update}
\label{sec:gen_err}
\vspace{-0.2cm}
Thus far, we have been focusing on constructing components $\left\{\left(\mathbf{a}_k,\mathbf{b}_k\right)\right\}_{k=1}^{r}$ such that $\sum_{k=1}^{r}\mathbf{b}_k\mathbf{a}_k^\top\mathbf{X}$ estimates $\mathbf{Y}$. Given that $\mathbf{X}$ and $\mathbf{Y}$ are considered as training data, the previous section characterized the training error of Algorithm~\ref{alg:inexact-main-alg}. 
Here, we analyze the generalization ability of Algorithm~\ref{alg:inexact-main-alg}, assuming that the data is generated based on some optimal parameter $\mathbf{W}^\star$ with rank$\left(\mathbf{W}^\star\right) = r^\star$:
\begin{equation}
    \label{eq:general_setup}
    \mathbf{Y} = \mathbf{Y}^\star + \bm{\mathcal{E}}; \quad \mathbf{Y}^\star = \mathbf{W}^\star\mathbf{X},
\end{equation}
where $\bm{\mathcal{E}} \in\mathbb{R}^{m\times n}$ denotes the label noise generated from a certain distribution, and $\mathbf{Y}^\star$ denotes the noiseless label. In noiseless case where $\bm{\mathcal{E}} = \bm{0}$, we have that $p = \text{rank}(\mathbf{Y}) = r^\star$ and, Algorithm~\ref{alg:exact-main-alg} can recover $\left\{\left(\mathbf{a}_k^\star,\mathbf{b}_k^\star\right)\right\}_{k=1}^{r}$ such that $\mathbf{W}^\star = \sum_{k=1}^{r}\mathbf{b}_k^\star\mathbf{a}_k^{\star\top}$ when $r = r^\star$. However, it may not be the case that $\mathbf{b}_k$ and $\mathbf{a}_k$ aligns with the $k$th left and right singular vector of $\mathbf{W}^\star$ under the influence of $\mathbf{X}$. In other words, each pair $\left(\mathbf{a}_k^\star,\mathbf{b}_k^\star\right)$ contains component of $\mathbf{W}^\star$ that \textit{extracts a certain information from the input data $\mathbf{X}$}. When $\bm{\mathcal{E}} \neq \bm{0}$, it is possible that $p = \text{rank}(\mathbf{Y}) > r^\star$. 

\textbf{Generalization under noiseless labels.}
As a warm up, we consider the case where the noise $\bm{\mathcal{E}} = \bm{0}$.
Intuitively, when $\mathbf{X}$ is full rank, a zero training loss would imply a perfect recovery of the optimal parameter $\mathbf{W}^\star$. 
We state a more general result below covering the case of non-zero training loss with component-wise generalization error.
\begin{theorem}
    \label{thm:gen_noiseless}
    Let $\left\{\left(\mathbf{a}_k,\mathbf{b}_k\right)\right\}_{k=1}^r$ be the output of Algorithm~\ref{alg:inexact-main-alg}. Let $\bm{\delta}_k$ be given as in Definition~\ref{def:num_err} with $\left\|\bm{\delta}_k\right\|_F > 0$. Let $\sigma_1^\star,\dots,\sigma_{r^\star}^\star$ denote the singular values of $\mathbf{Y}$. Define the minimum singular value gap as $\mathcal{T}^\star_k := \min\left\{\min_{j>k} |\sigma^\star_k - \sigma^\star_j|, \sigma^\star_k\right\}$. Also, define an error bound $E(k)$ as:
    \[
        E(k):=\sigma_{\max}(\mathbf{X})\sum_{k'=0}^{k-1}\left\|\bm{\delta}_{k'}\right\|_F\prod_{j=k'+1}^{k-1}\left(2 + \frac{6\sigma_j^\star}{\mathcal{T}_k^\star} \right).
    \]
     If $E(k) < \tfrac{1}{2}\min_{j>k}|\sigma^\star_k - \sigma^\star_j|$, and $\sigma_{\min}\left(\mathbf{X}\right) \geq 0$, then the output of Algorithm \ref{alg:inexact-main-alg} satisfies:
    \begin{equation}
        \label{eq:noiseless_comp_err}
        \left\|\mathbf{b}_k^\star\mathbf{a}_k^{\star\top}- \mathbf{b}_k\mathbf{a}_k^ \top\right\|_F \leq\kappa(\mathbf{X})\sum_{k'=0}^{k}\left\|\bm{\delta}_{k'}\right\|_F\prod_{j=k'+1}^{k}\left(2 + \frac{6\sigma_j^\star}{\mathcal{T}_k^\star} \right);\;\;\forall k\in[r].
    \end{equation}
    Moreover, the aggregation of the components $\left(\mathbf{a}_k,\mathbf{b}_k\right)$'s approximates $\mathbf{W}^\star$ as
    \begin{equation}
        \label{eq:noiseless_total_err}
        \left\| \mathbf{W}^\star - \sum_{k=1}^r \mathbf{b}_k \mathbf{a}_k^\top\right\|_F \leq \sum_{k=r+1}^{r^\star} \frac{\sigma_k^\star}{\sigma_{\min}(\mathbf{X})} + \kappa(\mathbf{X}) \sum_{k=1}^r \sum_{k'=1}^{k} \|\bm{\delta}_{k'}\|_F \prod_{j=k'+1}^{k} \left( 2 + \dfrac{6\sigma_j^\star}{\mathcal{T}_j^\star} \right).
    \end{equation}
    Here, $\kappa(\mathbf{X}) = \frac{\sigma_{\max}(\mathbf{X})}{\sigma_{\min}\left(\mathbf{X}\right)}$ denotes the condition number of $\mathbf{X}$.
\end{theorem}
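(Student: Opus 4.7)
The plan is to lift the training-error analysis of Theorem~\ref{thm:main_theorem_1} into a parameter-space statement about $\mathbf{W}^\star$, exploiting the fact that under $\sigma_{\min}(\mathbf{X})>0$ the map $\mathbf{M}\mapsto\mathbf{M}\mathbf{X}$ is injective. The starting observation is that Lemma~\ref{lem:sequential_decomposition} applied to $\mathbf{Y}=\mathbf{W}^\star\mathbf{X}$ gives $\sum_{k=1}^{r^\star}\mathbf{b}_k^\star\mathbf{a}_k^{\star\top}\mathbf{X}=\mathbf{Y}=\mathbf{W}^\star\mathbf{X}$; since $\mathbf{X}$ has full row rank, right-multiplying by $\mathbf{X}^\top(\mathbf{X}\mathbf{X}^\top)^{-1}$ yields the clean decomposition $\mathbf{W}^\star=\sum_{k=1}^{r^\star}\mathbf{b}_k^\star\mathbf{a}_k^{\star\top}$, so that each exact summand of Algorithm~\ref{alg:exact-main-alg} corresponds to a piece of $\mathbf{W}^\star$ in parameter space. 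This lets us measure the parameter error component-by-component against the iterates produced by the inexact algorithm.

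For the component-wise bound (\ref{eq:noiseless_comp_err}), I would revisit the induction that underlies Theorem~\ref{thm:main_theorem_1}: its proof necessarily controls the per-step deviation
\[
\left\|(\mathbf{b}_k\mathbf{a}_k^\top-\mathbf{b}_k^\star\mathbf{a}_k^{\star\top})\mathbf{X}\right\|_F \leq \sigma_{\max}(\mathbf{X})\sum_{k'=0}^{k}\|\bm{\delta}_{k'}\|_F\prod_{j=k'+1}^{k}\left(2+\frac{6\sigma_j^\star}{\mathcal{T}_k^\star}\right),
\]
since Theorem~\ref{thm:main_theorem_1}'s final inequality is obtained by summing precisely such per-step estimates over $k$. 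To strip $\mathbf{X}$ and pass to the parameter-space Frobenius norm, I apply the inequality $\|\mathbf{M}\|_F\leq\|\mathbf{M}\mathbf{X}\|_F/\sigma_{\min}(\mathbf{X})$, which follows from $\|\mathbf{M}\mathbf{X}\|_F^2=\mathrm{tr}(\mathbf{M}^\top\mathbf{M}\,\mathbf{X}\mathbf{X}^\top)\geq\sigma_{\min}(\mathbf{X})^2\|\mathbf{M}\|_F^2$ using $\mathbf{X}\mathbf{X}^\top\succeq\sigma_{\min}(\mathbf{X})^2\mathbf{I}$. Dividing by $\sigma_{\min}(\mathbf{X})$ converts the leading $\sigma_{\max}(\mathbf{X})$ into $\kappa(\mathbf{X})$ and yields (\ref{eq:noiseless_comp_err}).

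For the aggregate bound (\ref{eq:noiseless_total_err}), I would split
\[
\mathbf{W}^\star-\sum_{k=1}^{r}\mathbf{b}_k\mathbf{a}_k^\top = \sum_{k=1}^{r}\left(\mathbf{b}_k^\star\mathbf{a}_k^{\star\top}-\mathbf{b}_k\mathbf{a}_k^\top\right) + \sum_{k=r+1}^{r^\star}\mathbf{b}_k^\star\mathbf{a}_k^{\star\top},
\]
apply the triangle inequality, and handle each piece separately. The first sum is bounded by summing (\ref{eq:noiseless_comp_err}) over $k\in[r]$. The tail captures the low-rank components not fitted when $r<r^\star$; Lemma~\ref{lem:sequential_decomposition} gives $\|\mathbf{b}_k^\star\mathbf{a}_k^{\star\top}\mathbf{X}\|_F=\sigma_k^\star$, and the same row-rank argument then yields $\|\mathbf{b}_k^\star\mathbf{a}_k^{\star\top}\|_F\leq\sigma_k^\star/\sigma_{\min}(\mathbf{X})$, producing the tail term $\sum_{k=r+1}^{r^\star}\sigma_k^\star/\sigma_{\min}(\mathbf{X})$ in (\ref{eq:noiseless_total_err}).

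The main obstacle is the per-step estimate invoked in the second paragraph: Theorem~\ref{thm:main_theorem_1} only records the telescoped cumulative training error, so one has to disassemble its proof back to the iterate-by-iterate bounds that the induction actually produces. Concretely, under the eigengap condition $E(k)<\tfrac{1}{2}\min_{j>k}|\sigma_k^\star-\sigma_j^\star|$, a Wedin-type argument bounds the deviation of the leading singular pair of $\mathbf{Y}_k$ from that of $\mathbf{Y}_k^\star$ in terms of $\|\mathbf{Y}_k-\mathbf{Y}_k^\star\|_F$; feeding the recursion $\mathbf{Y}_{k+1}-\mathbf{Y}_{k+1}^\star=(\mathbf{Y}_k-\mathbf{Y}_k^\star)-(\mathbf{b}_k\mathbf{a}_k^\top-\mathbf{b}_k^\star\mathbf{a}_k^{\star\top})\mathbf{X}$ back into that bound, together with the $\bm{\delta}_k$ split from Definition~\ref{def:num_err}, produces the compounding factor $\prod_{j=k'+1}^{k}(2+6\sigma_j^\star/\mathcal{T}_k^\star)$. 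Once this per-iteration inequality is isolated, the remaining lift to $\mathbf{W}^\star$-error is mechanical bookkeeping.
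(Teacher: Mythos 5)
Your proposal is correct and follows essentially the same route as the paper: the component-wise bound \eqref{eq:noiseless_comp_err} is obtained exactly as in the paper's Appendix~B by isolating the per-iteration estimate $\|(\mathbf{b}_k\mathbf{a}_k^\top-\mathbf{b}_k^\star\mathbf{a}_k^{\star\top})\mathbf{X}\|_F$ from the machinery behind Theorem~\ref{thm:main_theorem_1} (Lemma~\ref{lem:sum_of_rank_1_diff} plus the unrolled recursion on $\|\mathbf{Y}_k-\mathbf{Y}_k^\star\|_F$) and then dividing by $\sigma_{\min}(\mathbf{X})$ via the same $\|\mathbf{M}\|_F\leq\|\mathbf{M}\mathbf{X}\|_F/\sigma_{\min}(\mathbf{X})$ lifting. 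For the aggregate bound the paper applies this lifting once to the full $\mathbf{Y}$-space error of Theorem~\ref{thm:main_theorem_1} while you decompose $\mathbf{W}^\star=\sum_k\mathbf{b}_k^\star\mathbf{a}_k^{\star\top}$ directly in parameter space and sum the component bounds plus the tail $\sum_{k>r}\sigma_k^\star/\sigma_{\min}(\mathbf{X})$ --- a purely cosmetic reorganization of the same three-term split (ground-truth tail, propagation, optimization) that yields the identical final expression.
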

The proof of Theorem~\ref{thm:gen_noiseless} is provided in Appendix~\ref{app:2}. In particular, Theorem~\ref{thm:gen_noiseless} states two results. First, (\ref{eq:noiseless_comp_err}) measures how the errors of individual components are influenced even by the numerical errors that appear when solving previous components. This bound illustrates key factors contributing to the error at each iteration. As we discussed previously, $\left(\mathbf{a}_k^\star,\mathbf{b}_k^\star\right)$ can be considered as the components of $\mathbf{W}^\star$ extracted based on the importance defined by the input data $\mathbf{X}$. From this perspective, (\ref{eq:noiseless_comp_err}) shows how well these data-dependent components are approximated by Algorithm~\ref{alg:inexact-main-alg}. Moreover, (\ref{eq:noiseless_total_err}) measures how well the inexact method approximates $\mathbf{W}^\star$, including errors due to inexact computations and limitations of representing $\mathbf{W}^\star$ with rank $r$. This bound shed light on how the components $\left(\mathbf{a}_k,\mathbf{b}_k\right)$'s collaboratively contribute to the overall generalization ability.

\textbf{Generalization under noisy labels.}
In the previous section, we studied the generalization ability of Algorithm~\ref{alg:inexact-main-alg} under a noiseless scenario with $\mathbf{Y} = \mathbf{W}^\star\mathbf{X}$, where the algorithmic choice of choosing $r = \text{rank}(\mathbf{Y})$ can be shown to be optimal. However, this argument may not hold when the labels are generated with a non-zero additive noise. In this section, we consider the noise matrix $\bm{\mathcal{E}}$ to consist of I.I.D. entries $\bm{\mathcal{E}}_{ij}\sim\mathcal{N}(0, \varepsilon^2)$, where $\varepsilon$ controls the magnitude of the noise. In this case, with a high probability, we have $p= \text{rank}(\mathbf{Y}) = m$. Let $\left\{\left(\mathbf{a}_k,\mathbf{b}_k\right)\right\}_{k=1}^r$ be the recovery result according to Algorithm~\ref{alg:inexact-main-alg}. Then we are interested in an upper bound on $\left\|\mathbf{W}^\star - \sum_{k=1}^r\mathbf{b}_k\mathbf{a}_k\right\|_F$. In particular, we have the following guarantee on the generalization error.
\begin{theorem}
    \label{thm:noisy_gen}
    Consider the scenario of finding the top-$r$ rank-1 subspaces that minimize the loss in \eqref{eq:rank_1_equation}. Let $\mathcal{T}_k^\star := \min\left\{\min_{j>k} |\sigma^\star_k - \sigma_{j_k}|, \sigma^\star_k\right\}$, and $\mathcal{T}_{\min}^\star:= \min_{k \in [1, r]}\mathcal{T}_k^\star$, if the noise scale satisfies $\varepsilon \leq O\left(\frac{\mathcal{T}_{\min}^\star}{\sqrt{n} + \sqrt{\log\sfrac{1}{\gamma}}} \right)$
    then with probability at least $1 - \gamma$, the output of Algorithm \ref{alg:inexact-main-alg} satisfies:
    \begin{equation}
        \label{eq:noise_gen}
        \begin{aligned}
            \left\|\mathbf{W}^\star - \sum_{k=1}^r \mathbf{b}_k \mathbf{a}_k^ \top\right\|_F &\leq \textcolor{teal}{\kappa(\mathbf{X})\left(\sum_{k=r+1}^{r^\star}\sigma_r\left(\mathbf{W}^\star\right) + \sum_{k=1}^{r}\sum_{k'=0}^{k} \left\|\mathbf{\bm{\delta}}_{k'}\right\|_F \prod_{j = k'+1}^{k}\left(2 + \frac{6\sigma_j^\star}{\mathcal{T}_k^\star} \right)\right)}\\
            &\quad\quad\quad + \textcolor{orange}{O\left(\frac{\varepsilon\sqrt{n\log\sfrac{1}{\gamma}}}{\sigma_{\min}(\mathbf{X})}\left(r+\sqrt{\frac
                {\min\{r^\star,r\}}{\mathcal{T}_{\min}^\star}}\right)\right)}
        \end{aligned}
    \end{equation}
\end{theorem}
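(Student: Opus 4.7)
The plan is to introduce a \emph{noiseless reference trajectory}---the output of Algorithm~\ref{alg:exact-main-alg} applied to $\mathbf{Y}^\star = \mathbf{W}^\star\mathbf{X}$---and decompose $\|\mathbf{W}^\star - \sum_{k=1}^r \mathbf{b}_k\mathbf{a}_k^\top\|_F$ by triangle inequality into three contributions: (i) the rank-$r$ truncation error of $\mathbf{W}^\star$ against its best rank-$r$ approximation, (ii) the accumulated numerical error from $\bm{\delta}_k$ along the exact deflation of $\mathbf{Y}^\star$, and (iii) a noise-induced perturbation between the exact trajectory on the noisy $\mathbf{Y}$ and on the clean $\mathbf{Y}^\star$. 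Pieces (i) and (ii) together reproduce Theorem~\ref{thm:gen_noiseless}'s conclusion---with the singular values $\sigma_k^\star$ reinterpreted as those of $\mathbf{W}^\star$---and, after right-dividing through $\mathbf{X}^\dagger$ to convert an output-space bound into a parameter-space one, form the teal factor of (\ref{eq:noise_gen}). Piece (iii) is the new contribution and produces the orange factor.

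To control piece (iii), I first invoke standard sub-Gaussian matrix concentration: for $\bm{\mathcal{E}}$ with i.i.d.\ $\mathcal{N}(0,\varepsilon^2)$ entries, $\|\bm{\mathcal{E}}\|_2 \lesssim \varepsilon(\sqrt{m}+\sqrt{n}+\sqrt{\log(1/\gamma)})$ with probability at least $1-\gamma$. The stated noise-scale hypothesis $\varepsilon \leq O(\mathcal{T}_{\min}^\star/(\sqrt{n}+\sqrt{\log(1/\gamma)}))$ then forces $\|\bm{\mathcal{E}}\|_2 \leq \tfrac{1}{2}\mathcal{T}_{\min}^\star$, so Weyl's inequality preserves the singular-value gap of every deflated matrix $\mathbf{Y}_k$ and keeps the precondition $E(k) < \tfrac{1}{2}\min_{j>k}|\sigma_k^\star-\sigma_j^\star|$ of Theorem~\ref{thm:main_theorem_1} valid along the entire run. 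Then, iteration by iteration, I apply Wedin's $\sin\Theta$ theorem to compare the top singular vectors of $\mathbf{Y}_k$ with those of $\mathbf{Y}_k^\star$, obtaining a bound of the form $O(\|\mathbf{Y}_k - \mathbf{Y}_k^\star\|_2 / \mathcal{T}_k^\star)$, convert this to a Frobenius-norm bound on the extracted rank-$1$ pieces via a standard product expansion, and propagate through a recursion structurally identical to Theorem~\ref{thm:main_theorem_1}'s. Summing over $r$ iterations produces the $r$ factor, the inequality $\|\sin\Theta\|_F \leq \sqrt{\min(r,r^\star)}\,\|\sin\Theta\|_2$ applied to the top-$r$ subspace produces $\sqrt{\min\{r,r^\star\}/\mathcal{T}_{\min}^\star}$, and right-dividing by $\mathbf{X}$ (costing $1/\sigma_{\min}(\mathbf{X})$) completes the orange factor $O(\varepsilon\sqrt{n\log(1/\gamma)}/\sigma_{\min}(\mathbf{X}))(r + \sqrt{\min\{r,r^\star\}/\mathcal{T}_{\min}^\star})$.

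The main obstacle I anticipate is a bookkeeping one: by Definition~\ref{def:num_err}, $\bm{\delta}_k$ measures the discrepancy between $\mathbf{b}_k\mathbf{a}_k^\top$ and the \emph{exact} rank-$1$ minimizer of the \emph{noisy} matrix $\mathbf{Y}_k$, not of the noiseless $\mathbf{Y}_k^\star$, so $\bm{\mathcal{E}}$ cannot simply be folded into the $\{\bm{\delta}_k\}$ framework without double-counting the perturbation that the noise produces on the exact rank-$1$ extractions themselves. I would handle this by decoupling the two sources: first specialize the proof of Theorem~\ref{thm:main_theorem_1} to $\bm{\delta}_k \equiv \bm{0}$ to isolate the noise-only drift between the exact trajectories on $\mathbf{Y}$ and on $\mathbf{Y}^\star$ (entirely through Wedin/Weyl under the preserved gap condition), then apply Theorem~\ref{thm:gen_noiseless}'s inexact-vs-exact recursion on $\mathbf{Y}^\star$ for the $\bm{\delta}_k$ contribution, and finally combine the two independent recursions by triangle inequality. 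This yields the additive teal-plus-orange form of (\ref{eq:noise_gen}) and matches the stated probability $1-\gamma$, which is inherited solely from the single Gaussian concentration event used to bound $\|\bm{\mathcal{E}}\|_2$.
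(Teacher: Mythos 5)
Your proposal follows essentially the same route as the paper's proof: a triangle-inequality split through the exact (truncated-SVD) reference trajectories of the noisy and noiseless labels, giving a truncation term for $\mathbf{W}^\star$, the training-error recursion of Theorem~\ref{thm:main_theorem_1} for the $\bm{\delta}_k$ contribution, and a Weyl--Wedin perturbation bound on the rank-$r$ truncated SVD (valid because the noise-scale hypothesis preserves the singular-value gaps) for the orange term, all converted to parameter space at cost $1/\sigma_{\min}(\mathbf{X})$ --- this is exactly the paper's decomposition into $\|\mathbf{W}^\star\mathbf{X}-\mathbf{Y}^{\star(r)}\|_F+\|\mathbf{Y}^{\star(r)}-\mathbf{Y}^{(r)}\|_F+\|\mathbf{Y}^{(r)}-\sum_k\mathbf{b}_k\mathbf{a}_k^\top\mathbf{X}\|_F$ with Lemma~\ref{lem:noise_comp} handling the middle piece. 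The one caution concerns your phrase ``recursion on $\mathbf{Y}^\star$'': the $\bm{\delta}_k$-recursion must stay anchored to the exact deflation of the \emph{noisy} $\mathbf{Y}$ (so the initial trajectory discrepancy is zero and Definition~\ref{def:num_err} is respected), with the noise entering only through the static comparison of the two truncated SVDs; anchoring the recursion to $\mathbf{Y}^\star$ would feed $\|\bm{\mathcal{E}}\|_F$ into the recurrence as an initial condition and attach the multiplicative factors $\prod_j\bigl(2+6\sigma_j^\star/\mathcal{T}_j^\star\bigr)$ to the noise term, which the stated bound does not have.
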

The proof of Theorem~\ref{thm:noisy_gen} is given in Appendix~\ref{se:proof_noisy_gen}. In particular, Theorem~\ref{thm:noisy_gen} characterizes how Algorithm~\ref{alg:inexact-main-alg} recovers component that can generalize even under label noise. The \textcolor{teal}{first term} in the upper bound of (\ref{eq:noise_gen}) comes from the numerical errors in the inexact solving of each $\texttt{rank-1}$ subroutine. Then \textcolor{orange}{second term} demonstrates the influence of the additive noise $\bm{\mathcal{E}}$ on the generalization ability. To start, a larger noise scale $\epsilon$ implies a worse generalization error. Moreover, it should be noticed that a good choice of $r$ can greatly impact the generalization error as well: choosing $r < r^*$ can result in a larger error in the \textcolor{teal}{first term} due to the incomplete estimation of the components in $\mathbf{W}^\star$. On the other hand, since the \textcolor{orange}{second term} scales with $r$, choosing a larger $r$ can result in a larger error caused by the noise. This is the scenario where the noise is overfitted by increasing the complexity of the model. From this perspective, Theorem~\ref{thm:noisy_gen} characterizes the bias-variance trade-off in the sequential rank-1 recovery algorithm. Lastly, the requirement of the noise scale is to make sure that after adding the noise, the ordering of the rank-1 components is not changed.

\vspace{-0.2cm}
\section{Experimental results}
\vspace{-0.2cm}
\begin{figure}[t!]
    \centering
    \begin{subfigure}[b]{0.48\textwidth}        \includegraphics[width=\linewidth]{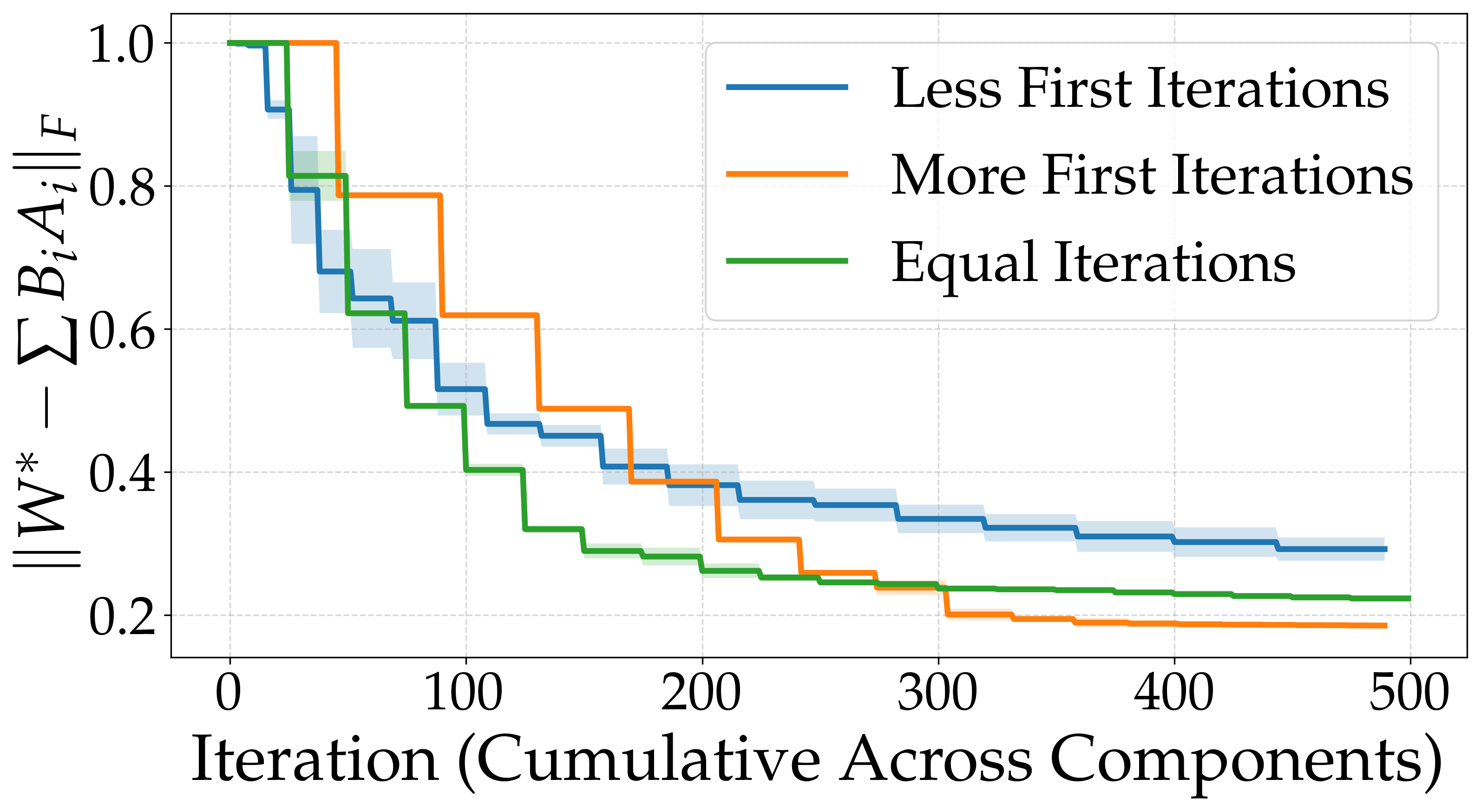}        
    \end{subfigure}
    \hfill
    \begin{subfigure}[b]{0.48\textwidth}        \includegraphics[width=\linewidth]{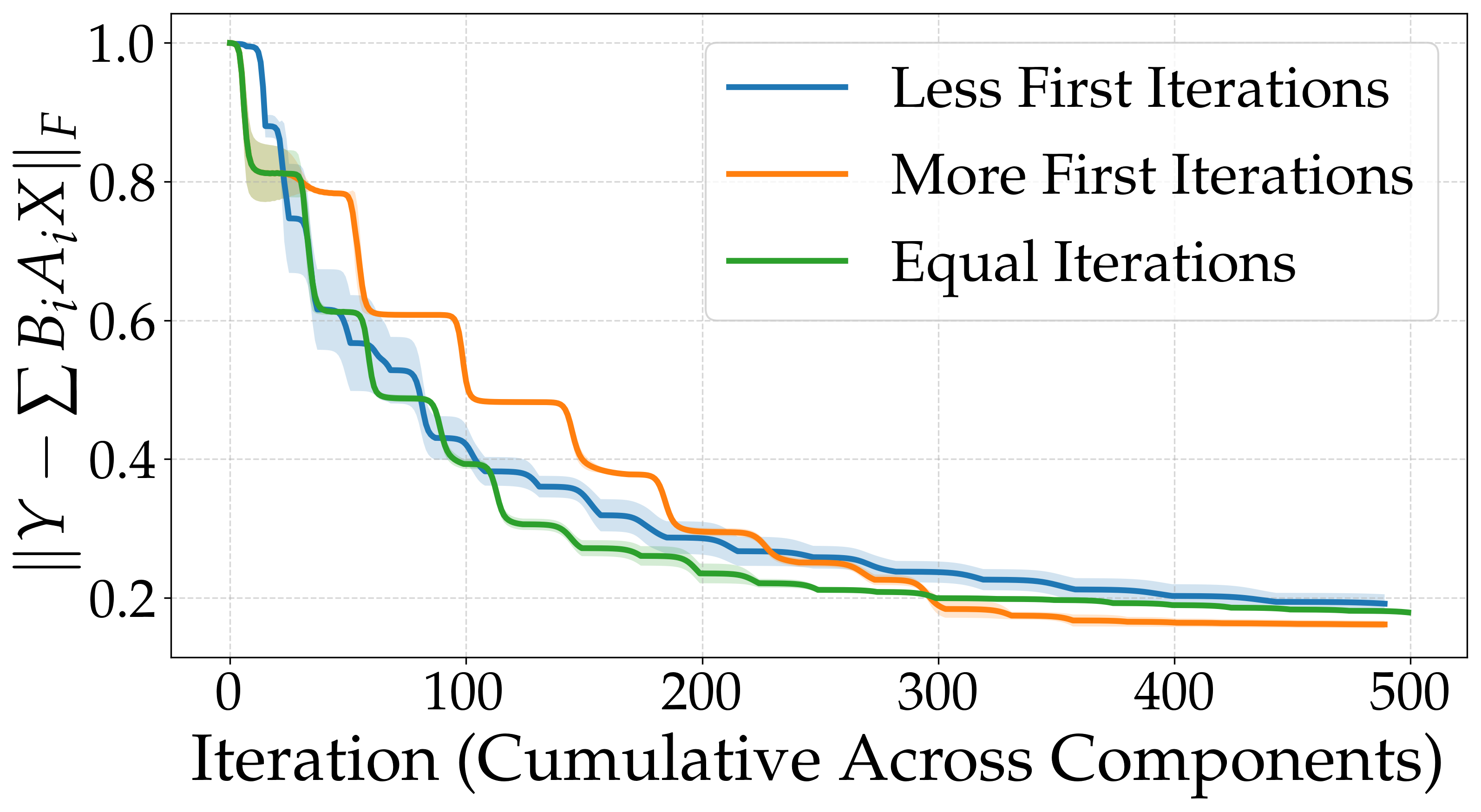}
    \end{subfigure}
    \caption{Impact of iteration allocation strategy under a fixed iteration budget. \textit{Left}: $\mathbf{W}^\star$ reconstruction error. \textit{Right}: Objective's training error.}\vspace{-0.2cm}
    \label{fig:allocation}
\end{figure}
\vspace{-0.1cm}
\subsection{Synthetic validation of theoretical setting.}
\vspace{-0.2cm}
We present experiments that validate our theory on error propagation in sequential rank-1 learning. 
Our experiments aim to demonstrate how the distribution of computational resources across rank-1 components affects the overall approximation quality, particularly focusing on how errors in early components propagate to later stages of the sequential learning process. Following the general set-up in (\ref{eq:general_setup}), we consider three different iteration allocation strategies:
\vspace{-0.2cm}
\begin{enumerate}[leftmargin=*]
\item \textbf{Equal:} Same number of optimization iterations to each rank-1 component. \vspace{-0.1cm}
\item \textbf{More First:} More iterations allocated to the earlier components and fewer to later ones. \vspace{-0.1cm}
\item \textbf{Less First:} Fewer iterations allocated to the earlier components and more to later ones. \vspace{-0.1cm}
\end{enumerate}
Our analysis dictates that errors in early components propagate to later components, suggesting that allocating more iterations to earlier components leads to better overall performance. 
Figure~\ref{fig:allocation} shows the reconstruction and training errors, respectively, for the three allocation strategies under a fixed computational budget. The results confirm our theoretical predictions. Allocating more iterations to earlier components leads to better final reconstruction and training errors compared to allocating fewer iterations initially. The equal iteration strategy performs better than the ``less first iterations'' approach, but worse than the ``more first iterations'' strategy.
This validates our theoretical finding that errors in early components propagate and compound through the sequential process. This cascading effect means that if early components are poorly approximated, their errors get magnified in subsequent components. Details and further results in the synthetic setting are deferred to Appendix~\ref{app:3}.
\vspace{-0.2cm}
\subsection{Experimental analysis using LoRA.}
\vspace{-0.2cm}
We evaluate our sequential rank-1 approach in LoRA adaptation on three standard image classification datasets: MNIST, CIFAR10, and CIFAR100. 
We design the experiments such that each dataset present a different level of challenge to assess how our sequential LoRA adaptation performs under varying initial conditions.
\textit{The purpose here is not to attain top-notch performance in these scenarios neither to claim these as ``real scenarios''; rather, to assess how sequential learning behaves on well- to --intentionally-- badly-pretrained scenarios. This is also expected, given that the baseline model is a feedforward neural network.}

\textbf{Problem setting.}
We employ a simple feedforward network as our base architecture across all experiments. 
For each dataset, we first train the baseline model on a subset of classes (in particular, the first half of available classes). 
We then apply our sequential rank-1 LoRA adaptation approach to handle the remaining classes for 3 sequential rank-1 trainings, i.e., $r=3$.

Our architecture consists of three fully-connected layers that map flattened input images to class logits. 
As usual, for MNIST, inputs are 784-dimensional (28$\times$28 grayscale images), while for CIFAR10 and CIFAR100, inputs are 3072-dimensional (32$\times$32$\times$3 RGB images). 
Hidden layers have 512 units with ReLU activations, and the output layer dimension matches the number of classes in each dataset.
We analyze three distinct scenarios: \vspace{-.1cm}
\begin{enumerate}[leftmargin=*]
    \item \textbf{MNIST (Strong Baseline)}: The baseline network achieves high accuracy ($\sim$98\%) on classes 0-4, providing a strong foundation for adaptation on the remaining 5-9 classes. \vspace{-.1cm}
    \item \textbf{CIFAR10 (Moderate Baseline)}: The baseline network reaches moderate accuracy ($\sim$40\%) on classes 0-4, representing a partially optimized model (reminder that the model is not a CNN-based model but just a FF connected network). \vspace{-.1cm}
    \item \textbf{CIFAR100 (Weak Baseline)}: The baseline network attains lower accuracy ($\sim$20\%) on classes 0-49, exemplifying a relatively poor initial representation, where LoRA models adapt over the remaining 50-99 classes. \vspace{-.1cm}
\end{enumerate}

\begin{figure}[t!]
    \centering
    \hspace{-0.5cm}
    \begin{subfigure}[b]{0.33\textwidth}        \includegraphics[width=\linewidth]{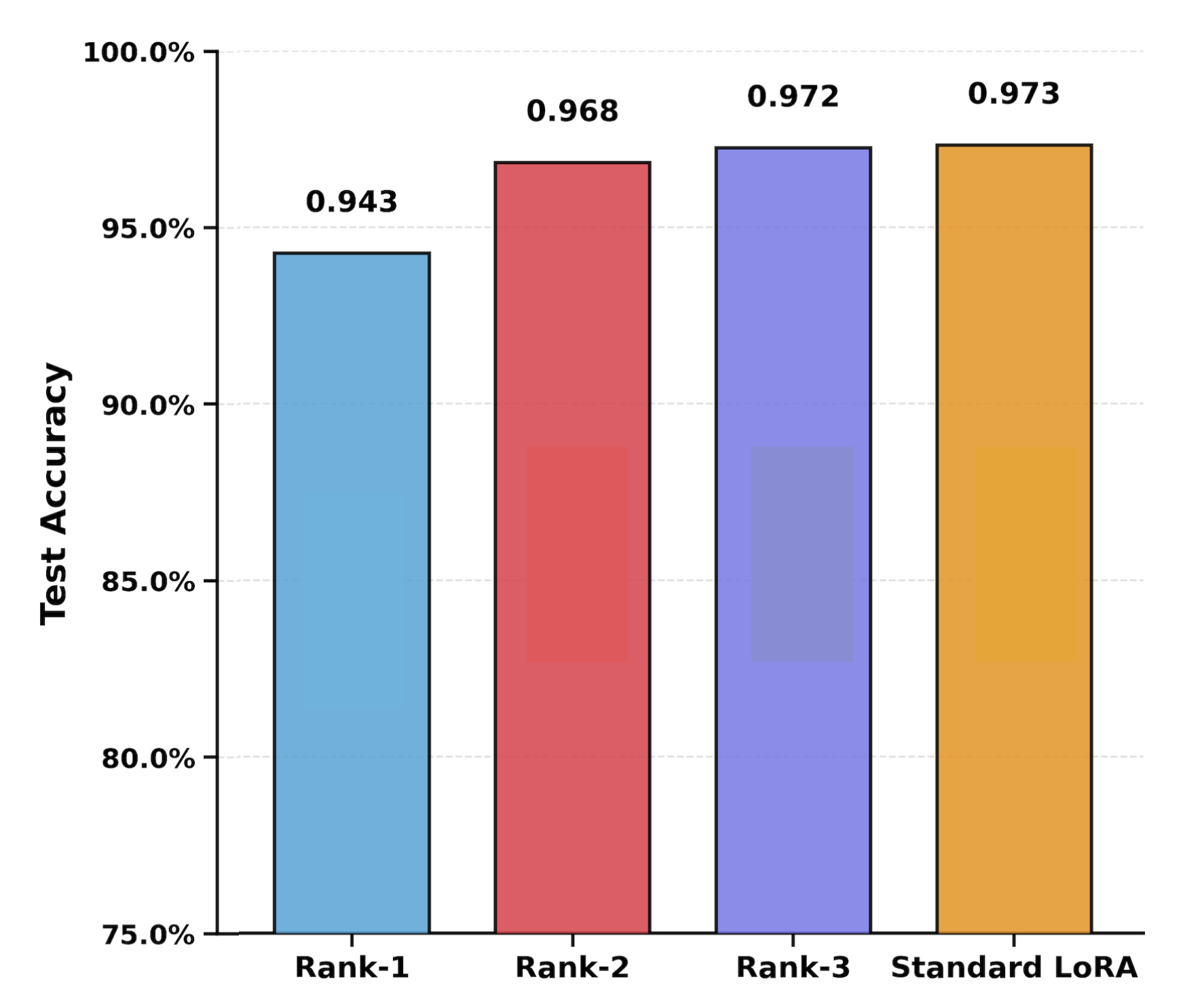}
        \label{fig:MNIST_barplot}
    \end{subfigure}
    \hspace{-0.4cm}
    \begin{subfigure}[b]{0.33\textwidth}
    \includegraphics[width=\linewidth]{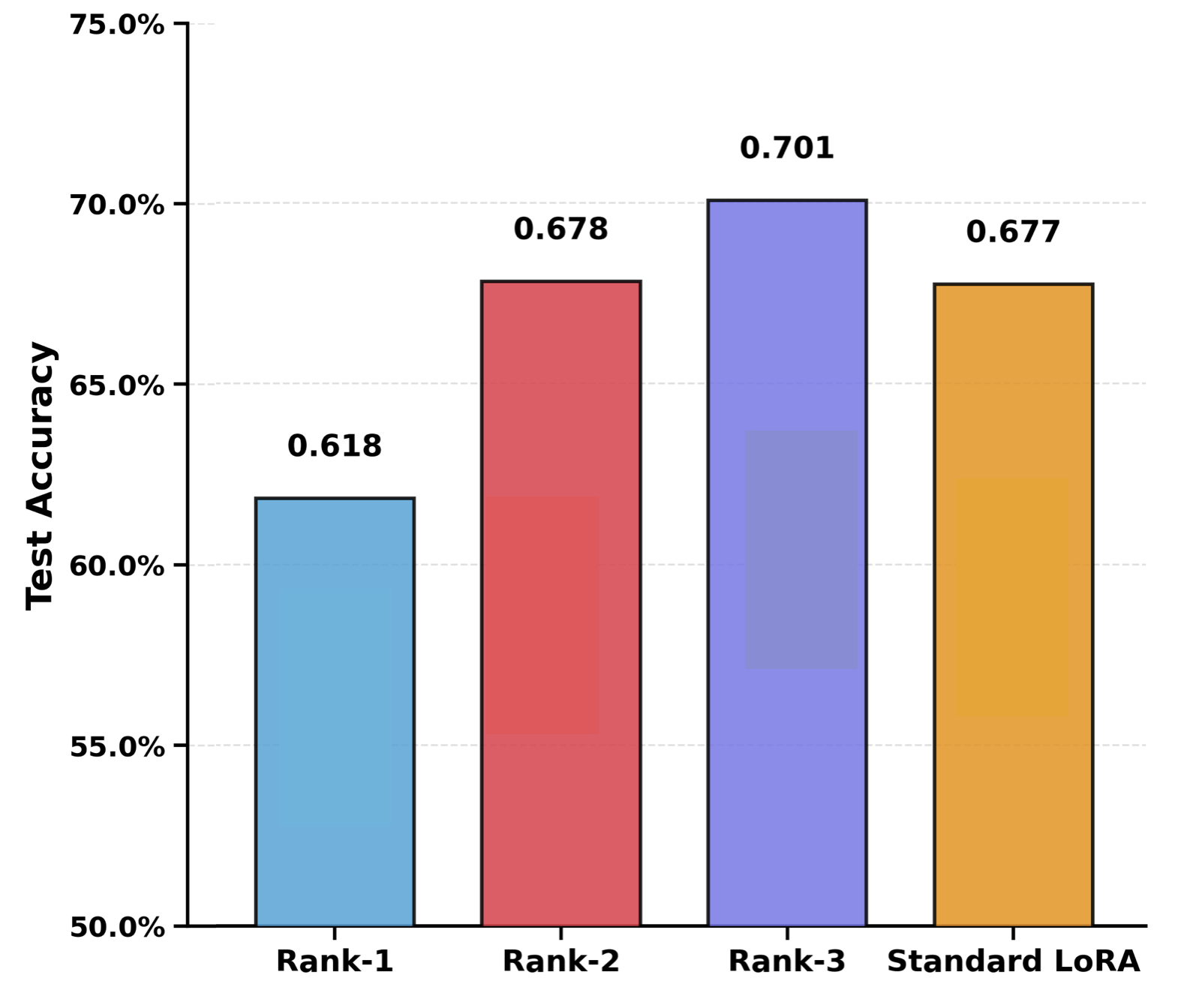}
        \label{fig:CIFAR10_barplot}
    \end{subfigure}
    \hspace{-0.3cm}
    \begin{subfigure}[b]{0.33\textwidth}
    \includegraphics[width=\linewidth]{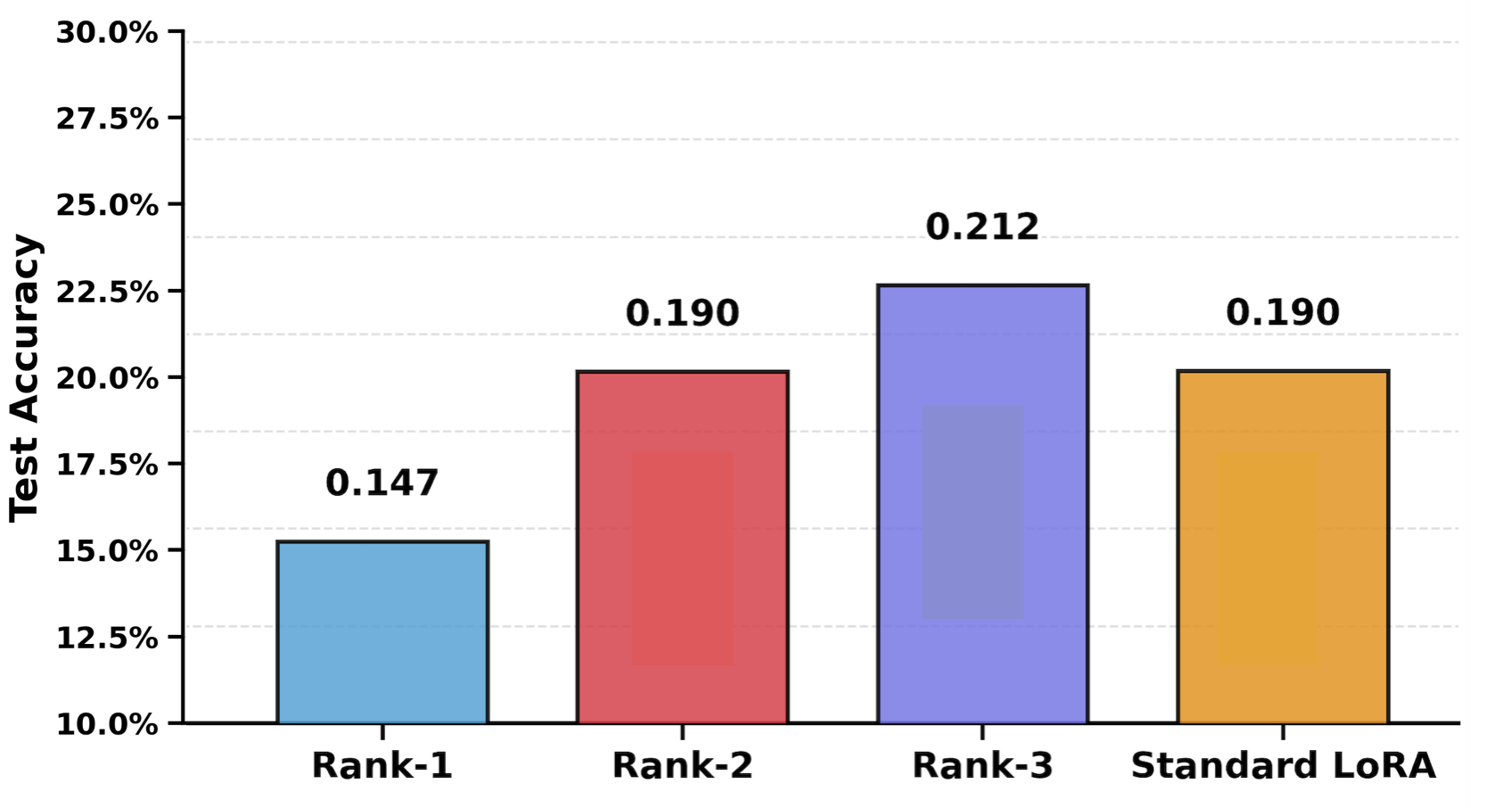}
        \label{fig:CIFAR100_barplot}
    \end{subfigure}\vspace{-0.4cm}
    \caption{Test accuracy of sequential rank-1 LoRA when adapting to new classes across the three datasets. \textit{Left:} MNIST. \textit{Center:} CIFAR10. \textit{Right:} CIFAR100. Note that, on purpose, the pretrained models are trained with good (MNIST), mediocre (CIFAR10) and bad (CIFAR100) accuracy.}
    \label{fig:barplots} \vspace{-0.3cm}
\end{figure}

\textbf{Mathematical formulation.}
In standard LoRA, we parameterize the weight change during fine-tuning as a low-rank decomposition: $\Delta\mathbf{W} = \mathbf{B}\mathbf{A}^\top \in \mathbb{R}^{m \times n}$,
where $\mathbf{A} \in \mathbb{R}^{n \times r}$ and $\mathbf{B} \in \mathbb{R}^{m \times r}$ with $r \ll \min(m, n)$.
In these experiments and w.l.o.g., $r = 3$.
In our approach, instead of optimizing all $r$ components simultaneously, we optimize one rank-1 component at a time, using the residual error from previous components to guide each subsequent step.

In particular, let $\mathbf{W}_0 \in \mathbb{R}^{m \times n}$ be a pre-trained weight matrix (in our case, we have $\mathbf{W}_0$ for every layer of the pretrained fully-connected network). 
Let $\mathcal{L}$ be the task-specific loss function, $f$ is the network function, and $(\mathbf{x}, \mathbf{y})$ represents the task data.
Then, 
\begin{equation}
        \Delta \mathbf{W} = \argmin_{\Delta \mathbf{W}} \mathcal{L}(f(\mathbf{x}; \mathbf{W}_0 + \Delta \mathbf{W}), \mathbf{y}).
    \end{equation}
We define our sequential rank-1 adaptation procedure as follows:
For $k = 1, 2, \ldots, r$, find the rank-1 update that minimizes the task loss given the previously learned components: \vspace{-0.1cm}
\begin{equation}
    \label{eq:lora_obj}
    \mathbf{a}_k, \mathbf{b}_k = \argmin_{\mathbf{a} \in \mathbb{R}^n, \mathbf{b} \in \mathbb{R}^m} \mathcal{L}\Big(f\Big(\mathbf{x}; \mathbf{W}_0 + \sum_{j=1}^{k-1} \mathbf{b}_j \mathbf{a}_j^\top + \mathbf{b}\mathbf{a}^\top\Big), \mathbf{y}\Big)
\end{equation}
The final adapted model uses the weight matrix on the new data domain: $\mathbf{W} = \mathbf{W}_0 + \sum_{k=1}^{r} \mathbf{b}_k \mathbf{a}_k^\top$. We approximate the optimal rank-1 updates using (stochastic) gradient descent on (\ref{eq:lora_obj}). 
All experiments were conducted using Google Colab Pro+ with NVIDIA A100 GPU (40GB memory).

\begin{wrapfigure}{r}{0.62\textwidth}
  \vspace{-0.8cm}
  \begin{center}    \includegraphics[width=0.62\textwidth]{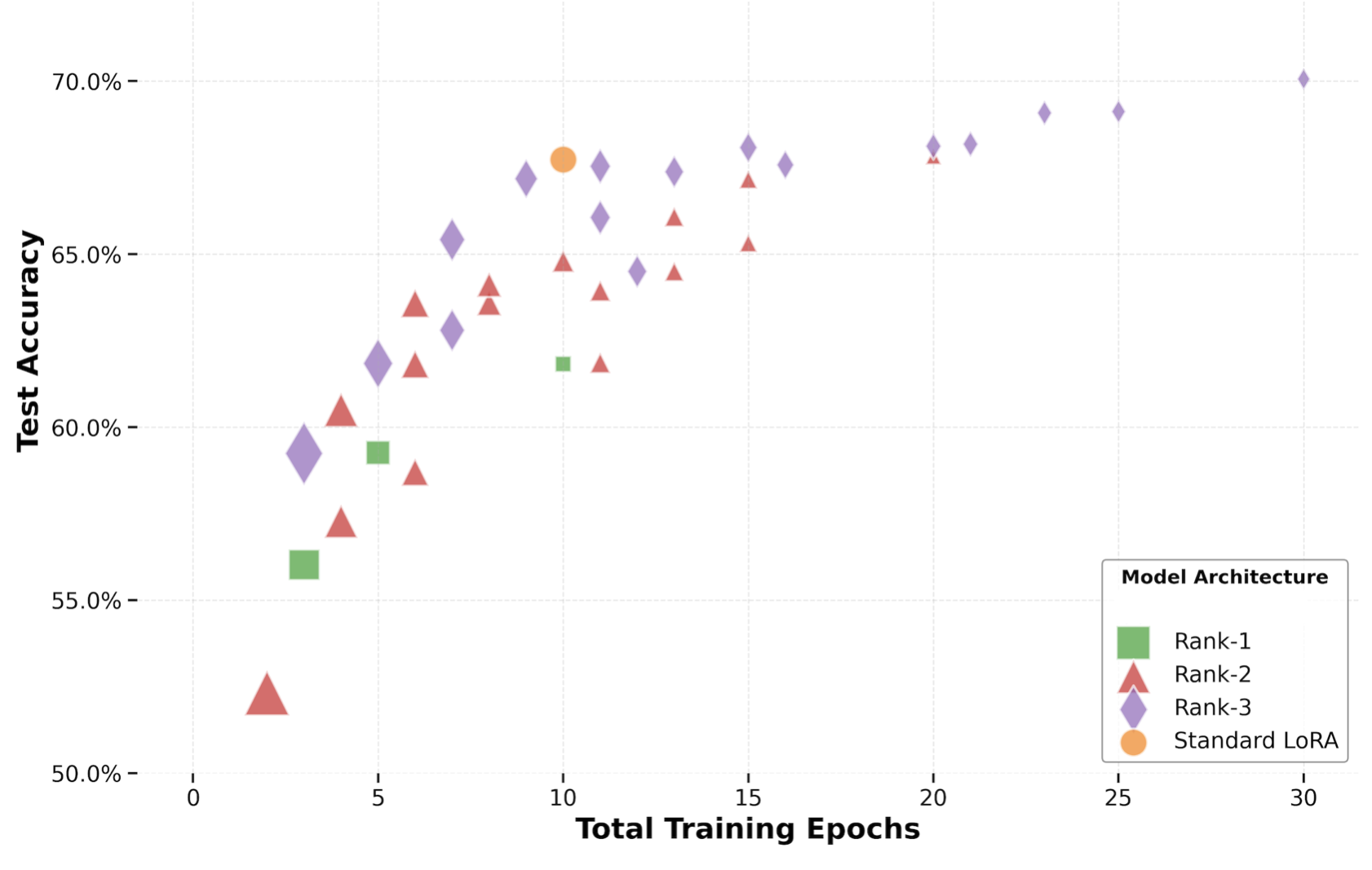}
  \end{center}
  \vspace{-0.4cm}
  \caption{Marker sizes: relative efficiency of each config.}\label{fig:efficiency_bubbleplot}
  \vspace{-0.5cm}
\end{wrapfigure}

\textbf{Adaptation performance across datasets.}
Figure \ref{fig:barplots} presents the test accuracy of sequential rank-1 LoRA components when adapting to new classes across the three datasets. 
The accuracy is measured solely on the new classes (classes ``5-9'' for MNIST and CIFAR10, classes ``50-99'' for CIFAR100), highlighting the adaptation capabilities rather than overall performance.
The standard deviation for all the LoRA experiments is maximum $\sim$1.5\% around the mean over 5 different runs, and the key message of this section remains consistent over runs.

Our results demonstrate that sequential LoRA adaptation effectively transfers knowledge across all three scenarios, though with varying degrees of success depending on the quality of the baseline model. In all cases, we observe that sequential rank-1 LoRA training works at least comparable to standard LoRA, where all $r=3$ components are trained simultaneously.

Definitely, this performance comes with a cost. 
Figure \ref{fig:efficiency_bubbleplot} displays the relationship between parameter efficiency (measured by test accuracy per training epoch) and total training epochs for different model architectures. 
Here, Rank-1 architectures correspond to just using $r=1$ for different number of epochs; Rank-2 architectures correspond to $r=2$, where the components are trained for different \textit{combinations} of total epochs (e.g., some models have been trained with 1$\rightarrow 1$ epochs, while others have been trained with $10 \rightarrow 10$ epochs; more about this in the next paragraph), and so on.

Across all datasets (see also Appendix \ref{app:4}), we observe that sequential rank-1 approaches (noted as ``Rank-1'', ``Rank-2'', and ``Rank-3'') achieve comparable parameter efficiency (with a slight loss of accuracy) compared to standard LoRA. 
Sequential rank-1 models require more total training for comparable accuracy, thus creating a tradeoff, but still maintain favorable parameter-to-performance ratios.
\textit{Yet our approach introduces an interesting property: sequential rank-1 does not require to know apriori the rank of the adaptation; one could check online whether accuracy is sufficient and stop further training. Such a property lacks in standard LoRA: either the user needs to know a good value for $r$, or one needs to consider different $r$ values from scratch before making the final decision.}

\begin{wrapfigure}{r}{0.65\textwidth}
  \vspace{-0.7cm}
  \begin{center}    \includegraphics[width=0.65\textwidth]{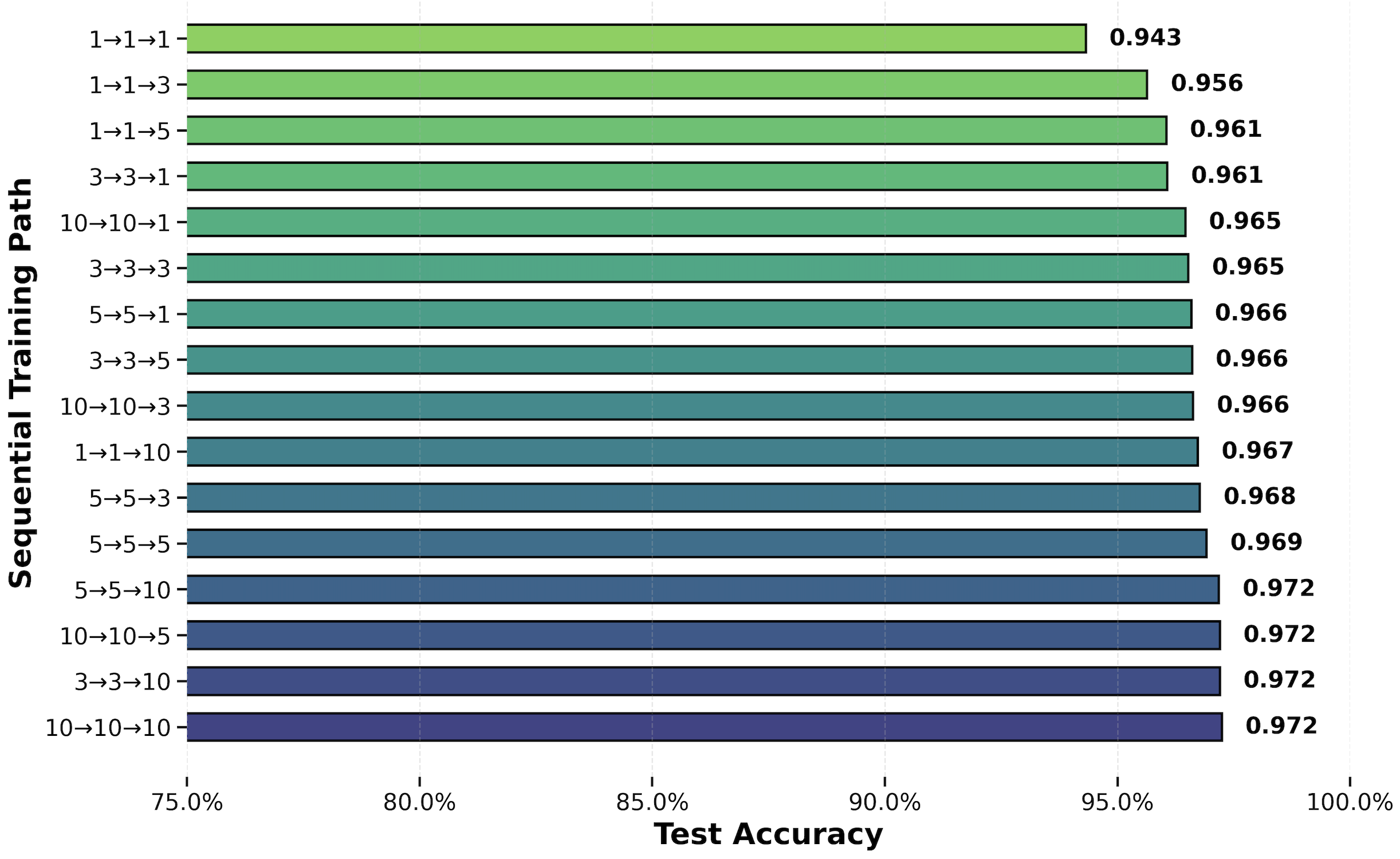}
  \end{center}
  \vspace{-0.2cm}
  \caption{$\alpha \shortrightarrow \beta \shortrightarrow \gamma$ denotes sequential training with $\alpha, \beta$ and $\gamma$ epochs for each component. Not all combinations are shown. }\label{fig:sequential_paths1}\vspace{-0.7cm}
\end{wrapfigure}
\textbf{Sequential training paths.}
Figure \ref{fig:sequential_paths1} illustrates the effectiveness of different sequential training paths for the case of MNIST, where each path represents a sequence of component training durations. 
For example, path ``1$\rightarrow$3$\rightarrow$5'' indicates a rank-3 LoRA where the 1st component is trained for 1 epoch, the 2nd component 3 epochs, and the 3rd component 5 epochs.

In all cases (the rest are provided in Appendix \ref{app:4}, but convey a similar message), it is evident that good first component implies (almost all the times) a better combined final model: front-loaded training schedules perform better, indicating that the first component captures most of the necessary adaptation, with diminishing returns for extensive training of later components. Appendix \ref{app:4} contains more results that support the above observations.

\vspace{-0.3cm}
\section{Limitations}
\vspace{-0.3cm}
Our theoretical analysis and experimental results on sequential rank-1 learning have certain limitations. First, the theoretical analysis is constrained to linear low-rank regression, leaving open challenges in extending to non-linear transformations and complex architectures. Moreover, while sequential rank-1 updates offer flexible rank determination, they may demand more training iterations than simultaneous rank optimization. Furthermore, our LoRA adaptation experiments are restricted to feedforward networks and basic classification tasks. Lastly, it remains an open question to characterize the optimal allocation of training epochs across components.

\vspace{-0.3cm}
\section{Conclusion}
\vspace{-0.3cm}
By examining error propagation in hierarchical learning frameworks, we demonstrate how the accuracy of sequential components is interconnected, with each subsequent step dependent on the precision of preceding estimations. Experimental results on low-rank linear matrix regression and LoRA adaptation (main text and Appendix) validate our hypotheses. From a practical point of view, 
this work suggests that more computational resources (larger $T_k$) should be allocated to earlier components to reduce their approximation errors, as these have the largest impact on the final result.
To this end, our work opens up the following future directions:\\
---\textit{Adaptive Procedures}: We can develop adaptive procedures that adjust the number of gradient steps $T_k$ based on the estimated approximation error; this connects with learning schedules literature. \\
---\textit{Component Reoptimization}: Periodically refine earlier components after extracting new ones. \\
---\textit{Orthogonality Constraints}: Enforce orthogonality between components to reduce interference.\\
---\textit{Hybrid Approaches}: Combine sequential rank-1 updates with occasional full-rank steps.

\bibliographystyle{unsrtnat}
\bibliography{reference}

\begin{thebibliography}{91}
\providecommand{\natexlab}[1]{#1}
\providecommand{\url}[1]{\texttt{#1}}
\expandafter\ifx\csname urlstyle\endcsname\relax
  \providecommand{\doi}[1]{doi: #1}\else
  \providecommand{\doi}{doi: \begingroup \urlstyle{rm}\Url}\fi

\bibitem[Barto and Mahadevan(2003)]{barto2003recent}
Andrew~G Barto and Sridhar Mahadevan.
\newblock Recent advances in hierarchical reinforcement learning.
\newblock \emph{Discrete event dynamic systems}, 13:\penalty0 341--379, 2003.

\bibitem[Pateria et~al.(2021)Pateria, Subagdja, Tan, and
  Quek]{pateria2021hierarchical}
Shubham Pateria, Budhitama Subagdja, Ah-hwee Tan, and Chai Quek.
\newblock Hierarchical reinforcement learning: A comprehensive survey.
\newblock \emph{ACM Computing Surveys (CSUR)}, 54\penalty0 (5):\penalty0 1--35,
  2021.

\bibitem[Sahni et~al.(2017)Sahni, Kumar, Tejani, and Isbell]{sahni2017learning}
Himanshu Sahni, Saurabh Kumar, Farhan Tejani, and Charles Isbell.
\newblock Learning to compose skills.
\newblock \emph{arXiv preprint arXiv:1711.11289}, 2017.

\bibitem[Ostapenko et~al.(2024)Ostapenko, Su, Ponti, Charlin, Roux, Pereira,
  Caccia, and Sordoni]{ostapenko2024towards}
Oleksiy Ostapenko, Zhan Su, Edoardo~Maria Ponti, Laurent Charlin, Nicolas~Le
  Roux, Matheus Pereira, Lucas Caccia, and Alessandro Sordoni.
\newblock Towards modular {LLM}s by building and reusing a library of {LoRA}s.
\newblock \emph{arXiv preprint arXiv:2405.11157}, 2024.

\bibitem[Sordoni et~al.(2024)Sordoni, Yuan, C{\^o}t{\'e}, Pereira, Trischler,
  Xiao, Hosseini, Niedtner, and Le~Roux]{sordoni2024joint}
Alessandro Sordoni, Eric Yuan, Marc-Alexandre C{\^o}t{\'e}, Matheus Pereira,
  Adam Trischler, Ziang Xiao, Arian Hosseini, Friederike Niedtner, and Nicolas
  Le~Roux.
\newblock Joint prompt optimization of stacked llms using variational
  inference.
\newblock \emph{Advances in Neural Information Processing Systems}, 36, 2024.

\bibitem[Page-Caccia et~al.(2024)Page-Caccia, Ponti, Su, Pereira, Le~Roux, and
  Sordoni]{page2024multi}
Lucas Page-Caccia, Edoardo~Maria Ponti, Zhan Su, Matheus Pereira, Nicolas
  Le~Roux, and Alessandro Sordoni.
\newblock Multi-head adapter routing for cross-task generalization.
\newblock \emph{Advances in Neural Information Processing Systems}, 36, 2024.

\bibitem[Shapere(1964)]{shapere1964structure}
Dudley Shapere.
\newblock The structure of scientific revolutions.
\newblock \emph{The Philosophical Review}, 73\penalty0 (3):\penalty0 383--394,
  1964.

\bibitem[Richardson(2019)]{richardson2019models}
Ken Richardson.
\newblock \emph{Models of cognitive development}.
\newblock Psychology Press, 2019.

\bibitem[Okano et~al.(2000)Okano, Hirano, and Balaban]{okano2000learning}
Hideyuki Okano, Tomoo Hirano, and Evan Balaban.
\newblock Learning and memory.
\newblock \emph{Proceedings of the National Academy of Sciences}, 97\penalty0
  (23):\penalty0 12403--12404, 2000.

\bibitem[Carey and Markman(1999)]{carey1999cognitive}
Susan Carey and Ellen~M Markman.
\newblock Cognitive development.
\newblock In \emph{Cognitive science}, pages 201--254. Elsevier, 1999.

\bibitem[Ericsson et~al.(1993)Ericsson, Krampe, and
  Tesch-R{\"o}mer]{ericsson1993role}
K~Anders Ericsson, Ralf~T Krampe, and Clemens Tesch-R{\"o}mer.
\newblock The role of deliberate practice in the acquisition of expert
  performance.
\newblock \emph{Psychological review}, 100\penalty0 (3):\penalty0 363, 1993.

\bibitem[Council et~al.(2000)Council, of~Behavioral, on~Behavioral, Sciences,
  on~Developments in the Science of Learning with additional material from the
  Committee~on Learning~Research, and Practice]{national2000people}
National~Research Council, Division of~Behavioral, Board on~Behavioral, Sensory
  Sciences, Committee on~Developments in the Science of Learning with
  additional material from the Committee~on Learning~Research, and Educational
  Practice.
\newblock \emph{How people learn: Brain, mind, experience, and school: Expanded
  edition}, volume~1.
\newblock National Academies Press, 2000.

\bibitem[Farajtabar et~al.(2020)Farajtabar, Azizan, Mott, and
  Li]{farajtabar2020orthogonal}
Mehrdad Farajtabar, Navid Azizan, Alex Mott, and Ang Li.
\newblock Orthogonal gradient descent for continual learning.
\newblock In \emph{International Conference on Artificial Intelligence and
  Statistics}, pages 3762--3773. PMLR, 2020.

\bibitem[Chaudhry et~al.(2020)Chaudhry, Khan, Dokania, and
  Torr]{chaudhry2020continual}
Arslan Chaudhry, Naeemullah Khan, Puneet Dokania, and Philip Torr.
\newblock Continual learning in low-rank orthogonal subspaces.
\newblock \emph{Advances in Neural Information Processing Systems},
  33:\penalty0 9900--9911, 2020.

\bibitem[Caruana(1997)]{caruana1997multitask}
Rich Caruana.
\newblock Multitask learning.
\newblock \emph{Machine learning}, 28:\penalty0 41--75, 1997.

\bibitem[Ruder(2017)]{ruder2017overview}
Sebastian Ruder.
\newblock An overview of multi-task learning in deep neural networks.
\newblock \emph{arXiv preprint arXiv:1706.05098}, 2017.

\bibitem[Andrychowicz et~al.(2016)Andrychowicz, Denil, Gomez, Hoffman, Pfau,
  Schaul, Shillingford, and De~Freitas]{andrychowicz2016learning}
Marcin Andrychowicz, Misha Denil, Sergio Gomez, Matthew~W Hoffman, David Pfau,
  Tom Schaul, Brendan Shillingford, and Nando De~Freitas.
\newblock Learning to learn by gradient descent by gradient descent.
\newblock \emph{Advances in neural information processing systems}, 29, 2016.

\bibitem[Crawshaw(2020)]{crawshaw2020multi}
Michael Crawshaw.
\newblock Multi-task learning with deep neural networks: A survey.
\newblock \emph{arXiv preprint arXiv:2009.09796}, 2020.

\bibitem[Zhang et~al.(2017)Zhang, Bengio, Hardt, Recht, and
  Vinyals]{zhang2017understanding}
Chiyuan Zhang, Samy Bengio, Moritz Hardt, Benjamin Recht, and Oriol Vinyals.
\newblock Understanding deep learning requires rethinking generalization.
\newblock \emph{arXiv preprint arXiv:1611.03530}, 2017.

\bibitem[Bjorck et~al.(2018)Bjorck, Gomes, Selman, and
  Weinberger]{bjorck2018understanding}
Nils Bjorck, Carla~P Gomes, Bart Selman, and Kilian~Q Weinberger.
\newblock Understanding batch normalization.
\newblock \emph{Advances in neural information processing systems}, 31, 2018.

\bibitem[Baldi and Sadowski(2013)]{baldi2013understanding}
Pierre Baldi and Peter~J Sadowski.
\newblock Understanding dropout.
\newblock \emph{Advances in neural information processing systems}, 26, 2013.

\bibitem[Sun(2020)]{sun2020optimization}
Ruo-Yu Sun.
\newblock Optimization for deep learning: An overview.
\newblock \emph{Journal of the Operations Research Society of China},
  8\penalty0 (2):\penalty0 249--294, 2020.

\bibitem[Liu et~al.(2020)Liu, Liu, Gao, Chen, and Han]{liu2020understanding}
Liyuan Liu, Xiaodong Liu, Jianfeng Gao, Weizhu Chen, and Jiawei Han.
\newblock Understanding the difficulty of training transformers.
\newblock \emph{arXiv preprint arXiv:2004.08249}, 2020.

\bibitem[Zhou et~al.(2022)Zhou, Yu, Xie, Xiao, Anandkumar, Feng, and
  Alvarez]{zhou2022understanding}
Daquan Zhou, Zhiding Yu, Enze Xie, Chaowei Xiao, Animashree Anandkumar, Jiashi
  Feng, and Jose~M Alvarez.
\newblock Understanding the robustness in vision transformers.
\newblock In \emph{International Conference on Machine Learning}, pages
  27378--27394. PMLR, 2022.

\bibitem[Wang et~al.(2021)Wang, Feng, and Zhang]{wang2021rethinking}
Deng-Bao Wang, Lei Feng, and Min-Ling Zhang.
\newblock Rethinking calibration of deep neural networks: Do not be afraid of
  overconfidence.
\newblock \emph{Advances in Neural Information Processing Systems},
  34:\penalty0 11809--11820, 2021.

\bibitem[Yang et~al.(2020)Yang, Yu, You, Steinhardt, and
  Ma]{yang2020rethinking}
Zitong Yang, Yaodong Yu, Chong You, Jacob Steinhardt, and Yi~Ma.
\newblock Rethinking bias-variance trade-off for generalization of neural
  networks.
\newblock In \emph{International Conference on Machine Learning}, pages
  10767--10777. PMLR, 2020.

\bibitem[Zhao et~al.(2024{\natexlab{a}})Zhao, Qu, Feng, Cai, and
  Tang]{zhao2024learning}
Xianbing Zhao, Lizhen Qu, Tao Feng, Jianfei Cai, and Buzhou Tang.
\newblock Learning in order! a sequential strategy to learn invariant features
  for multimodal sentiment analysis.
\newblock In \emph{Proceedings of the 32nd ACM International Conference on
  Multimedia}, pages 9729--9738, 2024{\natexlab{a}}.

\bibitem[McAlister et~al.(2024)McAlister, Robins, and
  Szymanski]{mcalister2024sequential}
Hayden McAlister, Anthony Robins, and Lech Szymanski.
\newblock Sequential learning in the dense associative memory.
\newblock \emph{arXiv preprint arXiv:2409.15729}, 2024.

\bibitem[Bian et~al.(2023)Bian, Pan, Zhao, Wang, Wang, and Wen]{bian2023multi}
Shuqing Bian, Xingyu Pan, Wayne~Xin Zhao, Jinpeng Wang, Chuyuan Wang, and
  Ji-Rong Wen.
\newblock Multi-modal mixture of experts represetation learning for sequential
  recommendation.
\newblock In \emph{Proceedings of the 32nd ACM International Conference on
  Information and Knowledge Management}, pages 110--119, 2023.

\bibitem[Sanyal et~al.(2018)Sanyal, Kanade, Torr, and
  Dokania]{sanyal2018robustness}
Amartya Sanyal, Varun Kanade, Philip~HS Torr, and Puneet~K Dokania.
\newblock Robustness via deep low-rank representations.
\newblock \emph{arXiv preprint arXiv:1804.07090}, 2018.

\bibitem[Jolliffe(1995)]{jolliffe1995rotation}
I.T. Jolliffe.
\newblock Rotation of principal components: choice of normalization
  constraints.
\newblock \emph{Journal of Applied Statistics}, 22\penalty0 (1):\penalty0
  29--35, 1995.

\bibitem[Koren et~al.(2009)Koren, Bell, and Volinsky]{koren2009matrix}
Yehuda Koren, Robert Bell, and Chris Volinsky.
\newblock Matrix factorization techniques for recommender systems.
\newblock \emph{Computer}, 42\penalty0 (8):\penalty0 30--37, 2009.

\bibitem[Vidal and Favaro(2014)]{vidal2014low}
Ren{\'e} Vidal and Paolo Favaro.
\newblock Low rank subspace clustering.
\newblock \emph{Pattern Recognition Letters}, 43:\penalty0 47--61, 2014.

\bibitem[Candes and Recht(2012)]{candes2012exact}
Emmanuel Candes and Benjamin Recht.
\newblock Exact matrix completion via convex optimization.
\newblock \emph{Communications of the ACM}, 55\penalty0 (6):\penalty0 111--119,
  2012.

\bibitem[Recht et~al.(2010)Recht, Fazel, and Parrilo]{recht2010guaranteed}
Benjamin Recht, Maryam Fazel, and Pablo~A Parrilo.
\newblock Guaranteed minimum-rank solutions of linear matrix equations via
  nuclear norm minimization.
\newblock \emph{SIAM review}, 52\penalty0 (3):\penalty0 471--501, 2010.

\bibitem[Rohde and Tsybakov(2011)]{rohde2011estimation}
Angelika Rohde and Alexandre~B Tsybakov.
\newblock Estimation of high-dimensional low-rank matrices1.
\newblock \emph{The Annals of Statistics}, 39\penalty0 (2):\penalty0 887--930,
  2011.

\bibitem[Ha and Foygel~Barber(2015)]{ha2015robust}
Wooseok Ha and Rina Foygel~Barber.
\newblock Robust {PCA} with compressed data.
\newblock \emph{Advances in Neural Information Processing Systems}, 28, 2015.

\bibitem[Hotelling(1933)]{hotelling1933analysis}
Harold Hotelling.
\newblock Analysis of a complex of statistical variables into principal
  components.
\newblock \emph{Journal of educational psychology}, 24\penalty0 (6):\penalty0
  417, 1933.

\bibitem[Mackey(2008)]{mackey2008deflation}
Lester Mackey.
\newblock Deflation methods for sparse pca.
\newblock \emph{Advances in neural information processing systems}, 21, 2008.

\bibitem[Zhang(2006)]{zhang2006schur}
Fuzhen Zhang.
\newblock \emph{The {S}chur complement and its applications}, volume~4.
\newblock Springer Science \& Business Media, 2006.

\bibitem[Sriperumbudur et~al.(2007)Sriperumbudur, Torres, and
  Lanckriet]{sriperumbudur2007sparse}
Bharath~K Sriperumbudur, David~A Torres, and Gert~RG Lanckriet.
\newblock Sparse eigen methods by {DC} programming.
\newblock In \emph{Proceedings of the 24th international conference on Machine
  learning}, pages 831--838, 2007.

\bibitem[Saad(1988)]{saad1988projection}
Youcef Saad.
\newblock Projection and deflation method for partial pole assignment in linear
  state feedback.
\newblock \emph{IEEE Transactions on Automatic Control}, 33\penalty0
  (3):\penalty0 290--297, 1988.

\bibitem[Danisman et~al.(2014)Danisman, Yilmaz, Ozkaya, and
  Comlekciler]{danisman2014comparison}
Y~Danisman, MF~Yilmaz, A~Ozkaya, and I~Comlekciler.
\newblock A comparison of eigenvalue methods for principal component analysis.
\newblock \emph{Appl. and Comput. Math}, 13:\penalty0 316--331, 2014.

\bibitem[Yang(1995)]{yang1995projection}
Bin Yang.
\newblock Projection approximation subspace tracking.
\newblock \emph{IEEE Transactions on Signal processing}, 43\penalty0
  (1):\penalty0 95--107, 1995.

\bibitem[Vaswani et~al.(2018)Vaswani, Bouwmans, Javed, and
  Narayanamurthy]{vaswani2018robust}
Namrata Vaswani, Thierry Bouwmans, Sajid Javed, and Praneeth Narayanamurthy.
\newblock Robust subspace learning: Robust {PCA}, robust subspace tracking, and
  robust subspace recovery.
\newblock \emph{IEEE signal processing magazine}, 35\penalty0 (4):\penalty0
  32--55, 2018.

\bibitem[Balzano et~al.(2018)Balzano, Chi, and Lu]{balzano2018streaming}
Laura Balzano, Yuejie Chi, and Yue~M Lu.
\newblock Streaming {PCA} and subspace tracking: The missing data case.
\newblock \emph{Proceedings of the IEEE}, 106\penalty0 (8):\penalty0
  1293--1310, 2018.

\bibitem[Peng et~al.(2023)Peng, Giampouras, and Vidal]{peng2023ideal}
Liangzu Peng, Paris Giampouras, and Ren{\'e} Vidal.
\newblock The ideal continual learner: An agent that never forgets.
\newblock In \emph{International Conference on Machine Learning}, pages
  27585--27610. PMLR, 2023.

\bibitem[Hu et~al.(2021)Hu, Wallis, Allen-Zhu, Li, Wang, Wang, Chen,
  et~al.]{hu2021lora}
Edward~J Hu, Phillip Wallis, Zeyuan Allen-Zhu, Yuanzhi Li, Shean Wang, Lu~Wang,
  Weizhu Chen, et~al.
\newblock Lo{RA}: Low-rank adaptation of large language models.
\newblock In \emph{International Conference on Learning Representations}, 2021.

\bibitem[Lee and Bresler(2009)]{lee2009guaranteed}
Kiryung Lee and Yoram Bresler.
\newblock Guaranteed minimum rank approximation from linear observations by
  nuclear norm minimization with an ellipsoidal constraint.
\newblock \emph{arXiv preprint arXiv:0903.4742}, 2009.

\bibitem[Liu and Vandenberghe(2009)]{liu2009interior}
Zhang Liu and Lieven Vandenberghe.
\newblock Interior-point method for nuclear norm approximation with application
  to system identification.
\newblock \emph{SIAM Journal on Matrix Analysis and Applications}, 31\penalty0
  (3):\penalty0 1235--1256, 2009.

\bibitem[Jain et~al.(2010)Jain, Meka, and Dhillon]{jain2010guaranteed}
Prateek Jain, Raghu Meka, and Inderjit~S Dhillon.
\newblock Guaranteed rank minimization via singular value projection.
\newblock In \emph{Advances in Neural Information Processing Systems}, pages
  937--945, 2010.

\bibitem[Lee and Bresler(2010)]{lee2010admira}
Kiryung Lee and Yoram Bresler.
\newblock Admira: Atomic decomposition for minimum rank approximation.
\newblock \emph{IEEE Transactions on Information Theory}, 56\penalty0
  (9):\penalty0 4402--4416, 2010.

\bibitem[Kyrillidis and Cevher(2014)]{kyrillidis2014matrix}
A.~Kyrillidis and V.~Cevher.
\newblock Matrix recipes for hard thresholding methods.
\newblock \emph{Journal of mathematical imaging and vision}, 48\penalty0
  (2):\penalty0 235--265, 2014.

\bibitem[Kyrillidis and Cevher(2011)]{kyrillidis2011recipes}
A.~Kyrillidis and V.~Cevher.
\newblock Recipes on hard thresholding methods.
\newblock In \emph{Computational Advances in Multi-Sensor Adaptive Processing
  (CAMSAP), 2011 4th IEEE International Workshop on}, pages 353--356. IEEE,
  2011.

\bibitem[Khanna and Kyrillidis(2017)]{khanna2017iht}
R.~Khanna and A.~Kyrillidis.
\newblock {IHT} dies hard: Provable accelerated iterative hard thresholding.
\newblock \emph{arXiv preprint arXiv:1712.09379}, 2017.

\bibitem[Xu et~al.(2018)Xu, He, De~Sa, Mitliagkas, and Re]{xu2018accelerated}
Peng Xu, Bryan He, Christopher De~Sa, Ioannis Mitliagkas, and Chris Re.
\newblock Accelerated stochastic power iteration.
\newblock In Amos Storkey and Fernando Perez-Cruz, editors, \emph{Proceedings
  of the Twenty-First International Conference on Artificial Intelligence and
  Statistics}, volume~84 of \emph{Proceedings of Machine Learning Research},
  pages 58--67. PMLR, 09--11 Apr 2018.
\newblock URL \url{https://proceedings.mlr.press/v84/xu18a.html}.

\bibitem[Burer and Monteiro(2003)]{burer2003nonlinear}
Samuel Burer and Renato~DC Monteiro.
\newblock A nonlinear programming algorithm for solving semidefinite programs
  via low-rank factorization.
\newblock \emph{Mathematical Programming}, 95\penalty0 (2):\penalty0 329--357,
  2003.

\bibitem[Jain and Dhillon(2013)]{jain2013provable}
Prateek Jain and Inderjit~S Dhillon.
\newblock Provable inductive matrix completion.
\newblock \emph{arXiv preprint arXiv:1306.0626}, 2013.

\bibitem[Chen and Wainwright(2015)]{chen2015fast}
Yudong Chen and Martin~J Wainwright.
\newblock Fast low-rank estimation by projected gradient descent: General
  statistical and algorithmic guarantees.
\newblock \emph{arXiv preprint arXiv:1509.03025}, 2015.

\bibitem[Zhao et~al.(2015)Zhao, Wang, and Liu]{zhao2015nonconvex}
Tuo Zhao, Zhaoran Wang, and Han Liu.
\newblock A nonconvex optimization framework for low rank matrix estimation.
\newblock In \emph{Advances in Neural Information Processing Systems}, pages
  559--567, 2015.

\bibitem[Zheng and Lafferty(2015)]{zheng2015convergent}
Qinqing Zheng and John Lafferty.
\newblock A convergent gradient descent algorithm for rank minimization and
  semidefinite programming from random linear measurements.
\newblock In \emph{Advances in Neural Information Processing Systems}, pages
  109--117, 2015.

\bibitem[Tu et~al.(2016)Tu, Boczar, Simchowitz, Soltanolkotabi, and
  Recht]{tu2016low}
S.~Tu, R.~Boczar, M.~Simchowitz, M.~Soltanolkotabi, and B.~Recht.
\newblock Low-rank solutions of linear matrix equations via {P}rocrustes flow.
\newblock In \emph{Proceedings of the 33rd International Conference on
  International Conference on Machine Learning-Volume 48}, pages 964--973.
  JMLR. org, 2016.

\bibitem[Kyrillidis et~al.(2018{\natexlab{a}})Kyrillidis, Kalev, Park,
  Bhojanapalli, Caramanis, and Sanghavi]{kyrillidis2018provable}
Anastasios Kyrillidis, Amir Kalev, Dohyung Park, Srinadh Bhojanapalli,
  Constantine Caramanis, and Sujay Sanghavi.
\newblock Provable compressed sensing quantum state tomography via non-convex
  methods.
\newblock \emph{npj Quantum Information}, 4\penalty0 (1):\penalty0 36,
  2018{\natexlab{a}}.

\bibitem[Park et~al.(2016{\natexlab{a}})Park, Kyrillidis, Caramanis, and
  Sanghavi]{park2016non}
Dohyung Park, Anastasios Kyrillidis, Constantine Caramanis, and Sujay Sanghavi.
\newblock Non-square matrix sensing without spurious local minima via the
  burer-monteiro approach.
\newblock \emph{arXiv preprint arXiv:1609.03240}, 2016{\natexlab{a}}.

\bibitem[Sun and Luo(2016)]{sun2016guaranteed}
Ruoyu Sun and Zhi-Quan Luo.
\newblock Guaranteed matrix completion via non-convex factorization.
\newblock \emph{IEEE Transactions on Information Theory}, 62\penalty0
  (11):\penalty0 6535--6579, 2016.

\bibitem[Bhojanapalli et~al.(2016{\natexlab{a}})Bhojanapalli, Kyrillidis, and
  Sanghavi]{bhojanapalli2016dropping}
Srinadh Bhojanapalli, Anastasios Kyrillidis, and Sujay Sanghavi.
\newblock Dropping convexity for faster semi-definite optimization.
\newblock In \emph{Conference on Learning Theory}, pages 530--582,
  2016{\natexlab{a}}.

\bibitem[Bhojanapalli et~al.(2016{\natexlab{b}})Bhojanapalli, Neyshabur, and
  Srebro]{bhojanapalli2016global}
Srinadh Bhojanapalli, Behnam Neyshabur, and Nati Srebro.
\newblock Global optimality of local search for low rank matrix recovery.
\newblock In \emph{Advances in Neural Information Processing Systems}, pages
  3873--3881, 2016{\natexlab{b}}.

\bibitem[Park et~al.(2016{\natexlab{b}})Park, Kyrillidis, Caramanis, and
  Sanghavi]{park2016finding}
Dohyung Park, Anastasios Kyrillidis, Constantine Caramanis, and Sujay Sanghavi.
\newblock Finding low-rank solutions to matrix problems, efficiently and
  provably.
\newblock \emph{arXiv preprint arXiv:1606.03168}, 2016{\natexlab{b}}.

\bibitem[Ge et~al.(2017)Ge, Jin, and Zheng]{ge2017no}
Rong Ge, Chi Jin, and Yi~Zheng.
\newblock No spurious local minima in nonconvex low rank problems: A unified
  geometric analysis.
\newblock \emph{arXiv preprint arXiv:1704.00708}, 2017.

\bibitem[Hsieh et~al.(2017)Hsieh, Kao, Karimi~Mahabadi, Alp, Kyrillidis, and
  Cevher]{hsieh2017non}
Ya-Ping Hsieh, Yu-Chun Kao, Rabeeh Karimi~Mahabadi, Yurtsever Alp, Anastasios
  Kyrillidis, and Volkan Cevher.
\newblock A non-euclidean gradient descent framework for non-convex matrix
  factorization.
\newblock Technical report, Institute of Electrical and Electronics Engineers,
  2017.

\bibitem[Kyrillidis et~al.(2018{\natexlab{b}})Kyrillidis, Kalev, Park,
  Bhojanapalli, Caramanis, and Sanghavi]{kyrillidis2017provable}
A.~Kyrillidis, A.~Kalev, D.~Park, S.~Bhojanapalli, C.~Caramanis, and
  S.~Sanghavi.
\newblock Provable quantum state tomography via non-convex methods.
\newblock \emph{npj Quantum Information}, 4\penalty0 (36), 2018{\natexlab{b}}.

\bibitem[Kim et~al.(2023)Kim, Kollias, Kalev, Wei, and Kyrillidis]{kim2023fast}
Junhyung~Lyle Kim, George Kollias, Amir Kalev, Ken~X Wei, and Anastasios
  Kyrillidis.
\newblock Fast quantum state reconstruction via accelerated non-convex
  programming.
\newblock In \emph{Photonics}, volume~10, page 116. MDPI, 2023.

\bibitem[Golub and Van~Loan(2013)]{golub2013matrix}
Gene~H Golub and Charles~F Van~Loan.
\newblock \emph{Matrix computations}.
\newblock JHU press, 2013.

\bibitem[Gemp et~al.(2021{\natexlab{a}})Gemp, McWilliams, Vernade, and
  Graepel]{gemp2021eigengame}
Ian Gemp, Brian McWilliams, Claire Vernade, and Thore Graepel.
\newblock Eigengame: {\{}PCA{\}} as a nash equilibrium.
\newblock In \emph{International Conference on Learning Representations},
  2021{\natexlab{a}}.
\newblock URL \url{https://openreview.net/forum?id=NzTU59SYbNq}.

\bibitem[Gemp et~al.(2021{\natexlab{b}})Gemp, McWilliams, Vernade, and
  Graepel]{gemp2021eigengame2}
Ian Gemp, Brian McWilliams, Claire Vernade, and Thore Graepel.
\newblock Eigengame unloaded: When playing games is better than optimizing.
\newblock \emph{arXiv preprint arXiv:2102.04152}, 2021{\natexlab{b}}.

\bibitem[Arora et~al.(2019)Arora, Cohen, Hu, and
  Luo]{arora2019implicitregularizationdeepmatrix}
Sanjeev Arora, Nadav Cohen, Wei Hu, and Yuping Luo.
\newblock Implicit regularization in deep matrix factorization, 2019.
\newblock URL \url{https://arxiv.org/abs/1905.13655}.

\bibitem[Jin et~al.(2023)Jin, Li, Lyu, Du, and
  Lee]{jin2023understandingincrementallearninggradient}
Jikai Jin, Zhiyuan Li, Kaifeng Lyu, Simon~S. Du, and Jason~D. Lee.
\newblock Understanding incremental learning of gradient descent: A
  fine-grained analysis of matrix sensing, 2023.
\newblock URL \url{https://arxiv.org/abs/2301.11500}.

\bibitem[Soltanolkotabi et~al.(2023)Soltanolkotabi, St{\"o}ger, and
  Xie]{soltanolkotabi2023implicit}
Mahdi Soltanolkotabi, Dominik St{\"o}ger, and Changzhi Xie.
\newblock Implicit balancing and regularization: Generalization and convergence
  guarantees for overparameterized asymmetric matrix sensing.
\newblock In Gergely Neu and Lorenzo Rosasco, editors, \emph{Proceedings of
  Thirty Sixth Conference on Learning Theory}, volume 195 of \emph{Proceedings
  of Machine Learning Research}, pages 5140--5142. PMLR, 12--15 Jul 2023.
\newblock URL \url{https://proceedings.mlr.press/v195/soltanolkotabi23a.html}.

\bibitem[Kwon et~al.(2024)Kwon, Zhang, Song, Balzano, and
  Qu]{kwon2024efficientcompressionoverparameterizeddeep}
Soo~Min Kwon, Zekai Zhang, Dogyoon Song, Laura Balzano, and Qing Qu.
\newblock Efficient compression of overparameterized deep models through
  low-dimensional learning dynamics, 2024.
\newblock URL \url{https://arxiv.org/abs/2311.05061}.

\bibitem[Wang et~al.(2023{\natexlab{a}})Wang, Li, So, and
  Zoubir]{wang2023adaptive}
Zhi-Yong Wang, Xiao~Peng Li, Hing~Cheung So, and Abdelhak~M. Zoubir.
\newblock Adaptive rank-one matrix completion using sum of outer products.
\newblock \emph{IEEE Transactions on Circuits and Systems for Video
  Technology}, 33\penalty0 (9):\penalty0 4868--4880, 2023{\natexlab{a}}.
\newblock \doi{10.1109/TCSVT.2023.3250651}.

\bibitem[Wistuba et~al.(2023)Wistuba, Sivaprasad, Balles, and
  Zappella]{wistuba2023continuallearninglowrank}
Martin Wistuba, Prabhu~Teja Sivaprasad, Lukas Balles, and Giovanni Zappella.
\newblock Continual learning with low rank adaptation, 2023.
\newblock URL \url{https://arxiv.org/abs/2311.17601}.

\bibitem[Wang et~al.(2023{\natexlab{b}})Wang, Chen, Ge, Xia, Bao, Zheng, Zhang,
  Gui, and Huang]{wang2023orthogonalsubspacelearninglanguage}
Xiao Wang, Tianze Chen, Qiming Ge, Han Xia, Rong Bao, Rui Zheng, Qi~Zhang, Tao
  Gui, and Xuanjing Huang.
\newblock Orthogonal subspace learning for language model continual learning,
  2023{\natexlab{b}}.
\newblock URL \url{https://arxiv.org/abs/2310.14152}.

\bibitem[Zhang et~al.(2023)Zhang, Chen, Bukharin, Karampatziakis, He, Cheng,
  Chen, and Zhao]{zhang2023adalora}
Qingru Zhang, Minshuo Chen, Alexander Bukharin, Nikos Karampatziakis, Pengcheng
  He, Yu~Cheng, Weizhu Chen, and Tuo Zhao.
\newblock Ada{LoRA}: Adaptive budget allocation for parameter-efficient
  fine-tuning.
\newblock \emph{arXiv preprint arXiv:2303.10512}, 2023.

\bibitem[Zhao et~al.(2024{\natexlab{b}})Zhao, Shen, Zhu, Li, Su, Wang, Kuang,
  and Wu]{zhao2024merging}
Ziyu Zhao, Tao Shen, Didi Zhu, Zexi Li, Jing Su, Xuwu Wang, Kun Kuang, and Fei
  Wu.
\newblock Merging {LoRA}s like playing {LEGO}: Pushing the modularity of {LoRA}
  to extremes through rank-wise clustering.
\newblock \emph{arXiv preprint arXiv:2409.16167}, 2024{\natexlab{b}}.

\bibitem[Dimitriadis et~al.(2024)Dimitriadis, Frossard, and
  Fleuret]{dimitriadis2024pareto}
Nikolaos Dimitriadis, Pascal Frossard, and Francois Fleuret.
\newblock Pareto low-rank adapters: Efficient multi-task learning with
  preferences.
\newblock \emph{arXiv preprint arXiv:2407.08056}, 2024.

\bibitem[Wu et~al.(2024)Wu, Wang, Zhao, and Wong]{wu2024mixture}
Taiqiang Wu, Jiahao Wang, Zhe Zhao, and Ngai Wong.
\newblock Mixture-of-subspaces in low-rank adaptation.
\newblock \emph{arXiv preprint arXiv:2406.11909}, 2024.

\bibitem[Ostapenko et~al.()Ostapenko, Su, Ponti, Charlin, Le~Roux, Caccia, and
  Sordoni]{ostapenkotowards}
Oleksiy Ostapenko, Zhan Su, Edoardo Ponti, Laurent Charlin, Nicolas Le~Roux,
  Lucas Caccia, and Alessandro Sordoni.
\newblock Towards modular {LLM}s by building and reusing a library of {LoRA}s.
\newblock In \emph{Forty-first International Conference on Machine Learning}.

\bibitem[Huh et~al.(2024)Huh, Cheung, Bernstein, Isola, and
  Agrawal]{huh2024training}
Minyoung Huh, Brian Cheung, Jeremy Bernstein, Phillip Isola, and Pulkit
  Agrawal.
\newblock Training neural networks from scratch with parallel low-rank
  adapters.
\newblock \emph{arXiv preprint arXiv:2402.16828}, 2024.

\bibitem[Xia et~al.(2024)Xia, Qin, and Hazan]{xia2024chain}
Wenhan Xia, Chengwei Qin, and Elad Hazan.
\newblock Chain of {LoRA}: Efficient fine-tuning of language models via
  residual learning.
\newblock \emph{arXiv preprint arXiv:2401.04151}, 2024.

\bibitem[Wedin(1972)]{10.1007/BF01932678}
Per-\r{A}ke Wedin.
\newblock Perturbation bounds in connection with singular value decomposition.
\newblock \emph{BIT}, 12\penalty0 (1):\penalty0 99–111, March 1972.
\newblock ISSN 0006-3835.
\newblock \doi{10.1007/BF01932678}.
\newblock URL \url{https://doi.org/10.1007/BF01932678}.

\bibitem[Weyl(1912)]{Weyl1912}
H.~Weyl.
\newblock Das asymptotische verteilungsgesetz der eigenwerte linearer
  partieller differentialgleichungen (mit einer anwendung auf die theorie der
  hohlraumstrahlung).
\newblock \emph{Mathematische Annalen}, 71:\penalty0 441--479, 1912.
\newblock URL \url{http://eudml.org/doc/158545}.

\end{thebibliography}
\clearpage

\clearpage
\appendix

\section{Proof of Theorem \ref{thm:main_theorem_1}}\label{app:1}

The proof focuses on bounding two crucial quantities: $\left\|\mathbf{b}_k^\star \mathbf{a}_k^{\star\top}\mathbf{X} - \overline{\mathbf{b}}_k \overline{\mathbf{a}}_k^\top\mathbf{X}\right\|_F$ and $\left\|\mathbf{Y}_k - \mathbf{Y}^\star_k\right\|_F$. The first quantity measures the difference between the true $k$-$th$ rank-1 approximation of $Y$ and the leading rank-1 estimation of $\mathbf{Y}_k$ derived from the $k$-$th$ deflation step, which minimizes the loss specified in (\ref{eq:rank_1_equation}). The second quantity evaluates the distance between the "ground-truth" deflation matrices, $\mathbf{Y}^\star_k$, and the deflation matrices, $\mathbf{Y}_k$, obtained in practice. Together, these bounds provide a comprehensive understanding of the approximation accuracy and the effectiveness of the deflation process.

The difference between $\sum_{k=1}^r\mathbf{b}_k \mathbf{a}^\top_k \mathbf{X}$, the sum of rank-1 approximations returned by Algorithm (\ref{alg:inexact-main-alg}), and $\mathbf{Y}$, the training data label matrix, consists of three components: \vspace{-0.15cm}
\begin{itemize}
    \item \textbf{Ground-truth approximation error}: \(\left\|\mathbf{Y} - \sum_{k=1}^r \mathbf{b}_k^\star \mathbf{a}^{\star\top}_k \mathbf{X}\right\|_F\), which measures the deviation between the true product \(\mathbf{Y}\) and the sum of the exact top \(r\) rank-1 approximations of $\mathbf{Y}$. \vspace{-0.1cm}
    \item \textbf{Propagation error}: \(\left\|\sum_{k=1}^r \mathbf{b}_k^\star \mathbf{a}^{\star\top}_k \mathbf{X} - \sum_{k=1}^r \overline{\mathbf{b}}_k \overline{\mathbf{a}}^\top_k \mathbf{X}\right\|_F\), which accumulates the differences between the exact top rank-1 estimates of each "ground-truth" deflated matrix, \(\mathbf{Y}^\star_k\), and the corresponding empirical deflated matrix, \(\mathbf{Y}_k\). \vspace{-0.1cm}
    \item \textbf{Optimization error}: \(\left\|\sum_{k=1}^r \overline{\mathbf{b}}_k \overline{\mathbf{a}}^\top_k \mathbf{X} - \sum_{k=1}^r \mathbf{b}_k \mathbf{a}^\top_k \mathbf{X}\right\|_F\), which captures the cumulative numerical error from the sub-routine \texttt{rank-1} not being solved exactly. \vspace{-0.15cm}
\end{itemize}

Combining these, we can express the overall difference \(\left\|\mathbf{Y} - \sum_{k=1}^r \mathbf{b}_k \mathbf{a}^\top_k \mathbf{X}\right\|_F\) as follows:
\begin{equation}
    \begin{aligned}
        \left\|\mathbf{Y} - \sum_{k=1}^r \mathbf{b}_k \mathbf{a}^\top_k \mathbf{X}\right\|_F &\leq \left\|\mathbf{Y} - \sum_{k=1}^r \mathbf{b}_k^\star \mathbf{a}^{\star\top}_k \mathbf{X}\right\|_F+ \left\|\sum_{k=1}^r \mathbf{b}_k^\star \mathbf{a}^{\star\top}_k \mathbf{X} - \sum_{k=1}^r \overline{\mathbf{b}}_k \overline{\mathbf{a}}^\top_k \mathbf{X}\right\|_F \\&+  \underbrace{\left\|\sum_{k=1}^r \overline{\mathbf{b}}_k \overline{\mathbf{a}}^\top_k \mathbf{X} - \sum_{k=1}^r \mathbf{b}_k \mathbf{a}^\top_k \mathbf{X}\right\|_F}_{\text{$\leq \sum_{k=1}^r \|\bm{\delta}_k \mathbf{X}\|_F$}}
    \end{aligned}
\end{equation}
The last inequality follows from the definition in (\ref{eq:def-delta-k}). The norm $\left\|\bm{\delta}_k\right\|_F$ largely depends on the sub-routine \texttt{rank-1} and can be controlled as long as $t$, the number of sub-routine iterations, is sufficiently large. Also, ttThe first component equals $\sum_{k=r+1}^p \sigma^\star_k$, which is simply the difference $\sum_{k=1}^p \sigma^\star_k-\sum_{k=1}^r \sigma^\star_k$. By the triangle inequality:
\begin{equation}\label{ineq: 11}
    \begin{aligned}
        \left\|\mathbf{Y} - \sum_{k=1}^r \mathbf{b}_k \mathbf{a}^\top_k \mathbf{X}\right\|_F &\leq \sum_{k=r+1}^p \sigma^\star_k + \left\|\sum_{k=1}^r \mathbf{b}_k^\star \mathbf{a}^{\star\top}_k \mathbf{X} - \sum_{k=1}^r \overline{\mathbf{b}}_k \overline{\mathbf{a}}^\top_k \mathbf{X}\right\|_F +  \sum_{k=1}^r \left\|\bm{\delta}_k \mathbf{X}\right\|_F
        \\& \leq \sum_{k=r+1}^p \sigma^\star_k +
        \sum_{k=1}^r \left\| \mathbf{b}_k^\star \mathbf{a}^{\star\top}_k \mathbf{X} -  \overline{\mathbf{b}}_k \overline{\mathbf{a}}^\top_k \mathbf{X}\right\|_F +  \sum_{k=1}^r \left\|\bm{\delta}_k \mathbf{X}\right\|_F.
    \end{aligned}
\end{equation}
Therefore, our analysis will primarily focus on upper-bounding $\left\|\mathbf{b}_k^\star \mathbf{a}^{\star\top}_k \mathbf{X} -  \overline{\mathbf{b}}_k \overline{\mathbf{a}}^\top_k \mathbf{X}\right\|_F $. Intuitively, this depends on the difference between $\mathbf{Y}_k$ and $\mathbf{Y}_k^\star$. As a foundational step in our proof, we will first provide a characterization of $\left\|\mathbf{Y}_k - \mathbf{Y}_k^\star\right\|_F$.
\begin{lemma}
    \label{lem:matrix_diff_propagate}
    Let $\mathbf{Y}_k^\star$'s and $\mathbf{Y}_k$'s be defined as generated by Algorithm~\ref{alg:exact-main-alg} and Algorithm~\ref{alg:inexact-main-alg}, respectively. Then we have that for all $k \in [r]$,
    \begin{equation}{\label{lemma 1}}
        \|\mathbf{Y}_{k+1}-\mathbf{Y}^\star_{k+1}\|_F \leq \|\mathbf{Y}_k-\mathbf{Y}^\star_k\|_F +\|\mathbf{b}_k^\star \mathbf{a}_k^{\star\top}\mathbf{X} - \overline{\mathbf{b}}_k \overline{\mathbf{a}}_k^\top\mathbf{X}\|_F  + \|\mathbf{\bm{\delta}}_k\mathbf{X}\|_F.
    \end{equation}
\end{lemma}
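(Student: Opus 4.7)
The plan is to unfold the recursive definitions of $\mathbf{Y}_{k+1}$ and $\mathbf{Y}_{k+1}^\star$, introduce the intermediate exact-deflation term $\overline{\mathbf{b}}_k\overline{\mathbf{a}}_k^\top\mathbf{X}$ (which is the bridge between the "inexact practical step" and the "exact would-have-been step"), and then apply the triangle inequality. Essentially, this lemma is a one-step propagation identity for the deflation recursion, so no spectral or singular-value machinery should be needed.

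First I would write $\mathbf{Y}_{k+1}^\star = \mathbf{Y}_k^\star - \mathbf{b}_k^\star \mathbf{a}_k^{\star\top}\mathbf{X}$ and $\mathbf{Y}_{k+1} = \mathbf{Y}_k - \mathbf{b}_k\mathbf{a}_k^\top \mathbf{X}$ directly from the definitions in Algorithms~\ref{alg:exact-main-alg} and~\ref{alg:inexact-main-alg}. Subtracting, one obtains
\begin{equation*}
    \mathbf{Y}_{k+1} - \mathbf{Y}_{k+1}^\star \;=\; \bigl(\mathbf{Y}_k - \mathbf{Y}_k^\star\bigr) \;-\; \bigl(\mathbf{b}_k\mathbf{a}_k^\top - \mathbf{b}_k^\star\mathbf{a}_k^{\star\top}\bigr)\mathbf{X}.
\end{equation*}
The key trick is then to add and subtract $\overline{\mathbf{b}}_k\overline{\mathbf{a}}_k^\top$ inside the second parenthesis. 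Using Definition~\ref{def:num_err}, the pair $(\mathbf{b}_k\mathbf{a}_k^\top - \overline{\mathbf{b}}_k\overline{\mathbf{a}}_k^\top)$ is exactly $\bm{\delta}_k$, while $(\overline{\mathbf{b}}_k\overline{\mathbf{a}}_k^\top - \mathbf{b}_k^\star\mathbf{a}_k^{\star\top})$ is the "propagation" term whose norm is explicitly appearing in the stated bound.

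Second, I would take Frobenius norms and apply the triangle inequality twice to the three-term decomposition, which immediately yields
\begin{equation*}
    \|\mathbf{Y}_{k+1} - \mathbf{Y}_{k+1}^\star\|_F \leq \|\mathbf{Y}_k - \mathbf{Y}_k^\star\|_F + \|(\mathbf{b}_k^\star\mathbf{a}_k^{\star\top} - \overline{\mathbf{b}}_k\overline{\mathbf{a}}_k^\top)\mathbf{X}\|_F + \|\bm{\delta}_k\mathbf{X}\|_F,
\end{equation*}
matching (\ref{lemma 1}) term by term.

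There is no real obstacle here: the statement is essentially an algebraic identity followed by the triangle inequality, and the only "insight" is to choose $\overline{\mathbf{b}}_k\overline{\mathbf{a}}_k^\top\mathbf{X}$ as the intermediate quantity so that the two resulting error terms align with the quantities that the later proof (of Theorem~\ref{thm:main_theorem_1}) already plans to bound separately, namely the optimization error $\|\bm{\delta}_k\mathbf{X}\|_F$ and the deflation-alignment error $\|(\mathbf{b}_k^\star\mathbf{a}_k^{\star\top}-\overline{\mathbf{b}}_k\overline{\mathbf{a}}_k^\top)\mathbf{X}\|_F$. This lemma will then serve as the inductive step for unrolling $\|\mathbf{Y}_k-\mathbf{Y}_k^\star\|_F$ into a cumulative sum, which is what feeds into the $E(k)$ quantity in Theorem~\ref{thm:main_theorem_1}.
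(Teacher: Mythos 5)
Your proposal is correct and matches the paper's proof essentially verbatim: both substitute $\mathbf{b}_k\mathbf{a}_k^\top = \overline{\mathbf{b}}_k\overline{\mathbf{a}}_k^\top + \bm{\delta}_k$ into the two deflation recursions, subtract, and apply the triangle inequality to the resulting three-term decomposition. No differences worth noting.
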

Lemma \ref{lem:matrix_diff_propagate} upper-bounds $\left\|\mathbf{Y}_{k+1}-\mathbf{Y}^\star_{k+1}\right\|_F$ in terms of $\left\|\mathbf{Y}_k-\mathbf{Y}^\star_k\right\|_F$, $\left\|\mathbf{b}_k^\star \mathbf{a}_k^{\star\top}\mathbf{X} - \overline{\mathbf{b}}_k \overline{\mathbf{a}}_k^\top\mathbf{X}\right\|_F$, and $\left\|\mathbf{\bm{\delta}}_k\mathbf{X}\right\|_F$. To establish a recursive characterization of \(\left\|\mathbf{Y}_k - \mathbf{Y}_k^\star\right\|_F\), we first need to derive an upper bound for \(\left\|\overline{\mathbf{b}}_k \overline{\mathbf{a}}_k^\top \mathbf{X} - \mathbf{b}_k^\star \mathbf{a}_k^{\star\top} \mathbf{X}\right\|_F\).

\begin{lemma}\label{lem:sum_of_rank_1_diff}
Let $\mathcal{T}_k := \min\left\{\min_{j>k} |\sigma^\star_k - \sigma_{j_k}|, \sigma^\star_k\right\}$. If $\|\mathbf{Y}^\star_k - \mathbf{Y}_k\|_2 < \min_{j>k}|\sigma^\star_k - \sigma^\star_j|$,
then for all \( k \in [r] \), we have:
    \begin{equation}{\label{lemma 2}}
        \left\|\mathbf{b}_k^\star \mathbf{a}_k^{\star\top}\mathbf{X} - \overline{\mathbf{b}}_k \overline{\mathbf{a}}_k^\top \mathbf{X}\right\|_F \leq 
        \left(\frac{3 \sigma_k^\star}{\mathcal{T}_k} + 1\right)\left\|\mathbf{Y}_k^\star - \mathbf{Y}_k\right\|_F.
    \end{equation}
\end{lemma}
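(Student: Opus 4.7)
The plan is to recast the statement as a standard perturbation problem for the top singular triple. By Lemma~\ref{lem:sequential_decomposition} together with the Eckart--Young--Mirsky theorem, $\mathbf{b}_k^\star\mathbf{a}_k^{\star\top}\mathbf{X}$ is the optimal rank-$1$ approximation of $\mathbf{Y}_k^\star$ and therefore equals $\sigma_k^\star\mathbf{u}_k^\star\mathbf{v}_k^{\star\top}$; by the parallel argument stated just before Theorem~\ref{thm:main_theorem_1}, $\overline{\mathbf{b}}_k\overline{\mathbf{a}}_k^\top\mathbf{X}=\sigma_{1_k}\mathbf{u}_{1_k}\mathbf{v}_{1_k}^\top$ is the optimal rank-$1$ approximation of $\mathbf{Y}_k$. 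Hence the quantity to be bounded is $\|\sigma_k^\star\mathbf{u}_k^\star\mathbf{v}_k^{\star\top}-\sigma_{1_k}\mathbf{u}_{1_k}\mathbf{v}_{1_k}^\top\|_F$, i.e.\ a Frobenius distance between two rank-$1$ matrices built from the top singular triples of $\mathbf{Y}_k^\star$ and $\mathbf{Y}_k$, whose difference is $\mathbf{E}_k := \mathbf{Y}_k-\mathbf{Y}_k^\star$.

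The key step is a two-stage triangle-inequality decomposition. First, by inserting $\pm\,\sigma_k^\star\mathbf{u}_{1_k}\mathbf{v}_{1_k}^\top$ inside the norm, one isolates a \emph{singular-value} term $|\sigma_k^\star-\sigma_{1_k}|\cdot\|\mathbf{u}_{1_k}\mathbf{v}_{1_k}^\top\|_F = |\sigma_k^\star-\sigma_{1_k}|$, which Weyl's inequality bounds by $\|\mathbf{E}_k\|_2\le\|\mathbf{E}_k\|_F$; this produces the additive $+1$ in the final bound. Second, the residual \emph{singular-vector} term $\sigma_k^\star\,\|\mathbf{u}_k^\star\mathbf{v}_k^{\star\top}-\mathbf{u}_{1_k}\mathbf{v}_{1_k}^\top\|_F$ is controlled, via a further rank-$1$ split of the form $(\mathbf{u}_k^\star-\mathbf{u}_{1_k})\mathbf{v}_k^{\star\top}+\mathbf{u}_{1_k}(\mathbf{v}_k^\star-\mathbf{v}_{1_k})^\top$, by $\sigma_k^\star\bigl(\|\mathbf{u}_k^\star-\mathbf{u}_{1_k}\|_2+\|\mathbf{v}_k^\star-\mathbf{v}_{1_k}\|_2\bigr)$.

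I would then convert these vector differences to sine-angle form using the elementary identity $\|\mathbf{u}-\mathbf{v}\|_2\le\sqrt{2}\,\sin\angle(\mathbf{u},\mathbf{v})$ for unit vectors with nonnegative inner product; the inner-product condition is exactly the sign convention fixed for $\mathbf{u}_{1_k},\mathbf{v}_{1_k}$ in the paragraph preceding Theorem~\ref{thm:main_theorem_1}. Wedin's $\sin\Theta$ theorem applied to the isolated top singular triple of $\mathbf{Y}_k^\star$ then yields $\sin\angle(\mathbf{u}_k^\star,\mathbf{u}_{1_k}),\;\sin\angle(\mathbf{v}_k^\star,\mathbf{v}_{1_k})\le \|\mathbf{E}_k\|_2/\mathcal{T}_k$, where the hypothesis $\|\mathbf{E}_k\|_2<\min_{j>k}|\sigma_k^\star-\sigma_j^\star|$ is precisely what keeps the Wedin denominator positive after perturbation. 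Summing the two contributions gives a combined constant of $2\sqrt{2}\le 3$, accounting for the $3\sigma_k^\star/\mathcal{T}_k$ factor in the bound.

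The main obstacle I expect is reconciling the gap $\mathcal{T}_k$ used in the statement (which mixes singular values of $\mathbf{Y}_k^\star$ and those of $\mathbf{Y}_k$ through the term $\sigma_{j_k}$) with the canonical Wedin gap, which is usually phrased in one spectrum or the other. Under the hypothesis $\|\mathbf{E}_k\|_2<\min_{j>k}|\sigma_k^\star-\sigma_j^\star|$, a short Weyl argument bridges these two gap forms with only constant-factor slack, which is what ultimately allows the clean factor of $3$ in the statement rather than a more intricate expression depending on $\|\mathbf{E}_k\|$ itself.
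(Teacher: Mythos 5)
Your proposal matches the paper's proof in all essentials: the same split into a singular-value term (controlled by Weyl's inequality, yielding the additive $1$) and a singular-vector term (controlled by a rank-$1$ cross split, the $\|\mathbf{u}-\mathbf{v}\|_2\leq\sqrt{2}\,\sin\angle(\mathbf{u},\mathbf{v})$ conversion under the fixed sign convention, and Wedin's theorem with gap $\mathcal{T}_k$), with the identical $2\sqrt{2}\leq 3$ constant accounting. The only cosmetic differences are which factor you peel off first in the rank-$1$ split and whether Wedin is invoked for the two sines jointly (as in the paper) or individually, neither of which changes the argument.
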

Plugging \ref{lemma 2} into \ref{lemma 1} gives a recurrence that depends purely on $\|\bm{\delta}_k\|_F$ and the spectrum of $\mathbf{Y}$:
\begin{equation}\label{lemma2and3}
\begin{aligned}
        \left\|\mathbf{Y}_{k+1}-\mathbf{Y}^\star_{k+1}\right\|_F  &\leq 
        \left\|\mathbf{Y}_k-\mathbf{Y}^\star_k\right\|_F +
        \left(\frac{3 \sigma_k^\star}{\mathcal{T}_k} + 1\right)\left\|\mathbf{Y}_k^\star - \mathbf{Y}_k\right\|_F
        + \left\|\mathbf{\bm{\delta}}_k\mathbf{X}\right\|_F \\& \leq \left(\frac{3 \sigma_k^\star}{\mathcal{T}_k} + 2\right)\left\|\mathbf{Y}_k^\star - \mathbf{Y}_k\right\|_F
        + \left\|\mathbf{\bm{\delta}}_k\mathbf{X}\right\|_F.
        \end{aligned}
\end{equation}
Unrolling this recurrence gives a closed-form upper bound for $ \|\mathbf{Y}_{k}-\mathbf{Y}^\star_{k}\|_F$. Combining this upper bound with \ref{lem:sum_of_rank_1_diff} and plugging the result into \ref{ineq: 11} gives us and upper bound for $\|\mathbf{Y} - \sum_{k=1}^r \mathbf{b}_k \mathbf{a}^\top_k \mathbf{X}\|_F$.

\section{Missing proofs from Appendix~\ref{app:1} and a generalized theorem}

\subsection{Proof of Lemma \ref{lem:sequential_decomposition}} \label{app:lemma1}
\begin{proof}
    Let $\left\{ (\mathbf{a}_k^\star, \mathbf{b}_k^\star) \right\}_{k=1}^{r}$ denote the outputs of Algorithm \ref{alg:exact-main-alg}. According to the third line in Algorithm \ref{alg:exact-main-alg}, starting from \( k = 1 \), we have:
\begin{equation*}
    (\mathbf{a}_1^\star, \mathbf{b}_1^\star) = \argmin_{\mathbf{a} \in \mathbb{R}^d, \mathbf{b} \in \mathbb{R}^m} \frac{1}{2} \left\| \mathbf{Y}^\star_1 - \mathbf{b} \mathbf{a}^\top \mathbf{X} \right\|_F^2.
\end{equation*}
Given that $\mathbf{b} \mathbf{a}^\top$ is a rank-1 matrix, the product $\mathbf{b} \mathbf{a}^\top \mathbf{X}$ is also rank-1. Therefore, according to the Eckart-Young-Mirsky theorem, the best rank-1 approximation of $\mathbf{Y}_1^\star =\mathbf{Y} = \sum_{i = 1}^p \sigma_i^\star \mathbf{u}_i^\star \mathbf{v}_i^{\star\top}$, based on its Singular Value Decomposition, is $\sigma_1^\star \mathbf{u}_1^\star \mathbf{v}_1^{\star\top}$. Thus, $\mathbf{b}_1^\star \mathbf{a}_1^{\star \top} \mathbf{X} = \sigma_1^\star \mathbf{u}_1^\star \mathbf{v}_1^{\star\top}$. 
Next, the deflation step in line 4 yields:
\begin{equation*}
    \mathbf{Y}^\star_{2} := \mathbf{Y}^\star_1 - \mathbf{b}_1^\star \mathbf{a}_1^{\star \top} \mathbf{X} = \mathbf{Y}^\star_1 - \sigma_1^\star \mathbf{u}_1^\star \mathbf{v}_1^{\star\top} = \sum_{i=2}^p \sigma_i^\star \mathbf{u}_i^\star \mathbf{v}_i^{\star\top}.
\end{equation*}
Now, assuming by induction that $\mathbf{Y}^\star_{k}=\sum_{i = k}^p \sigma_i^\star \mathbf{u}_i^\star \mathbf{v}_i^{\star\top}$ and $\mathbf{b}_k^\star \mathbf{a}_k^{\star \top} \mathbf{X} = \sigma_k^\star \mathbf{u}_k^\star \mathbf{v}_k^{\star\top}$, we obtain for \( k+1 \):
\begin{equation*}
    \mathbf{Y}^\star_{k+1} = \mathbf{Y}^\star_k - \mathbf{b}_k^\star \mathbf{a}_k^{\star \top} \mathbf{X} = \sum_{i = k}^p \sigma_i^\star \mathbf{u}_i^\star \mathbf{v}_i^{\star\top} - \sigma_k^\star \mathbf{u}_k^\star \mathbf{v}_k^{\star\top}=  \sum_{i = k+1}^p \sigma_i^\star \mathbf{u}_i^\star \mathbf{v}_i^{\star\top}.
\end{equation*}
Returning to line 3, we have:
\begin{equation*}
    (\mathbf{a}_{k+1}^\star, \mathbf{b}_{k+1}^\star) = \argmin_{\mathbf{a} \in \mathbb{R}^d, \mathbf{b} \in \mathbb{R}^m} \frac{1}{2} \left\| \mathbf{Y}^\star_{k+1} - \mathbf{b} \mathbf{a}^\top \mathbf{X} \right\|_F^2.
\end{equation*}
By the same reasoning as for \( k=1 \), $\mathbf{b}_{k+1}^\star \mathbf{a}_{k+1}^{\star\top}\mathbf{X}$ is equal to the first rank-1 approximation of $\mathbf{Y}_{k+1}^\star = \sum_{i = k+1}^p \sigma_i^\star \mathbf{u}_i^\star \mathbf{v}_i^{\star\top}$, which is $\sigma_{k+1}^\star \mathbf{u}_{k+1}^\star \mathbf{v}_{k+1}^{\star\top}$.
\end{proof}
\subsection{Proof of Lemma \ref{lem:matrix_diff_propagate}}
\begin{proof}
    As we defined in (\ref{eq:def-delta-k}), we'll have:
    \begin{equation*}
    \mathbf{b}_k \mathbf{a}^\top_k =  \overline{\mathbf{b}}_k \overline{\mathbf{a}}_k^ \top+\bm{\delta}_k
\end{equation*}
Plugging this into (\ref{eq:rank1_deflation}) we'll get:
\begin{equation*}
    \mathbf{Y}_{k+1} = \mathbf{Y}_k - (\overline{\mathbf{b}}_k \overline{\mathbf{a}}_k^ \top+\bm{\delta}_k) \mathbf{X}
\end{equation*}
Now let $\mathbf{\Delta}_k = \left\|\mathbf{Y}_k - \mathbf{Y}_k^\star\right\|_F$
\begin{equation*}
    \begin{aligned}
        \mathbf{\Delta}_{k+1} = &\|\mathbf{Y}_{k+1} - \mathbf{Y}_{k+1}^\star\|_F \\
        =& \|\mathbf{Y}_k - (\overline{\mathbf{b}}_k \overline{\mathbf{a}}_k^ \top+\bm{\delta}_k) \mathbf{X} - \mathbf{Y}_k^\star + \mathbf{b}_k^\star \mathbf{a}_k^{\star\top}\mathbf{X}\|_F \\
        =& \|\mathbf{Y}_k - \mathbf{Y}_k^\star
        -(\overline{\mathbf{b}}_k \overline{\mathbf{a}}_k^ \top-\mathbf{b}_k^\star \mathbf{a}_k^{\star\top}) \mathbf{X}-\bm{\delta}_k\mathbf{X}\|_F \\
        \leq & 
        \|\mathbf{Y}_k - \mathbf{Y}_k^\star\|_F + \|(\overline{\mathbf{b}}_k \overline{\mathbf{a}}_k^ \top-\mathbf{b}_k^\star \mathbf{a}_k^{\star\top}) \mathbf{X}\|_F + \|\bm{\delta}_k\mathbf{X}\|_F \\
        \leq & 
        \mathbf{\Delta}_k + \|\overline{\mathbf{b}}_k \overline{\mathbf{a}}_k^ \top\mathbf{X}-\mathbf{b}_k^\star \mathbf{a}_k^{\star\top} \mathbf{X}\|_F + \left\|\bm{\delta}_k\mathbf{X}\right\|_F
    \end{aligned}
\end{equation*}
\end{proof}
\subsection{Proof of Lemma \ref{lem:sum_of_rank_1_diff}}
\begin{proof}
  We start by observing:
   \begin{equation*}
   \begin{aligned}
       \left\|\mathbf{b}_k^\star \mathbf{a}_k^{\star\top}\mathbf{X} - \overline{\mathbf{b}}_k \overline{\mathbf{a}}_k^\top \mathbf{X}\right\|_F &= \|\sigma_k^\star \mathbf{u}_k^\star \mathbf{v}_k^{\star\top} - \sigma_{1_k} \mathbf{u}_{1_k} \mathbf{v}_{1_k}^\top\|_F\\
       & = \|\sigma_k^\star (\mathbf{u}_k^\star \mathbf{v}_k^{\star\top} - \mathbf{u}_{1_k} \mathbf{v}_{1_k}^\top) + (\sigma_k^\star - \sigma_{1_k}) \mathbf{u}_{1_k} \mathbf{v}_{1_k}^\top\|_F \\
       &\leq \|\sigma_k^\star\|_F \cdot \|\mathbf{u}_k^\star \mathbf{v}_k^{\star\top} - \mathbf{u}_{1_k} \mathbf{v}_{1_k}^\top\|_F + \|\sigma_k^\star - \sigma_{1_k}\|_F \cdot \underbrace{\|\mathbf{u}_{1_k}\|_2}_{=1} \cdot \underbrace{\|\mathbf{v}_{1_k}^\top\|_2}_{=1} \\
       &\leq |\sigma_k^\star| \cdot \|\mathbf{u}_k^\star \mathbf{v}_k^{\star\top} - \mathbf{u}_{1_k} \mathbf{v}_{1_k}^\top\|_F + |\sigma_k^\star - \sigma_{1_k}|.
   \end{aligned}
   \end{equation*}
   
   Therefore,
   \begin{equation} \label{ineq: rank-1-difference}
   \|\mathbf{b}_k^\star \mathbf{a}_k^{\star\top}\mathbf{X} - \overline{\mathbf{b}}_k \overline{\mathbf{a}}_k^\top \mathbf{X}\|_F \leq \sigma_k^\star \cdot \|\mathbf{u}_k^\star \mathbf{v}_k^{\star\top} - \mathbf{u}_{1_k} \mathbf{v}_{1_k}^\top\|_F + |\sigma_k^\star - \sigma_{1_k}|.
   \end{equation}

Now, we express the term \(\|\mathbf{u}_k^\star \mathbf{v}_k^{\star\top} - \mathbf{u}_{1_k} \mathbf{v}_{1_k}^\top\|_F\) as follows:
\begin{equation*}
    \begin{aligned}
        \|\mathbf{u}_k^\star \mathbf{v}_k^{\star\top} - \mathbf{u}_{1_k} \mathbf{v}_{1_k}^\top\|_F &= \|\mathbf{u}_k^\star (\mathbf{v}_k^{\star\top} - \mathbf{v}_{1_k}^\top) + (\mathbf{u}_k^\star - \mathbf{u}_{1_k}) \mathbf{v}_{1_k}^\top\|_F\\
        & \leq \underbrace{\|\mathbf{u}_k^\star\|_2}_{=1} \cdot \|\mathbf{v}_k^{\star\top} - \mathbf{v}_{1_k}^\top\|_2 + \| \mathbf{u}_k^\star - \mathbf{u}_{1_k} \|_2 \cdot \underbrace{\|\mathbf{v}_{1_k}^\top\|_2}_{=1} \\
        &\leq \|\mathbf{v}_k^{\star\top} - \mathbf{v}_{1_k}^\top\|_2 + \|\mathbf{u}_k^\star - \mathbf{u}_{1_k}\|_2
    \end{aligned}
\end{equation*}

Substituting this result into inequality \eqref{ineq: rank-1-difference} gives:
\begin{equation}{\label{ineq: rank-1 distance}}
\|\mathbf{b}_k^\star \mathbf{a}_k^{\star\top}\mathbf{X} - \overline{\mathbf{b}}_k \overline{\mathbf{a}}_k^\top \mathbf{X}\|_F 
    \leq \sigma_k^\star \left(\|\mathbf{v}_k^{\star\top} - \mathbf{v}_{1_k}^\top\|_2 + \|\mathbf{u}_k^\star - \mathbf{u}_{1_k}\|_2\right) + \underbrace{|\sigma_k^\star - \sigma_{1_k}|}_{\leq \|\mathbf{Y}_k^\star - \mathbf{Y}_k\|_F}.
\end{equation}

The last inequality follows from Weyl's theorem. To complete the bound, we will now find an upper bound for \(\|\mathbf{v}_k^\star - \mathbf{v}_{1_k}\| + \|\mathbf{u}_k^\star - \mathbf{u}_{1_k}\|\). To do so, we will define two angles:

\begin{equation*}
    \alpha:=\angle\left\{\mathbf{v}_k^\star,\mathbf{v}_{1_k}\right\} \quad \text{and} \quad \beta:=\angle\left\{\mathbf{u}_k^\star,\mathbf{u}_{1_k}\right\}
\end{equation*}

We know that 
\begin{equation*}
    \cos \alpha = \dfrac{\mathbf{v}_k^{\star\top}\mathbf{v}_{1_k}}{\|\mathbf{v}_k^\star\|_2\|\mathbf{v}_{1_k}\|_2}= \mathbf{v}_k^{\star\top}\mathbf{v}_{1_k} \leq 1
\end{equation*}
since $\|\mathbf{v}_k^\star\|_2=1$ and $\|\mathbf{v}_{1_k}\|_2=1$. Therefore 
\begin{equation*}
    \sin^2{\alpha} = 1 - \cos^2{\alpha} 
    = 1 - (\mathbf{v}_k^{\star\top}\mathbf{v}_{1_k})^2 \Rightarrow  (\mathbf{v}_k^{\star\top}\mathbf{v}_{1_k})^2= 1- \sin^2{\alpha}
\end{equation*}
We use the expansion of the square of $\|\mathbf{v}_k^\star - \mathbf{v}_{1_k}\|_2$ to get:
\begin{equation*}
    \begin{aligned}
        \|\mathbf{v}_k^\star - \mathbf{v}_{1_k}\|_2^2 & = \underbrace{\|\mathbf{v}_k^\star\|_2^2}_{\text{$=1$}} +\underbrace{\|\mathbf{v}_{1_k}\|_2^2}_{\text{$=1$}} - 2 (\mathbf{v}_k^{\star\top}\mathbf{v}_{1_k})\\
        & = 2- 2(\mathbf{v}_k^{\star\top}\mathbf{v}_{1_k})\\
        & \leq 2- 2(\mathbf{v}_k^{\star\top}\mathbf{v}_{1_k})^2 = 2-2(1-\sin^2{\bm{\alpha}})\\
        & = 2 \sin^2{\alpha}
    \end{aligned}
\end{equation*}
Thus
\begin{equation}{\label{ineq: 12}}
     \|\mathbf{v}_k^\star - \mathbf{v}_{1_k}\|_2^2 \leq 2 \sin^2{\alpha}
\end{equation}
Following the same procedure for $\beta$, we start with:
\begin{equation*}
    \cos \beta = \dfrac{\mathbf{u}_k^{\star\top}\mathbf{u}_{1_k}}{\|\mathbf{u}_k^\star\|_2\|\mathbf{u}_{1_k}\|_2}= \mathbf{u}_k^{\star\top}\mathbf{u}_{1_k} \leq 1
\end{equation*}
since $\|\mathbf{u}_k^\star\|_2=1$ and $\|\mathbf{u}_{1_k}\|_2=1$. Therefore 
\begin{equation*}
    \sin^2{\beta} = 1 - \cos^2{\beta} 
    = 1 - (\mathbf{u}_k^{\star\top}\mathbf{u}_{1_k})^2 \Rightarrow  (\mathbf{u}_k^{\star\top}\mathbf{u}_{1_k})^2= 1- \sin^2{\beta}
\end{equation*}
We use the expansion of the square of $\|\mathbf{u}_k^\star - \mathbf{u}_{1_k}\|_2$ to get:
\begin{equation*}
    \begin{aligned}
    \left\|\mathbf{u}_k^\star - \mathbf{u}_{1_k}\right\|_2^2 & = \underbrace{\|\mathbf{u}_k^\star\|_2^2}_{\text{$=1$}} +\underbrace{\|\mathbf{u}_{1_k}\|_2^2}_{\text{$=1$}} - 2 (\mathbf{u}_k^{\star\top}\mathbf{u}_{1_k})\\
    & = 2- 2(\mathbf{u}_k^{\star\top}\mathbf{u}_{1_k})\\
    & \leq 2- 2(\mathbf{u}_k^{\star\top}\mathbf{u}_{1_k})^2\\
    & = 2-2(1-\sin^2{\beta}) = 2 \sin^2{\beta}
    \end{aligned}
\end{equation*}
Thus
\begin{equation}{\label{ineq:13}}
     \|\mathbf{u}_k^\star - \mathbf{u}_{1_k}\|_2^2 \leq 2 \sin^2{\beta}
\end{equation}
Using the fact that $a + b \leq \sqrt{2a^2 + 2b^2}$, \ref{ineq: 12}, and \ref{ineq:13} we'll get:
\begin{equation}{\label{ineq: 14}}
    \left\|\mathbf{v}_k^\star-\mathbf{v}_{1_k}\right\|_2+ \|\mathbf{u}_k^\star-\mathbf{u}_{1_k}\|_2 \leq \sqrt{2\|\mathbf{v}_k^\star-\mathbf{v}_{1_k}\|_2^2 + 2\|\mathbf{u}_k^\star-\mathbf{u}_{1_k}\|_2^2} \leq 2\sqrt{\sin^2{\alpha}+ \sin^2{\beta}}
\end{equation}
To proceed, we observe that under the assumption \( \|Y^\star_k - Y_k\|_2 < \min_{j>k}|\sigma^\star_k - \sigma^\star_j| \), Weyl's inequality implies that
\begin{equation*}
    \min_{j>k} |\sigma^\star_k - \sigma_{j_k}| \geq \min_{j>k} |\sigma^\star_k - \sigma^\star_j| - |\sigma_{j_k} - \sigma^\star_j| > \|Y^\star_k - Y_k\|_2 - \|Y^\star_k - Y_k\|_2 = 0.
\end{equation*}
We know \( \sigma^\star_k > 0 \). Since we define \( \mathcal{T}_k := \min\left\{\min_{j>k} |\sigma^\star_k - \sigma_{j_k}|, \sigma^\star_k\right\} \), and both terms are positive, \( \mathcal{T}_k \) is also positive. This satisfies the conditions of Wedin's theorem, allowing us to apply it.

Now since $\alpha:=\angle\left\{\mathbf{v}_k^\star,\mathbf{v}_{1_k}\right\}$ and $ \beta:=\angle\left\{\mathbf{u}_k^\star,\mathbf{u}_{1_k}\right\}$, and because of the fact that $\mathbf{v}_k^\star$ and $\mathbf{u}_k^\star$ are the first right and left singular vectors of $\mathbf{Y}_k^\star$, and $\mathbf{v}_{1_k}$ and $\mathbf{u}_{1_k}$ are the first right and left singular vectors of $\mathbf{Y}_k$, we can apply Wedin's theorem to get:
\begin{equation*}
\begin{aligned}
    \sin^2{\alpha} + \sin^2{\beta} &\leq \dfrac{
    \|\mathbf{v}_k^{\star \top }(\mathbf{Y}_k^\star - \mathbf{Y}_k)\|_F^2 + \|(\mathbf{Y}_k^\star - \mathbf{Y}_k)\mathbf{u}_k^\star\|_F^2}{\mathcal{T}_k^2}\\
    & \leq 
    \dfrac{
    \|\mathbf{v}_k^{\star \top }\|^2_F\|\mathbf{Y}_k^\star - \mathbf{Y}_k\|_F^2 + \|\mathbf{Y}_k^\star - \mathbf{Y}_k\|^2_F\|\mathbf{u}_k^\star\|_F^2}{\mathcal{T}_k^2}\\
    & \leq \dfrac{2\|\mathbf{Y}_k^\star - \mathbf{Y}_k\|^2_F}{\mathcal{T}_k^2}
    \end{aligned}
\end{equation*}
Plugging this in \ref{ineq: 14}, we'll have:
\begin{equation*}
    \|\mathbf{v}_k^\star-\mathbf{v}_{1_k}\|+ \|\mathbf{u}_k^\star-\mathbf{u}_{1_k}\| \leq 2\sqrt{2}\dfrac{\|\mathbf{Y}_k^\star - \mathbf{Y}_k\|_F}{\mathcal{T}_k} \leq 3\dfrac{\|\mathbf{Y}_k^\star - \mathbf{Y}_k\|_F}{\mathcal{T}_k}
\end{equation*}
Plugging this to \ref{ineq: rank-1 distance} and using Weyl's inequality, we'll get:
\begin{equation}
    \|\mathbf{b}_k^\star \mathbf{a}_k^{\star\top}\mathbf{X} - \overline{\mathbf{b}}_k \overline{\mathbf{a}}_k^\top \mathbf{X}\|_F 
    \leq \sigma_k^\star \left(3\dfrac{\|\mathbf{Y}_k^\star - \mathbf{Y}_k\|_F}{\mathcal{T}_k}\right) + \|\mathbf{Y}_k^\star - \mathbf{Y}_k\|_F =  \left(\frac{3 \sigma_k^\star}{\mathcal{T}_k} + 1\right)\|\mathbf{Y}_k^\star - \mathbf{Y}_k\|
\end{equation}
\end{proof}
\subsection{Proof of a generalization of Theorem \ref{thm:main_theorem_1}}
In this section, we proof a more general form of Theorem~\ref{thm:main_theorem_1}, stated below.
\begin{theorem}\label{thm:main_theorem_1_gen}
    Let $\left\{\left(\mathbf{a}_k,\mathbf{b}_k\right)\right\}_{k=1}^r$ be the output of Algorithm~\ref{alg:inexact-main-alg}. Let $\bm{\delta}_k$ be given as in Definition~\ref{def:num_err} with $\left\|\bm{\delta}_k\right\|_F > 0$. Let $\mathbf{Y} = \sum_{k=1}^{r^\star}\sigma_{k}^\star\mathbf{u}_k^\star\mathbf{v}_k^{\star\top}$ be the SVD of $\mathbf{Y}$, with $\sigma_1^\star\geq \dots \geq \sigma_{r^\star}^\star$ and define $\mathbf{Y}^{(r')} = \sum_{k=1}^{r'}\sigma_{k}^\star\mathbf{u}_k^\star\mathbf{v}_k^{\star\top}$. Define the minimum singular value gap $\mathcal{T}^\star_k := \min\left\{\min_{j>k} |\sigma^\star_k - \sigma^\star_j|, \sigma^\star_k\right\}$. Define error bound $E(k)$ as:
    \[
        E(k):=\sigma_{\max}(\mathbf{X})\sum_{k'=0}^{k-1}\left\|\bm{\delta}_{k'}\right\|_F\prod_{j=k'+1}^{k-1}\left(\dfrac{6\sigma^\star_j}{\mathcal{T}_j^\star}+2\right)
    \]
     If $\left\|\bm{\delta}_k\right\|_F$'s are small enough such that $E(k) < \dfrac{1}{2}\min_{j>k}|\sigma^\star_k - \sigma^\star_j|$, then for any $r \leq r' \leq r^\star$, the output of Algorithm \ref{alg:inexact-main-alg} satisfies:
    \begin{equation}
    \begin{aligned}
        \left\|\mathbf{Y}^{(r')}-\sum_{k=1}^r \mathbf{b}_k \mathbf{a}_k^ \top \mathbf{X}\right\|_F
         \leq \sum_{k=r+1}^{r'}\sigma_k^\star + \sigma_{\max}(\mathbf{X})\sum_{k=1}^{r}\sum_{k'=0}^{k} \left\|\mathbf{\bm{\delta}}_{k'}\right\|_F \prod_{j = k'+1}^{k}\left(2 + \frac{6\sigma_j^\star}{\mathcal{T}_k^\star} \right)
        \end{aligned}
    \end{equation}
\end{theorem}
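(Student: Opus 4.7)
The plan is to mirror the proof of Theorem \ref{thm:main_theorem_1} essentially verbatim, since replacing the target $\mathbf{Y}$ with its rank-$r'$ truncation $\mathbf{Y}^{(r')}$ only alters one of the three pieces in the triangle-inequality decomposition. Concretely, I would begin by writing
\begin{align*}
    \left\|\mathbf{Y}^{(r')} - \sum_{k=1}^r \mathbf{b}_k \mathbf{a}_k^\top \mathbf{X}\right\|_F
    &\leq \underbrace{\left\|\mathbf{Y}^{(r')} - \sum_{k=1}^r \mathbf{b}_k^\star \mathbf{a}_k^{\star\top}\mathbf{X}\right\|_F}_{(\mathrm{I})} \\
    &\quad + \underbrace{\sum_{k=1}^r \left\|\mathbf{b}_k^\star \mathbf{a}_k^{\star\top}\mathbf{X} - \overline{\mathbf{b}}_k \overline{\mathbf{a}}_k^\top\mathbf{X}\right\|_F}_{(\mathrm{II})} + \underbrace{\sum_{k=1}^r \left\|\bm{\delta}_k\mathbf{X}\right\|_F}_{(\mathrm{III})},
\end{align*}
where (III) follows from Definition \ref{def:num_err} and is immediately bounded by $\sigma_{\max}(\mathbf{X})\sum_{k=1}^r \|\bm{\delta}_k\|_F$.

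For term (II) I would replay the argument from Theorem \ref{thm:main_theorem_1} without modification: plug Lemma \ref{lem:sum_of_rank_1_diff} into Lemma \ref{lem:matrix_diff_propagate} to obtain the one-step recursion $\|\mathbf{Y}_{k+1} - \mathbf{Y}_{k+1}^\star\|_F \leq \bigl(\tfrac{3\sigma_k^\star}{\mathcal{T}_k^\star} + 2\bigr)\|\mathbf{Y}_k - \mathbf{Y}_k^\star\|_F + \|\bm{\delta}_k\mathbf{X}\|_F$ displayed in \eqref{lemma2and3}, and unroll it from the base case $\mathbf{Y}_1 = \mathbf{Y}_1^\star = \mathbf{Y}$. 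The hypothesis $E(k) < \tfrac12 \min_{j>k}|\sigma_k^\star - \sigma_j^\star|$ (via Weyl's inequality) guarantees that the spectral separation condition of Lemma \ref{lem:sum_of_rank_1_diff} is preserved throughout the iteration, so the unrolling remains valid. Summing the resulting per-step bound over $k = 1,\dots,r$ and applying $\|\bm{\delta}_{k'}\mathbf{X}\|_F \leq \sigma_{\max}(\mathbf{X})\|\bm{\delta}_{k'}\|_F$ yields the double-sum expression $\sigma_{\max}(\mathbf{X})\sum_{k=1}^r \sum_{k'=0}^{k}\|\bm{\delta}_{k'}\|_F \prod_{j=k'+1}^k \bigl(2 + 6\sigma_j^\star/\mathcal{T}_k^\star\bigr)$ that appears on the right-hand side of the claim.

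The only new ingredient is term (I). By Lemma \ref{lem:sequential_decomposition}, $\sum_{k=1}^r \mathbf{b}_k^\star \mathbf{a}_k^{\star\top}\mathbf{X} = \sum_{k=1}^r \sigma_k^\star \mathbf{u}_k^\star \mathbf{v}_k^{\star\top}$, so subtracting this from $\mathbf{Y}^{(r')} = \sum_{k=1}^{r'} \sigma_k^\star \mathbf{u}_k^\star \mathbf{v}_k^{\star\top}$ leaves $\sum_{k=r+1}^{r'} \sigma_k^\star \mathbf{u}_k^\star \mathbf{v}_k^{\star\top}$, whose Frobenius norm equals $\bigl(\sum_{k=r+1}^{r'}(\sigma_k^\star)^2\bigr)^{1/2}$ by orthonormality of the singular triples and is further bounded by $\sum_{k=r+1}^{r'}\sigma_k^\star$. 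Combining the bounds on (I), (II), and (III) yields the stated inequality.

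I do not anticipate a substantive obstacle, since the generalization is essentially bookkeeping: the deflation sequences $\{\mathbf{Y}_k\}$, $\{\mathbf{Y}_k^\star\}$, the singular-value gaps $\mathcal{T}_k^\star$, and all of Lemmas \ref{lem:matrix_diff_propagate}--\ref{lem:sum_of_rank_1_diff} are intrinsic to $\mathbf{Y}$ and its SVD, and do not depend on the choice of $r'$. The only points worth explicit verification are that the elementary inequality $\sqrt{\sum (\sigma_k^\star)^2} \leq \sum \sigma_k^\star$ is enough to recover the stated form for (I), and that the spectral-stability hypothesis $E(k) < \tfrac12\min_{j>k}|\sigma_k^\star - \sigma_j^\star|$ is used only in controlling (II), so it can be imposed once and carried through the entire argument unchanged.
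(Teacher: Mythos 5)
Your proposal is correct and follows essentially the same route as the paper's own proof: the identical three-term triangle-inequality decomposition, the same treatment of term (I) via Lemma~\ref{lem:sequential_decomposition} (your explicit $\sqrt{\sum(\sigma_k^\star)^2}\leq\sum\sigma_k^\star$ remark is a slightly more careful version of what the paper asserts directly), and the same recursion-plus-unrolling for term (II), including the use of Weyl's inequality under the hypothesis $E(k)<\tfrac12\min_{j>k}|\sigma_k^\star-\sigma_j^\star|$ to pass from the perturbed gap $\mathcal{T}_k$ to $\tfrac12\mathcal{T}_k^\star$, which is where the constant $6$ comes from.
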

Notice that Theorem~\ref{thm:main_theorem_1_gen} naturally implies Theorem~\ref{thm:main_theorem_1} by taking $r' = r^\star$. We give the proof of Theorem~\ref{thm:main_theorem_1_gen} below.

\begin{proof}
We decompose the approximation error as
\begin{equation}
    \label{eq: Thm3.1}
    \begin{aligned}
        \left\|\mathbf{Y}^{(r')} - \sum_{k=1}^r \mathbf{b}_k \mathbf{a}^\top_k \mathbf{X}\right\|_F &\leq \left\|\mathbf{Y}^{(r')} - \sum_{k=1}^r \mathbf{b}_k^\star \mathbf{a}^{\star\top}_k \mathbf{X}\right\|_F+ \left\|\sum_{k=1}^r \mathbf{b}_k^\star \mathbf{a}^{\star\top}_k \mathbf{X} - \sum_{k=1}^r \overline{\mathbf{b}}_k \overline{\mathbf{a}}^\top_k \mathbf{X}\right\|_F \\&+  \left\|\sum_{k=1}^r \overline{\mathbf{b}}_k \overline{\mathbf{a}}^\top_k \mathbf{X} - \sum_{k=1}^r \mathbf{b}_k \mathbf{a}^\top_k \mathbf{X}\right\|_F
    \end{aligned}
\end{equation}
By Lemma~\ref{lem:sequential_decomposition}, we must have that
\begin{equation}
    \label{eq:bound_t1}
    \left\|\mathbf{Y}^{(r')} - \sum_{k=1}^r \mathbf{b}_k^\star \mathbf{a}^{\star\top}_k \mathbf{X}\right\|_F = \left\|\sum_{k=1}^{r'}\sigma_k^\star\mathbf{u}_k^\star\mathbf{v}_k^\star - \sum_{k=1}^r\sigma_k^\star\mathbf{u}_k^\star\mathbf{v}_k^\star\right\|_F \leq \sum_{k=r+1}^{r'}\sigma_k^\star
\end{equation}
Moreover, by Definition~\ref{def:num_err}, we have
\begin{equation}
    \label{eq:bound_t2}
    \left\|\sum_{k=1}^r \overline{\mathbf{b}}_k \overline{\mathbf{a}}^\top_k \mathbf{X} - \sum_{k=1}^r \mathbf{b}_k \mathbf{a}^\top_k \mathbf{X}\right\|_F \leq \sum_{k=1}^{r}\left\|\bm{\delta}_k\mathbf{X}\right\|_F\leq \sigma_{\max}(\mathbf{X})\sum_{k=1}^{r}\left\|\bm{\delta}_k\right\|_F
\end{equation}
Therefore, it suffice to study the second term in (\ref{eq: Thm3.1}). Towards this end, we use the result in Lemma~\ref{lem:matrix_diff_propagate} and Lemma~\ref{lem:sum_of_rank_1_diff} to obtain
\begin{equation}
    \label{eq:y_prop}
    \left\|\mathbf{Y}_{k+1}-\mathbf{Y}^\star_{k+1}\right\|_F \leq \left\|\mathbf{Y}_k-\mathbf{Y}^\star_k\right\|_F + \left\|\mathbf{b}_k^\star \mathbf{a}_k^{\star\top}\mathbf{X} - \overline{\mathbf{b}}_k \overline{\mathbf{a}}_k^\top\mathbf{X}\right\|_F  + \left\|\mathbf{\bm{\delta}}_k\mathbf{X}\right\|_F
\end{equation}
\begin{equation}
    \label{eq:rank_1_diff_y}
    \left\|\mathbf{b}_k^\star \mathbf{a}_k^{\star\top}\mathbf{X} - \overline{\mathbf{b}}_k \overline{\mathbf{a}}_k^\top \mathbf{X}\right\|_F \leq 
    \left(\frac{3 \sigma_k^\star}{\mathcal{T}_k} + 1\right)\left\|\mathbf{Y}_k^\star - \mathbf{Y}_k\right\|_F.
\end{equation}
Combining (\ref{eq:y_prop}) and (\ref{eq:rank_1_diff_y}), we have that
\[
    \left\|\mathbf{Y}_{k+1}-\mathbf{Y}^\star_{k+1}\right\|_F \leq \left(\frac{3 \sigma_k^\star}{\mathcal{T}_k} + 2\right)\left\|\mathbf{Y}_k^\star - \mathbf{Y}_k\right\|_F + \left\|\mathbf{\bm{\delta}}_k\mathbf{X}\right\|_F
\]
Let the sequence $\left\{Q_k\right\}_{k=1}^{r}$ be defined as
\[
    Q_{k+1} = a_kQ_k + b_k;\quad Q_0 = 0;\quad a_k = 2 + \frac{3\sigma_k^\star}{\mathcal{T}_k};\quad b_k = \|\mathbf{\bm{\delta}}_k\mathbf{X}\|
\]
Then by inequality \ref{lemma2and3} we must have that $\|\mathbf{Y}_k - \mathbf{Y}_k^\star\|_F \leq Q_k $ for all $k$. Invoking lemma \ref{lem:prod_sum_seq} gives:
\begin{equation} \label{ineq 24}
\begin{aligned}
    \|\mathbf{Y}_k - \mathbf{Y}_k^\star\|_F &\leq \sum_{k'=0}^{k-1} \left\|\mathbf{\bm{\delta}}_{k'}\mathbf{X}\right\|_F \prod_{j = k'+1}^{k-1}\left(2 + \frac{3\sigma_j^\star}{\mathcal{T}_k} \right)\\
    &\leq \underbrace{\sigma_{\max}(\mathbf{X})\sum_{k'=0}^{k-1}\left\|\bm{\delta}_{k'}\right\|_F\prod_{j=k'+1}^{k-1}\left(2+\dfrac{3\sigma^\star_j}{\mathcal{T}_j}\right)}_{\text{$:=\hat{E}(k)$}}
\end{aligned}
\end{equation}
We define the right hand side to be equal to $\hat{E}(k)$. Enforcing $\hat{E}(k) \leq \dfrac{1}{2} \min_{j>k}|\sigma_k^\star-\sigma_j^\star|$ gives that
\begin{equation}
    \label{eq:ub_y}
    \|\mathbf{Y}_k - \mathbf{Y}_k^\star\|_2 \leq \dfrac{1}{2} \min_{j>k}|\sigma_k^\star-\sigma_j^\star| \leq \min_{j>k}|\sigma_k^\star-\sigma_j^\star|
\end{equation}
So the condition in Lemma \ref{lem:sum_of_rank_1_diff} is met. Combining (\ref{eq:rank_1_diff_y}) and (\ref{ineq 24}), and notice that $\left\|\bm{\delta}_k\mathbf{X}\right\|_F \leq \sigma_{\max}(\mathbf{X})\left\|\bm{\delta}_k\right\|_F$, we have:
\begin{equation}
    \label{ineq: 21}
    \left\|\mathbf{b}_k^\star \mathbf{a}_k^{\star\top}\mathbf{X} - \overline{\mathbf{b}}_k \overline{\mathbf{a}}_k^\top \mathbf{X}\right\|_F \leq \sigma_{\max}(\mathbf{X})
    \left(\frac{3 \sigma_k^\star}{\mathcal{T}_k} + 1\right)
    \sum_{k'=0}^{k-1} \left\|\mathbf{\bm{\delta}}_{k'}\right\|_F \prod_{j = k'+1}^{k-1}\left(2 + \frac{3\sigma_j^\star}{\mathcal{T}_k} \right)
\end{equation}
Notice that $1 + \frac{3\sigma_k^\star}{\mathcal{T}_k} \leq 2 + \frac{3\sigma_k^\star}{\mathcal{T}_k}$. Therefore, \ref{ineq: 21} becomes:
\begin{equation}
    \label{ineq: 22}
    \left\|\mathbf{b}_k^\star \mathbf{a}_k^{\star\top}\mathbf{X} - \overline{\mathbf{b}}_k \overline{\mathbf{a}}_k^\top \mathbf{X}\right\|_F \leq 
    \sigma_{\max}(\mathbf{X})\sum_{k'=0}^{k-1} \left\|\mathbf{\bm{\delta}}_{k'}\right\|_F \prod_{j = k'+1}^{k}\left(2 + \frac{3\sigma_j^\star}{\mathcal{T}_k} \right)
\end{equation}
Combining (\ref{eq:bound_t1}), (\ref{eq:bound_t2}), and (\ref{ineq: 22}) gives
\begin{align*}
    \left\|\mathbf{Y}^{(r')} - \sum_{k=1}^r \mathbf{b}_k \mathbf{a}^\top_k \mathbf{X}\right\|_F & \leq \sum_{k=r+1}^{r'}\sigma_k^\star + \sigma_{\max}(\mathbf{X})\sum_{k=1}^{r}\left\|\bm{\delta}_k\right\|_F\\
    &\quad\quad\quad+ \sigma_{\max}(\mathbf{X})\sum_{k=1}^{r}\sum_{k'=0}^{k-1} \left\|\mathbf{\bm{\delta}}_{k'}\right\|_F \prod_{j = k'+1}^{k}\left(2 + \frac{3\sigma_j^\star}{\mathcal{T}_k} \right)\\
    & = \sum_{k=r+1}^{r'}\sigma_k^\star + \sigma_{\max}(\mathbf{X})\sum_{k=1}^{r}\sum_{k'=0}^{k} \left\|\mathbf{\bm{\delta}}_{k'}\right\|_F \prod_{j = k'+1}^{k}\left(2 + \frac{3\sigma_j^\star}{\mathcal{T}_k} \right)
\end{align*}
Finally, due to (\ref{eq:ub_y}) and the Weyl's inequality, we must have that $\left|\sigma_{j_k} - \sigma_k^\star\right|\leq \frac{1}{2}\min_{j > k}\left|\sigma_k^\star - \sigma_j^\star\right|$. Thus, we have that $\mathcal{T}_k\geq \frac{1}{2}\mathcal{T}_k^\star$. This allows us to define
\[
    E(k) = \sigma_{\max}(\mathbf{X})\sum_{k'=0}^{k-1}\left\|\bm{\delta}_{k'}\right\|_F\prod_{j=k'+1}^{k-1}\left(\dfrac{6\sigma^\star_j}{\mathcal{T}_j^\star}+2\right)
\]
and obtain that $E(k)\geq \hat{E}(k)$. Thus, enforcing $E(k) \leq \frac{1}{2}\min_{j > k}\left|\sigma_k^\star- \sigma_j^\star\right|$ suffices. Moreover, we have
\[
    \left\|\mathbf{Y}^{(r')} - \sum_{k=1}^r \mathbf{b}_k \mathbf{a}^\top_k \mathbf{X}\right\|_F \leq \sum_{k=r+1}^{r'}\sigma_k^\star + \sigma_{\max}(\mathbf{X})\sum_{k=1}^{r}\sum_{k'=0}^{k} \left\|\mathbf{\bm{\delta}}_{k'}\right\|_F \prod_{j = k'+1}^{k}\left(2 + \frac{6\sigma_j^\star}{\mathcal{T}_k^\star} \right)
\]
which completes the proof.
\end{proof}

\section{Proof of Theorem \ref{thm:gen_noiseless}}\label{app:2}

\textbf{Proof overview}. A rough idea of showing this theorem can build upon our characterization of the training error. Let $\hat{\mathbf{b}}_k^\star$'s be output of Algorithm~\ref{alg:exact-main-alg} when using $\mathbf{Y}^\star$ as the label. We consider the following orthonormal basis of $\mathbb{R}^m$ extended from $\hat{\mathbf{b}}_k^\star$'s:
\[
    \hat{\mathbf{b}}_1,\dots,\hat{\mathbf{b}}_m;\;\; \hat{\mathbf{b}}_i = \hat{\mathbf{b}}_k^\star / \|\hat{\mathbf{b}}_k^\star\|_2 \text{ if } k\leq r^\star
\]
Let $\hat{\mathbf{B}}\in\mathbb{R}^{m\times r^\star}$ consist of $\hat{\mathbf{b}}_1,\dots\hat{\mathbf{b}}_{r^\star}$, and let $\hat{\mathbf{B}}^\perp\in\mathbb{R}^{m\times (m - r^\star)}$ consist of $\hat{\mathbf{b}}_{r^\star+1},\dots\hat{\mathbf{b}}_{m}$. Then, we can write $\mathbf{Y}^\star$ as:
\[
    \mathbf{Y} = \mathbf{W}^\star\mathbf{X} + \hat{\mathbf{B}}\hat{\mathbf{B}}^\top\bm{\mathcal{E}} + \hat{\mathbf{B}}^{\perp}\hat{\mathbf{B}}^{\perp\top}\bm{\mathcal{E}} = \hat{\mathbf{B}}(\bm{\Sigma}\hat{\mathbf{A}}^\top\mathbf{X} + \bm{\mathcal{E}}_1) + \hat{\mathbf{B}}^\perp\bm{\mathcal{E}}_2,
\]
where $\bm{\mathcal{E}}_1\in\mathbb{R}^{r^\star\times n}$ and $\bm{\mathcal{E}}_2\in\mathbb{R}^{(m-r^\star)\times n}$ are noise matrices with I.I.D. Gaussian entries. Therefore, based on the above decomposition, $\bm{\mathcal{E}}_1$ can be seen as the unavoidable noise, which will adds up to the training error, and $\bm{\mathcal{E}}_2$ is the error that can be avoided if we solve for only the top $r^\star$ components. Of course, $\hat{\mathbf{B}}(\bm{\Sigma}\hat{\mathbf{A}}^\top\mathbf{X} + \bm{\mathcal{E}}_1)$ is not the truncated top-$r^\star$ SVD of $\mathbf{Y}$ since $\bm{\Sigma}\hat{\mathbf{A}}^\top\mathbf{X} + \bm{\mathcal{E}}_1$ does not have orthogonal rows. However, when $\bm{\mathcal{E}}_1$ is small, this term approximates the truncated SVD well enough. Base on this intuition, we have the following lemma:
\begin{lemma}
    \label{lem:noise_comp}
    Let $\mathbf{Y}^\star$ to have the SVD $\mathbf{Y}^\star = \sum_{k=1}^r\hat{\sigma}_k\hat{\mathbf{u}}\hat{\mathbf{v}}^\star$, and let $\mathbf{Y} = \mathbf{Y}^\star + \bm{\mathcal{E}}$ to have SVD $\mathbf{Y} = \sum_{k=1}^m\sigma_k^\star\mathbf{u}_k^\star\mathbf{v}_k^\star$. Let $\mathbf{Y}^{\star(\hat{m})}$ and $\mathbf{Y}^{(\hat{m})}$ be the truncated rank-$\hat{m}$ SVD of $\mathbf{Y}^\star$ and $\mathbf{Y}$, respectively. Then with probability at least $1 - \delta$ we have that
    \[
        \left\|\mathbf{Y}^{\star(\hat{m})} - \mathbf{Y}^{(\hat{m})} \right\|_F \leq O\left(\varepsilon\sqrt{n\log\sfrac{1}{\delta}}\left(\hat{m}+\sqrt{\frac
        {\min\{r,\hat{m}\}}{\mathcal{T}_{\min}^\star}}\right)\right)
    \]
\end{lemma}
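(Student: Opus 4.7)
The approach is to decompose the target difference into a ``subspace perturbation'' term and a ``noise projection'' term via SVD-based projections, then bound each using Wedin-type (Davis--Kahan) inequalities and standard concentration for Gaussian matrices. Let $P_{\hat{m}} = \mathbf{U}_{\hat{m}}\mathbf{U}_{\hat{m}}^\top$ and $\hat{P}_{\hat{m}} = \hat{\mathbf{U}}_{\hat{m}}\hat{\mathbf{U}}_{\hat{m}}^\top$ denote the projections onto the top-$\hat{m}$ left singular subspaces of $\mathbf{Y}$ and $\mathbf{Y}^\star$, respectively. The key identity is $\mathbf{Y}^{(\hat{m})} = P_{\hat{m}}\mathbf{Y}$ and (when $\hat{m}\leq r$) $\mathbf{Y}^{\star(\hat{m})} = \hat{P}_{\hat{m}}\mathbf{Y}^\star$, so that
\[
    \mathbf{Y}^{(\hat{m})} - \mathbf{Y}^{\star(\hat{m})} \;=\; (P_{\hat{m}} - \hat{P}_{\hat{m}})\,\mathbf{Y}^\star \;+\; P_{\hat{m}}\bm{\mathcal{E}}.
\]
When $\hat{m}\geq r$, the same decomposition works with $\hat{P}_{\hat{m}}$ replaced by the projection onto $\mathrm{range}(\mathbf{Y}^\star)$, using that $\mathbf{Y}^{\star(\hat{m})}=\mathbf{Y}^\star$.

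\textbf{Step 1: Noise projection term.} For the second summand, I would use that $P_{\hat{m}}$ is a rank-$\hat{m}$ orthogonal projection, so $\|P_{\hat{m}}\bm{\mathcal{E}}\|_F \leq \sqrt{\hat{m}}\,\|\bm{\mathcal{E}}\|_2$, and then invoke the standard tail bound for the spectral norm of an $m\times n$ Gaussian matrix with entries of variance $\varepsilon^2$: with probability at least $1-\delta/2$,
\[
    \|\bm{\mathcal{E}}\|_2 \;\leq\; C\,\varepsilon\bigl(\sqrt{n}+\sqrt{\log(1/\delta)}\bigr).
\]
This yields the contribution scaling as $\varepsilon\sqrt{n\log(1/\delta)}\cdot\hat m$ in the stated bound (after folding the $\sqrt{\hat m}$ from the projection rank into the linear $\hat m$ via a coarser uniform bound that also accounts for the spurious singular directions $k>r$ of $\mathbf{Y}$, whose singular values $\sigma_k^\star$ are controlled by $\|\bm{\mathcal{E}}\|_2$ through Weyl's inequality).

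\textbf{Step 2: Subspace perturbation term.} For $\|(P_{\hat{m}} - \hat{P}_{\hat{m}})\mathbf{Y}^\star\|_F$, the crucial observation is that the effective rank of $\mathbf{Y}^\star$ is $\min\{r,\hat m\}$, so this Frobenius norm involves only that many non-trivial principal-angle directions. Applying Wedin's $\sin\Theta$ theorem, under the noise-scale assumption $\varepsilon \lesssim \mathcal{T}^\star_{\min}/(\sqrt{n}+\sqrt{\log(1/\gamma)})$ (which by Weyl forces the minimum singular-value gap of $\mathbf{Y}$ to remain at least $\mathcal{T}^\star_{\min}/2$), one obtains
\[
    \|(P_{\hat{m}} - \hat{P}_{\hat{m}})\mathbf{Y}^\star\|_F \;\lesssim\; \sqrt{\min\{r,\hat m\}}\,\frac{\|\bm{\mathcal{E}}\|_2}{\mathcal{T}^\star_{\min}}\cdot\|\mathbf{Y}^\star\|_2,
\]
and combining with $\|\bm{\mathcal{E}}\|_2\lesssim \varepsilon\sqrt{n\log(1/\delta)}$ yields the second term, with the factor $\|\mathbf{Y}^\star\|_2/\mathcal{T}^\star_{\min}$ absorbed into the $\sqrt{1/\mathcal{T}^\star_{\min}}$ dependence by exploiting that under the hypothesis $\|\bm{\mathcal{E}}\|_2/\mathcal{T}^\star_{\min}$ is bounded by a constant.

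\textbf{Main obstacle.} The technical difficulty lies in producing exactly the stated form, particularly the $\sqrt{\min\{r,\hat m\}/\mathcal{T}^\star_{\min}}$ scaling: this requires a careful accounting via the Frobenius version of Wedin's theorem (summing over the relevant principal angles rather than taking the coarser operator-norm bound) and a uniform union bound over the two high-probability events (spectral norm of $\bm{\mathcal{E}}$, and Gaussian concentration of $\hat{\mathbf{U}}_r^\top \bm{\mathcal{E}}$) so that a single probability $1-\delta$ suffices. A secondary obstacle is handling the $\hat m > r$ regime uniformly with the $\hat m \leq r$ regime, since in the former case $\mathbf{Y}^{(\hat m)}$ includes purely noise-driven components; this is addressed by treating those directions via the noise-projection bound in Step 1 rather than via Wedin.
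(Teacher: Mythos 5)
Your projection-based decomposition $\mathbf{Y}^{(\hat m)} - \mathbf{Y}^{\star(\hat m)} = (P_{\hat m} - \hat P_{\hat m})\mathbf{Y}^\star + P_{\hat m}\bm{\mathcal{E}}$ is a genuinely different route from the paper's. The paper compares the two truncated SVDs term by term: it splits the difference into a singular-value part, $\sum_{k\le\hat m}|\hat\sigma_k-\sigma_k^\star|$, controlled by Weyl's inequality, and a singular-vector part, $\|\hat{\mathbf{U}}_{\hat m}-\mathbf{U}^\star_{\hat m}\|_F$ and $\|\hat{\mathbf{V}}_{\hat m}-\mathbf{V}^\star_{\hat m}\|_F$, controlled by a per-component Wedin bound combined with Frobenius-norm Gaussian concentration of the projected noise blocks $\bm{\mathcal{E}}^\top\mathbf{U}^\star_{\hat m}$ and $\bm{\mathcal{E}}\mathbf{V}^\star_{\hat m}$ (which are $\hat m\times n$ and $m\times\hat m$ Gaussian matrices; this is exactly where the $\sqrt{\min\{r,\hat m\}}$ inside the square root comes from). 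Your treatment of the $\hat m>r$ regime (routing the purely noise-driven directions through Weyl and the spectral norm of $\bm{\mathcal{E}}$) matches the paper's second case, and your Step 1 is sound.

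The genuine gap is in Step 2. The operator-norm Wedin/Davis--Kahan argument gives $\|(P_{\hat m}-\hat P_{\hat m})\mathbf{Y}^\star\|_F \lesssim \sqrt{\min\{r,\hat m\}}\cdot\|\mathbf{Y}^\star\|_2\cdot\|\bm{\mathcal{E}}\|_2/\mathcal{T}^\star_{\min}$, and you propose to convert the prefactor $\|\mathbf{Y}^\star\|_2/\mathcal{T}^\star_{\min}$ into the stated $1/\sqrt{\mathcal{T}^\star_{\min}}$ ``by exploiting that $\|\bm{\mathcal{E}}\|_2/\mathcal{T}^\star_{\min}$ is bounded by a constant.'' That hypothesis controls the \emph{noise} relative to the gap; it says nothing about the \emph{signal} strength $\|\mathbf{Y}^\star\|_2=\hat\sigma_1$ relative to the gap. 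Since $\mathcal{T}^\star_{\min}$ is a minimum gap, the ratio $\hat\sigma_1/\sqrt{\mathcal{T}^\star_{\min}}$ is unbounded over problem instances, so the absorption fails and your bound exceeds the stated one by a factor of that order. To close this within your framework you would need to either keep the $\hat\sigma_1$ dependence explicit (the paper's own intermediate step carries a $2\sigma_1^\star$ factor before suppressing it in the $O(\cdot)$), or switch to the paper's per-component Frobenius form of Wedin applied to $\bm{\mathcal{E}}^\top\mathbf{U}^\star$ and $\bm{\mathcal{E}}\mathbf{V}^\star$, which is what actually produces the $\sqrt{\min\{r,\hat m\}/\mathcal{T}^\star_{\min}}$ term in the statement. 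Note also that the stated $1/\sqrt{\mathcal{T}^\star_{\min}}$ (rather than $1/\mathcal{T}^\star_{\min}$) scaling originates from the paper's particular non-squared-gap form of the Wedin inequality; a standard operator-norm argument like yours naturally yields a $1/\mathcal{T}^\star_{\min}$ dependence and cannot reproduce the stated scaling without adopting that same convention.
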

The proof of Lemma~\ref{lem:noise_comp} is given in Appendix~\ref{sec:proof_lem_noise_comp}. With the help of Lemma~\ref{lem:noise_comp}, the proof of Theorem~\ref{thm:noisy_gen} involves choosing a reference label $\mathbf{Y}^{(r)}$ that involves only the relevant noise that will be fitted by Algorithm~\ref{alg:inexact-main-alg}. We then control the generalization error by estimating the difference between $\mathbf{W}^\star\mathbf{X}$ and $\mathbf{Y}^{\star(r)}$, and the difference between $\mathbf{Y}^{\star(r)}$ and $\mathbf{Y}^{(r)}$ using Lemma~\ref{lem:noise_comp}.

\subsection{More details in the proof of Theorem~\ref{thm:gen_noiseless}}
\label{sec:proof_gen_noiseless}
Utilizing Lemma~\ref{lem:sum_of_rank_1_diff}, we could get that
\[
    \left\|\mathbf{b}_k^\star\mathbf{a}_k^{\star\top}\mathbf{X} - \mathbf{b}_k\mathbf{a}_k^\top\mathbf{X}\right\|_F \leq \left(2 + \frac{6\sigma_k^\star}{\mathcal{T}_k^\star}\right) + \sigma_{\max}(\mathbf{X})\left\|\bm{\delta}_k\right\|_F
\]
Plugging in (\ref{ineq 24}), we have that
\[
    \left\|\mathbf{b}_k^\star\mathbf{a}_k^{\star\top}\mathbf{X} - \mathbf{b}_k\mathbf{a}_k^\top\mathbf{X}\right\|_F \leq \sigma_{\max}(\mathbf{X})\sum_{k'=0}^{k}\left\|\bm{\delta}_{k'}\right\|_F\prod_{j=k'+1}^{k}\left(2 + \frac{6\sigma_j^\star}{\mathcal{T}_k^\star} \right)
\]
Since $\sigma_{\min}(\mathbf{X})  > 0 $, we can then have
\begin{gather*}
    \left\|\mathbf{b}_k^\star\mathbf{a}_k^{\star\top}\mathbf{X} - \mathbf{b}_k\mathbf{a}_k^\top\mathbf{X}\right\|_F \geq \sigma_{\min}(\mathbf{X})\left\|\mathbf{b}_k^\star\mathbf{a}_k^{\star\top} - \mathbf{b}_k\mathbf{a}_k^\top\right\|_F\\\Rightarrow \left\|\mathbf{b}_k^\star\mathbf{a}_k^{\star\top} - \mathbf{b}_k\mathbf{a}_k^\top\right\|_F \leq \frac{1}{\sigma_{\min}(\mathbf{X})} \left\|\mathbf{b}_k^\star\mathbf{a}_k^{\star\top}\mathbf{X} - \mathbf{b}_k\mathbf{a}_k^\top\mathbf{X}\right\|_F
\end{gather*}
This implies that
\[
    \left\|\mathbf{b}_k^\star\mathbf{a}_k^{\star\top} - \mathbf{b}_k\mathbf{a}_k^\top\right\|_F \leq \kappa(\mathbf{X})\sum_{k'=0}^{k}\left\|\bm{\delta}_{k'}\right\|_F\prod_{j=k'+1}^{k}\left(2 + \frac{6\sigma_j^\star}{\mathcal{T}_k^\star} \right)
\]
which proves the first statememt. To prove the second statement, we directly use Theorem~\ref{thm:main_theorem_1} to get that
\begin{align*}
    \left\|\mathbf{W}^\star - \sum_{k=1}^r\mathbf{b}_k\mathbf{a}_k^\top\right\|_F & \leq \frac{1}{\sigma_{\min}(\mathbf{X})}\left\|\mathbf{W}^\star\mathbf{X} - \sum_{k=1}^r\mathbf{b}_k\mathbf{a}_k^\top\mathbf{X}\right\|_F\\
    & = \frac{1}{\sigma_{\min}(\mathbf{X})}\left\|\mathbf{Y} - \sum_{k=1}^r\mathbf{b}_k\mathbf{a}_k^\top\mathbf{X}\right\|_F\\
    & \leq \sum_{k=r+1}^{r^\star} \frac{\sigma_k^\star}{\sigma_{\min}(\mathbf{X})} + \kappa(\mathbf{X}) \sum_{k=1}^r \sum_{k'=1}^{k} \|\bm{\delta}_{k'}\|_F \prod_{j=k'+1}^{k} \left( 2 + \dfrac{6\sigma_j^\star}{\mathcal{T}_j^\star} \right)
\end{align*}
\subsection{Proof of Lemma~\ref{lem:noise_comp}}
\label{sec:proof_lem_noise_comp}
\begin{proof}
    By Lemma~\ref{lem:gaus_concentration}, we have that with probability $1 - \delta$
    \[
        \sigma_{\max}\left(\bm{\mathcal{E}}\right) \leq O\left(\varepsilon\left(\sqrt{n} + \sqrt{\log\frac{1}{\delta}}\right)\right)
    \]
    To start, by Weyl's inequality, we have that
    \[
        \left|\hat{\sigma}_k - \sigma_k^\star\right| \leq \sigma_{\max}\left(\bm{\mathcal{E}}\right) \leq O\left(\varepsilon\left(\sqrt{n} + \sqrt{\log\frac{1}{\delta}}\right)\right)
    \]
    Therefore, taking $\varepsilon \leq O\left(\frac{\mathcal{T}_{\min}^\star}{\sqrt{n} + \sqrt{\log\frac{1}{\delta}}}\right)$ ensures that $\min\left\{\min_{j\neq k}\left\{\left|\sigma_j - \sigma_k^\star\right|\right\},\sigma_k^\star\right\}\geq \frac{1}{2}\mathcal{T}_{\min}^\star$.
    Thus, by Wedin's Theorem, we have that
    \[
        \hat{\mathbf{u}}_k^\top\mathbf{u}_k^\star+\hat{\mathbf{v}}_k^\top\mathbf{v}_k^\star \leq 2 - \frac{2}{\mathcal{T}_k^\star}\left(\left\|\bm{\mathcal{E}}^\top\mathbf{u}_k^\star\right\|_2^2 + \left\|\bm{\mathcal{E}}\mathbf{v}_k^\star\right\|_2^2\right)
    \]
    We will consider two cases. First, when $\hat{m}\leq r$, we have
    \begin{align*}
        \left\|\mathbf{Y}^{\star(\hat{m})} - \mathbf{Y}^{(\hat{m})} \right\|_F & = \left\|\sum_{k=1}^{\hat{m}}\left(\sigma_k^\star\mathbf{u}^\star\mathbf{v}^\star - \hat{\sigma}_k\hat{\mathbf{u}}_k\hat{\mathbf{v}}_k\right)\right\|_F\\
        & \leq \left\|\sum_{k=1}^{\hat{m}}\left(\hat{\sigma}_k - \sigma_k^\star\right)\mathbf{u}_k\mathbf{v}_k\right\|_F + \left\|\hat{\mathbf{U}}_{\hat{m}}\bm{\Sigma}_{\hat{m}}^\star\hat{\mathbf{V}}_{\hat{m}}^\top -\mathbf{U}_{\hat{m}}^{\star}\bm{\Sigma}_{\hat{m}}^\star\hat{\mathbf{V}}_{\hat{m}}^{\top\star}\right\|_F\\
        & \leq \sum_{r=1}^{\hat{m}}\left|\hat{\sigma}_k - \sigma_k^\star\right| + \left\|\left(\hat{\mathbf{U}}_{\hat{m}} - \mathbf{U}_{\hat{m}}^\star\right)\bm{\Sigma}_{\hat{m}}^\star\right\|_F + \left\|\left(\hat{\mathbf{V}}_{\hat{m}} - \mathbf{V}_{\hat{m}}^\star\right)\bm{\Sigma}_{\hat{m}}^\star\right\|_F\\
        & \leq \sum_{r=1}^{\hat{m}}\left|\hat{\sigma}_k - \sigma_k^\star\right| + \sigma_{1}^\star\left(\left\|\hat{\mathbf{U}}_{\hat{m}} - \mathbf{U}_{\hat{m}}^\star\right\|_F + \left\|\hat{\mathbf{V}}_{\hat{m}} - \mathbf{V}_{\hat{m}}^\star\right\|_F\right)\\
        & \leq \sum_{r=1}^{\hat{m}}\left|\hat{\sigma}_k - \sigma_k^\star\right| + 2\sigma_{1}^\star\left(\left\|\hat{\mathbf{U}}_{\hat{m}} - \mathbf{U}_{\hat{m}}^\star\right\|_F^2 + \left\|\hat{\mathbf{V}}_{\hat{m}} - \mathbf{V}_{\hat{m}}^\star\right\|_F^2\right)^{\frac
        {1}{2}}
    \end{align*}
    Notice that
    \begin{align*}
        \left\|\hat{\mathbf{U}}_{\hat{m}} - \mathbf{U}_{\hat{m}}^\star\right\|_F^2 & = 2{\hat{m}} - 2\left\langle\hat{\mathbf{U}}_{\hat{m}} ,\mathbf{U}_{\hat{m}}^\star\right\rangle  = 2{\hat{m}}-2\sum_{k=1}^{{\hat{m}}}\hat{\mathbf{u}}_{k}^\top\mathbf{u}_{k}^\star\\
        \left\|\hat{\mathbf{V}}_{\hat{m}} - \mathbf{V}_{\hat{m}}^\star\right\|_F^2 & = 2{\hat{m}} - 2\left\langle \hat{\mathbf{V}}_{\hat{m}} ,\mathbf{V}_{\hat{m}}^\star\right\rangle  = 2{\hat{m}}-2\sum_{k=1}^{{\hat{m}}}\hat{\mathbf{v}}_{k}^\top\mathbf{v}_{k}^\star
    \end{align*}
    Therefore
    \begin{align*}
        \left\|\hat{\mathbf{U}}_{\hat{m}} - \mathbf{U}_{\hat{m}}^\star\right\|_F^2 + \left\|\hat{\mathbf{V}}_{\hat{m}} - \mathbf{V}_{\hat{m}}^\star\right\|_F^2 & \leq 4k - 2\sum_{k=1}^{\hat{m}}\left(\hat{\mathbf{u}}_{k}^\top\mathbf{u}_{k}^\star+\hat{\mathbf{v}}_{k}^\top\mathbf{v}_{k}^\star\right)\\
        & \leq \frac{4}{\mathcal{T}_{\min}^\star}\sum_{k=1}^{\hat{m}}\left(\left\|\bm{\mathcal{E}}^\top\mathbf{u}_k^\star\right\|_2^2 + \left\|\bm{\mathcal{E}}\mathbf{v}_k^\star\right\|_2^2\right)\\
        & = \frac{4}{\mathcal{T}_{\min}^\star}\left(\left\|\bm{\mathcal{E}}^\top\mathbf{U}_{\hat{m}}^\star\right\|_F^2 + \left\|\bm{\mathcal{E}}\mathbf{V}_{\hat{m}}^\star\right\|_F^2\right)
    \end{align*}
     Since $\bm{\mathcal{E}}\in\mathbb{R}^{m\times n}$ contains I.I.D. Gaussian entries from $\mathcal{N}(0, \varepsilon^2)$, we must have that $\mathbf{U}_k^\star\bm{\mathcal{E}}\in\mathbb{R}^{\hat{m}\times n}$ and $\bm{\mathcal{E}}\mathbf{V}_k^\star\in\mathbb{R}^{m\times \hat{m}}$ contains I.I.D. Gaussian entries from $\mathcal{N}(0, \varepsilon^2)$. By Lemma~\ref{lem:gaus_concentration}, we have that with probability at least $1-\delta$, it holds that
    \[
        \left\|\mathbf{U}_{\hat{m}}^{\star\top}\bm{\mathcal{E}}\right\|_F^2 + \left\|\bm{\mathcal{E}}\mathbf{V}_{\hat{m}}^\star\right\|_F^2\leq O\left(\varepsilon^2(m+n)\hat{m}\log\sfrac{1}{\delta}\right)
    \]
    Thus, we have
    \[
        \left\|\hat{\mathbf{U}}_{\hat{m}} - \mathbf{U}_{\hat{m}}^\star\right\|_F^2 + \left\|\hat{\mathbf{V}}_{\hat{m}} - \mathbf{V}_{\hat{m}}^\star\right\|_F^2 \leq O\left(\frac{\varepsilon^2}{\mathcal{T}_{\min}^\star}\hat{m}(m+n)\log\sfrac{1}{\delta}\right)
    \]
    Combining the results above, we have
    \[
        \left\|\mathbf{Y}^{\star(\hat{m})} - \mathbf{Y}^{(\hat{m})} \right\|_F \leq O\left(\varepsilon\hat{m}\left(\sqrt{n} + \sqrt{\log\sfrac{1}{\delta}}\right)\right) + O\left(\frac{\varepsilon}{\sqrt{\mathcal{T}_{\min}^\star}}\sqrt{r(m+n)\log\sfrac{1}{\delta}}\right)
    \]
    Next, we consider the case $\hat{m}\geq r$. In this case, we have
    \begin{align*}
        \left\|\mathbf{Y}^{\star(\hat{m})} - \mathbf{Y}^{(\hat{m})} \right\|_F & \leq \left\|\mathbf{Y}^{\star} - \mathbf{Y}^{(r)}\right\|_F + \left\|\sum_{k=r+1}^{\hat{m}}\sigma_k\mathbf{u}_k\mathbf{v}_k^\top\right\|_F\\
        & \leq O\left(\varepsilon r\left(\sqrt{n} + \sqrt{\log\sfrac{1}{\delta}}\right)\right) + O\left(\frac{\varepsilon}{\sqrt{\mathcal{T}_{\min}^\star}}\sqrt{r(m+n)\log\sfrac{1}{\delta}}\right) + \sum_{k=r+1}^{\hat{m}}\sigma_k
    \end{align*}
    Notice that by Weyl's inequality, for all $k\geq r$
    \[
        \sigma_k \leq \left|\sigma_k - 0\right| \leq \sigma_{\max}(\bm{\mathcal{E}}) \leq O\left(\varepsilon\left(\sqrt{n} + \sqrt{\log\frac{1}{\delta}}\right)\right)
    \]
    This gives
    \[
        \left\|\mathbf{Y}^{\star(\hat{m})} - \mathbf{Y}^{(\hat{m})} \right\|_F \leq O\left(\varepsilon\hat{m}\left(\sqrt{n} + \sqrt{\log\sfrac{1}{\delta}}\right)\right) + O\left(\frac{\varepsilon}{\sqrt{\mathcal{T}_{\min}^\star}}\sqrt{r(m+n)\log\sfrac{1}{\delta}}\right)
    \]
    Combining the two cases, and using $m \leq n$, we have that
    \[
        \left\|\mathbf{Y}^{\star(\hat{m})} - \mathbf{Y}^{(\hat{m})} \right\|_F \leq O\left(\varepsilon\sqrt{n\log\sfrac{1}{\delta}}\left(\hat{m}+\sqrt{\frac
        {\min\{r,\hat{m}\}}{\mathcal{T}_{\min}^\star}}\right)\right)
    \]
\end{proof}

\subsection{Proof of Theorem~\ref{thm:noisy_gen}}
\label{se:proof_noisy_gen}
Given the SVD of $\mathbf{Y}$ and $\mathbf{Y}^\star$ as $\mathbf{Y}=\sum_{k=1}^{p}\sigma_k^\star\mathbf{u}_k^\star\mathbf{v}_k^\star$ and $\mathbf{Y}^\star = \sum_{k=1}^{r^\star}\hat{\sigma}_k\hat{\mathbf{u}}_k\hat{\mathbf{v}}_k^\top$, we define
\[
    \mathbf{Y}^{(r)} = \sum_{k=1}^r\sigma_k^\star\mathbf{u}_k^\star\mathbf{v}_k^\star;\;\;\mathbf{Y}^{\star(r)} = \sum_{k=1}^{\min\{r,r^\star\}}\hat{\sigma}_k\hat{\mathbf{u}}_k\hat{\mathbf{v}}_k^\top
\]
Then we can decompose the error $\left\|\mathbf{W}^\star\mathbf{X} - \sum_{k=1}^r\mathbf{b}_k\mathbf{a}_k^\top\mathbf{X}\right\|_F$ as
\[
    \left\|\mathbf{W}^\star\mathbf{X} - \sum_{k=1}^r\mathbf{b}_k\mathbf{a}_k^\top\mathbf{X}\right\|_F \leq \left\|\mathbf{W}^\star\mathbf{X} - \mathbf{Y}^{\star(r)}\right\|_F + \left\|\mathbf{Y}^{\star(r)} - \mathbf{Y}^{(r)}\right\|_F +\left\|\mathbf{Y}^{(r)} - \sum_{k=1}^r\mathbf{b}_k\mathbf{a}_k^\top\mathbf{X}\right\|_F
\]
We will analyze each of the three terms individually. To start, for the first term, we notice that $\mathbf{Y}^{\star(r)}$ is precisely the truncated SVD of $\mathbf{W}^\star\mathbf{X}$ when $r < r^\star$. Therefore
\[
    \left\|\mathbf{W}^\star\mathbf{X} - \mathbf{Y}^{\star(r)}\right\|_F \leq \sum_{k=r+1}^{r^\star}\sigma_k\left(\mathbf{W}^\star\mathbf{X}\right) \leq \sigma_{\max}\left(\mathbf{X}\right)\sum_{k=r+1}^{r^\star}\sigma_r\left(\mathbf{W}^\star\right)
\]

For the second term, by Lemma~\ref{lem:noise_comp}, we have that with probability at least $1-\gamma$
\[
    \left\|\mathbf{Y}^{\star(r)} - \mathbf{Y}^{(r)}\right\|_F \leq O\left(\varepsilon\sqrt{n\log\sfrac{1}{\gamma}}\left(r+\sqrt{\frac
        {\min\{r^\star,r\}}{\mathcal{T}_{\min}^\star}}\right)\right)
\]
Lastly, by Theorem~\ref{thm:main_theorem_1_gen}, we have that
\[
    \left\|\mathbf{Y}^{(r)}-\sum_{k=1}^r \mathbf{b}_k \mathbf{a}_k^ \top \mathbf{X}\right\|_F
     \leq \sigma_{\max}(\mathbf{X})\sum_{k=1}^{r}\sum_{k'=0}^{k} \left\|\mathbf{\bm{\delta}}_{k'}\right\|_F \prod_{j = k'+1}^{k}\left(2 + \frac{6\sigma_j^\star}{\mathcal{T}_k^\star} \right)
\]
Combining the above equations, we have that
\begin{align*}
    \left\|\mathbf{W}^\star\mathbf{X} - \sum_{k=1}^r \mathbf{b}_k \mathbf{a}_k^ \top \mathbf{X}\right\|_F & \leq \sigma_{\max}\left(\mathbf{X}\right)\sum_{k=r+1}^{r^\star}\sigma_r\left(\mathbf{W}^\star\right) + \sigma_{\max}(\mathbf{X})\sum_{k=1}^{r}\sum_{k'=0}^{k} \left\|\mathbf{\bm{\delta}}_{k'}\right\|_F \prod_{j = k'+1}^{k}\left(2 + \frac{6\sigma_j^\star}{\mathcal{T}_k^\star} \right)\\
    &\quad\quad\quad + O\left(\kappa\sqrt{n\log\sfrac{1}{\gamma}}\left(r+\sqrt{\frac
        {\min\{r^\star,r\}}{\mathcal{T}_{\min}^\star}}\right)\right)
\end{align*}
This gives that
\begin{align*}
    \left\|\mathbf{W}^\star - \sum_{k=1}^r \mathbf{b}_k \mathbf{a}_k^ \top\right\|_F &\leq \kappa(\mathbf{X})\left(\sum_{k=r+1}^{r^\star}\sigma_r\left(\mathbf{W}^\star\right) + \sum_{k=1}^{r}\sum_{k'=0}^{k} \left\|\mathbf{\bm{\delta}}_{k'}\right\|_F \prod_{j = k'+1}^{k}\left(2 + \frac{6\sigma_j^\star}{\mathcal{T}_k^\star} \right)\right)\\
    &\quad\quad\quad + O\left(\frac{\varepsilon\sqrt{n\log\sfrac{1}{\gamma}}}{\sigma_{\min}(\mathbf{X})}\left(r+\sqrt{\frac
        {\min\{r^\star,r\}}{\mathcal{T}_{\min}^\star}}\right)\right)
\end{align*}
which completes the proof.

\section{Supporting theorems and lemmas}
\label{sec:auxiliary_lem}
\begin{lemma}
    \label{lem:prod_sum_seq}
    Consider a sequence of quantities $\left\{Q_k\right\}_{k=1}^\infty$ satisfying
    \[
        Q_{k+1} = a_kQ_k + b_k
    \]
    with some $a_k, b_k \geq 0$ for all $k \in \Z^+$. Set $b_0 = Q_1$. Then we have that
    \[
        Q_k = \sum_{k'=0}^{k-1}b_{k'}\prod_{j=k'+1}^{k-1}a_j
    \]
\end{lemma}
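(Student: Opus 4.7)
The plan is to prove this by straightforward induction on $k$, using the convention that an empty product equals $1$ and an empty sum equals $0$. This is the natural approach because the recurrence $Q_{k+1} = a_k Q_k + b_k$ is first-order linear, and the claimed closed form is exactly what one obtains by ``unrolling'' the recurrence one step at a time.

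For the base case, I would take $k = 1$ and check that the right-hand side $\sum_{k'=0}^{0} b_{k'} \prod_{j=1}^{0} a_j$ reduces to $b_0$ (the product over the empty range $j = 1,\dots,0$ being $1$), which matches the given initialization $Q_1 = b_0$. For the inductive step, assuming the formula holds at index $k$, I would substitute into the recurrence:
\[
Q_{k+1} \;=\; a_k Q_k + b_k \;=\; a_k \sum_{k'=0}^{k-1} b_{k'} \prod_{j=k'+1}^{k-1} a_j + b_k.
\]
Pulling the factor $a_k$ inside the product extends each inner product from $j = k'+1,\dots,k-1$ to $j = k'+1,\dots,k$, and the leftover term $b_k$ can be rewritten as $b_k \prod_{j=k+1}^{k} a_j$ since the product is empty. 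Combining these gives $Q_{k+1} = \sum_{k'=0}^{k} b_{k'} \prod_{j=k'+1}^{k} a_j$, which is exactly the desired expression at index $k+1$.

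There is essentially no obstacle here; the only point requiring mild care is the indexing convention for the empty product and empty sum, which needs to be stated explicitly so that the base case and the re-expression of the $b_k$ term in the inductive step are unambiguous. The nonnegativity of $a_k$ and $b_k$ plays no role in the derivation of the closed form itself (it is only used elsewhere in the paper to ensure monotone bounds when this lemma is applied), so the identity is purely algebraic.
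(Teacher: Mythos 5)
Your proof is correct and follows essentially the same route as the paper's: induction on $k$, with the base case $Q_1 = b_0$ handled via the empty-product convention and the inductive step obtained by distributing $a_k$ over the sum and absorbing $b_k$ as the $k'=k$ term. Your explicit remark that the nonnegativity of $a_k, b_k$ is not needed for the identity itself is a fair observation the paper leaves implicit.
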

\begin{proof}
    We shall prove by induction. For the base case, let $k= 1$. In this case, we have that
    \[
        Q_1 = \sum_{k'=0}^0b_{k'}\prod_{j=k'+1}^{0}a_j = b_0 = Q_1
    \]
    For the inductive case, assume that the property holds for $k$. Then we have that
    \[
        Q_{k+1} = a_kQ_k+ b_k = a_k\cdot \sum_{k'=0}^{k-1}b_{k'}\prod_{j=k'+1}^{k-1}a_j + b_k = \sum_{k'=0}^{k}b_{k'}\prod_{j=k'+1}^{k}a_j
    \]
    This proves the inductive step and finishes the proof.
\end{proof}

\begin{lemma}
    \label{lem:gaus_concentration}
    Let $\mathbf{M}\in\mathbb{R}^{m\times n}$ be a matrix containing I.I.D. Gaussian entries from $\mathcal{N}(0,1)$. Then we have that with probability at least $1 - \delta$, the following holds
    \begin{itemize}
        \item $\sigma_{\max}(\mathbf{M}) \leq O\left(\sqrt{m} + \sqrt{n} + \sqrt{\log\sfrac{1}{\delta}}\right)$
        \item $\left\|\mathbf{M}\right\|_F \leq O\left(\sqrt{mn\log\sfrac{1}{\delta}}\right)$
    \end{itemize}
\end{lemma}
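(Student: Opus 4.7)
The plan is to dispatch the two bounds separately, since each is a standard Gaussian concentration statement but proved by a different mechanism. For the spectral-norm bound, I would use the $\varepsilon$-net (covering) argument: write $\sigma_{\max}(\mathbf{M}) = \sup_{u\in S^{m-1},\,v\in S^{n-1}} u^\top \mathbf{M} v$, choose $\tfrac14$-nets $\mathcal{N}_m\subset S^{m-1}$ and $\mathcal{N}_n\subset S^{n-1}$ of sizes at most $9^m$ and $9^n$ by the standard volumetric bound, and use the approximation lemma $\sigma_{\max}(\mathbf{M}) \le 2\sup_{u\in\mathcal{N}_m,\,v\in\mathcal{N}_n} u^\top \mathbf{M} v$. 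For each fixed pair $(u,v)$ the scalar $u^\top\mathbf{M}v$ is a standard Gaussian, so Gaussian tails combined with a union bound over the $9^{m+n}$ net pairs give $\mathbb{P}(\sigma_{\max}(\mathbf{M})>t) \le 2\cdot 9^{m+n}\exp(-t^2/8)$; choosing $t = \Theta\bigl(\sqrt{m}+\sqrt{n}+\sqrt{\log(1/\delta)}\bigr)$ closes out the bound. As a shorter alternative, one could combine Gordon's inequality $\mathbb{E}\sigma_{\max}(\mathbf{M})\le \sqrt{m}+\sqrt{n}$ with the fact that $\sigma_{\max}$ is $1$-Lipschitz in Frobenius distance and apply Gaussian concentration of Lipschitz functions (Tsirelson--Ibragimov--Sudakov), which yields the same rate.

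For the Frobenius-norm bound, I would observe that $\|\mathbf{M}\|_F^2 = \sum_{i,j} M_{ij}^2 \sim \chi^2_{mn}$ with mean $mn$. Applying a standard chi-squared upper tail (for instance the Laurent--Massart bound $\mathbb{P}(\chi^2_{mn}\ge mn + 2\sqrt{mn\,t}+2t)\le e^{-t}$) with $t=\log(1/\delta)$ yields $\|\mathbf{M}\|_F^2 \le mn + O(\sqrt{mn\log(1/\delta)})+O(\log(1/\delta)) = O(mn\log(1/\delta))$, and a square root gives the stated bound. The form stated in the lemma is admittedly loose when $\log(1/\delta)\ll mn$; a sharper form would be $\|\mathbf{M}\|_F \le \sqrt{mn}+O(\sqrt{\log(1/\delta)})$, but the looser form is what is actually invoked downstream in the proof of Lemma~\ref{lem:noise_comp} (applied to the projected noise matrices $\mathbf{U}_{\hat m}^{\star\top}\bm{\mathcal{E}}$ and $\bm{\mathcal{E}}\mathbf{V}_{\hat m}^{\star}$ whose entries remain i.i.d.\ Gaussian) and both are valid.

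I do not expect any substantive obstacle: both claims are textbook consequences of Gaussian concentration, and the technology ($\varepsilon$-nets, chi-squared tails, or Lipschitz concentration) is entirely standard. The only care required is in the constants absorbed by the $O(\cdot)$ notation and in making the two bounds hold \emph{simultaneously} at probability $1-\delta$, which costs at most a factor of $2$ in the failure probability via one final union bound; this way the lemma can be cited with a single clean $1-\delta$ probability guarantee, as it is used in the proofs of Lemma~\ref{lem:noise_comp} and Theorem~\ref{thm:noisy_gen}.
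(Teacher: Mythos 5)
Your proposal is correct and follows essentially the same route as the paper: the paper's proof simply cites the two standard Gaussian concentration facts (an operator-norm tail of the form $1-\exp(-t_1^2)$ and a Frobenius-norm tail of the form $1-2\exp(-t_2^2/2mn)$) and then sets $t_1=\sqrt{\log\tfrac{2}{\delta}}$, $t_2=\sqrt{2mn\log\tfrac{4}{\delta}}$ with an implicit union bound, while you supply the standard derivations of those same facts ($\varepsilon$-nets or Gordon plus Lipschitz concentration for $\sigma_{\max}$, and a $\chi^2_{mn}$ tail for $\|\mathbf{M}\|_F$) and make the union bound explicit. Your observation that the Frobenius bound is loose (but valid, and is what is invoked in Lemma~\ref{lem:noise_comp}) is accurate; the only pedantic caveat is that absorbing the $mn$ term into $O(mn\log\tfrac{1}{\delta})$ requires $\log\tfrac{1}{\delta}\gtrsim 1$, which the paper sidesteps by using $\log\tfrac{4}{\delta}$.
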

\begin{proof}
    By standard results of Gaussian random matrices and vectors, we have that
    \begin{itemize}
        \item $\mathbb{P}\left(\sigma_{\max}(\mathbf{M})\leq O\left(\sqrt{m} + \sqrt{n} + t_1\right)\right) \geq 1 - \exp\left(-t_1^2\right)$
        \item $\mathbb{P}\left(\left\|\mathbf{M}\right\|_F \leq t_2\right) \geq 1 - 2\exp\left(-\frac{t_2^2}{2mn}\right)$
    \end{itemize}
    Take $t_1 = \sqrt{\log\frac{2}{\delta}}$ and $t_2 = \sqrt{2mn\log\frac{4}{\delta}}$ finishes the proof.
\end{proof}

\begin{theorem}[Eckart-Young-Mirsky Theorem] \label{Eckart-Young-Mirsky}
Let \( \mathbf{A} \in \mathbb{R}^{m \times n} \) be a matrix with singular value decomposition \( \mathbf{A} = \mathbf{U} \mathbf{\Sigma}\mathbf{V}^\top \), where \( \mathbf{U} \) and \( \mathbf{V} \) are orthogonal matrices and \( \mathbf{\Sigma} \) is a diagonal matrix with singular values \( \sigma_1 \geq \sigma_2 \geq \dots \geq \sigma_{\min(m, n)} \geq 0 \). For any integer \( k \leq \min(m, n) \), let \( \mathbf{A}_k = \mathbf{U}_k \mathbf{\Sigma}_k \mathbf{V}_k^\top \) be the best rank-\( k \) approximation of \( \mathbf{A} \), where \( \mathbf{U}_k \) and \( \mathbf{V}_k \) consist of the first \( k \) columns of \( \mathbf{U} \) and \( \mathbf{V} \), and \( \mathbf{\Sigma}_k \) is the diagonal matrix of the largest \( k \) singular values.

Then \( \mathbf{A}_k \) minimizes the approximation error in both the Frobenius norm and the spectral norm:
\[
\mathbf{A}_k = \argmin_{\mathbf{B}, \, \text{rank}(\mathbf{B}) = k} \| \mathbf{A} - \mathbf{B} \|_F 
\]
\end{theorem}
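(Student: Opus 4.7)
The plan is to prove both the Frobenius-norm and spectral-norm minimality claims separately, by first producing the easy upper bound (the value achieved by $\mathbf{A}_k$) and then lower-bounding $\|\mathbf{A} - \mathbf{B}\|$ for any competitor $\mathbf{B}$ with $\operatorname{rank}(\mathbf{B}) \leq k$. The upper bound is immediate: since the columns of $\mathbf{U}$ and $\mathbf{V}$ are orthonormal, $\|\mathbf{A} - \mathbf{A}_k\|_F^2 = \sum_{i>k}\sigma_i^2$ and $\|\mathbf{A} - \mathbf{A}_k\|_2 = \sigma_{k+1}$; both are computed by noting that $\mathbf{A} - \mathbf{A}_k = \sum_{i>k}\sigma_i \mathbf{u}_i \mathbf{v}_i^\top$ and reading off its SVD.

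For the spectral-norm lower bound, I would use a dimension-counting argument. Let $\mathbf{B}$ have rank at most $k$, so $\dim \ker(\mathbf{B}) \geq n - k$. The subspace $S := \operatorname{span}(\mathbf{v}_1,\dots,\mathbf{v}_{k+1})$ has dimension $k+1$, hence $\dim(\ker(\mathbf{B}) \cap S) \geq (n-k) + (k+1) - n = 1$, so there exists a unit vector $\mathbf{x} \in \ker(\mathbf{B}) \cap S$. Writing $\mathbf{x} = \sum_{i=1}^{k+1} c_i \mathbf{v}_i$ with $\sum c_i^2 = 1$, we have $\mathbf{B}\mathbf{x} = \bm 0$ and $\mathbf{A}\mathbf{x} = \sum_{i=1}^{k+1} c_i \sigma_i \mathbf{u}_i$. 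Orthonormality of the $\mathbf{u}_i$ then gives
\begin{equation*}
\|\mathbf{A} - \mathbf{B}\|_2^2 \geq \|(\mathbf{A} - \mathbf{B})\mathbf{x}\|_2^2 = \sum_{i=1}^{k+1} c_i^2 \sigma_i^2 \geq \sigma_{k+1}^2 \sum_{i=1}^{k+1} c_i^2 = \sigma_{k+1}^2,
\end{equation*}
matching the bound achieved by $\mathbf{A}_k$.

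For the Frobenius-norm lower bound I would invoke Weyl's inequality for singular values, which states $\sigma_{i+j-1}(\mathbf{M}_1 + \mathbf{M}_2) \leq \sigma_i(\mathbf{M}_1) + \sigma_j(\mathbf{M}_2)$. Applied to $\mathbf{M}_1 = \mathbf{A} - \mathbf{B}$ and $\mathbf{M}_2 = \mathbf{B}$ with $j = k+1$, and using that $\sigma_{k+1}(\mathbf{B}) = 0$ since $\operatorname{rank}(\mathbf{B}) \leq k$, it yields $\sigma_{i+k}(\mathbf{A}) \leq \sigma_i(\mathbf{A} - \mathbf{B})$ for every $i \geq 1$. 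Squaring and summing over $i = 1,\dots,\min(m,n)-k$ gives
\begin{equation*}
\|\mathbf{A} - \mathbf{B}\|_F^2 = \sum_{i \geq 1} \sigma_i^2(\mathbf{A} - \mathbf{B}) \geq \sum_{i \geq 1} \sigma_{i+k}^2(\mathbf{A}) = \sum_{i > k} \sigma_i^2(\mathbf{A}) = \|\mathbf{A} - \mathbf{A}_k\|_F^2,
\end{equation*}
establishing that $\mathbf{A}_k$ is a minimizer in both norms.

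The main obstacle, if one does not take Weyl's inequality as given, is establishing it cleanly, which I would do via the Courant--Fischer min-max characterization $\sigma_i(\mathbf{M}) = \min_{\dim V = n-i+1} \max_{\mathbf{x} \in V,\,\|\mathbf{x}\| = 1} \|\mathbf{M}\mathbf{x}\|_2$: choosing witness subspaces of codimensions $i-1$ and $j-1$ for $\mathbf{M}_1$ and $\mathbf{M}_2$ and intersecting them produces a subspace of codimension at most $i+j-2$ on which the sum behaves well. A secondary subtlety is that the statement in the excerpt asserts $\mathbf{A}_k$ is an $\arg\min$; strictly, the minimizer is only unique up to the non-uniqueness of the SVD when $\sigma_k = \sigma_{k+1}$, so if uniqueness is intended I would add a brief remark noting that any matrix attaining equality must share the top-$k$ singular subspaces of $\mathbf{A}$ whenever $\sigma_k > \sigma_{k+1}$.
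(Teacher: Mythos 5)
Your proof is correct, but there is nothing in the paper to compare it against: the paper states the Eckart--Young--Mirsky theorem in its ``Supporting theorems and lemmas'' appendix as an imported classical result and gives no proof, using it only as a black box (e.g., in Lemma~\ref{lem:sequential_decomposition} and in the deflation analysis). Your argument is the standard textbook one and both halves check out. For the spectral norm, the dimension count $\dim(\ker(\mathbf{B}) \cap \operatorname{span}(\mathbf{v}_1,\dots,\mathbf{v}_{k+1})) \geq 1$ and the resulting bound $\|(\mathbf{A}-\mathbf{B})\mathbf{x}\|_2^2 = \sum_{i=1}^{k+1} c_i^2\sigma_i^2 \geq \sigma_{k+1}^2$ are exactly right. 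For the Frobenius norm, the interlacing inequality $\sigma_{i+k}(\mathbf{A}) \leq \sigma_i(\mathbf{A}-\mathbf{B})$ follows from the full Weyl inequality $\sigma_{i+j-1}(\mathbf{M}_1+\mathbf{M}_2)\leq \sigma_i(\mathbf{M}_1)+\sigma_j(\mathbf{M}_2)$ with $j=k+1$; note this is strictly stronger than the version of Weyl's theorem the paper itself records (which is only the $\|\bm{\Delta}\|_2$ perturbation bound, i.e.\ the $j=1$ case), so your Courant--Fischer sketch for establishing it is genuinely needed if one wants the appendix to be self-contained, and that sketch is sound. Two small observations: the paper's displayed claim only asserts Frobenius-norm optimality even though the prose mentions both norms, so your spectral-norm half is a bonus; and your caveat about $\arg\min$ uniqueness when $\sigma_k=\sigma_{k+1}$ is well taken and is in fact relevant to the paper, since Lemma~\ref{lem:sequential_decomposition} implicitly relies on the rank-1 minimizer being well defined, which requires a gap $\sigma_1^\star(\mathbf{Y}_k^\star) > \sigma_2^\star(\mathbf{Y}_k^\star)$ that the paper does not state explicitly.
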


\begin{theorem}[Wedin Theorem (\cite{10.1007/BF01932678})]
Let \( \mathbf{M} \), \( \tilde{\mathbf{M}} \in \mathbb{R}^{m \times n} \) be two matrices with rank-\( r \) SVDs:
\begin{equation*}
\mathbf{M} = 
\begin{bmatrix} 
\mathbf{U}_1 & \mathbf{U}_2 
\end{bmatrix} 
\begin{bmatrix} 
\mathbf{\Sigma}_1 & 0 \\ 
0 & \mathbf{\Sigma}_2 
\end{bmatrix}
\begin{bmatrix} 
\mathbf{V_1}^\top \\ 
\mathbf{V_2}^\top 
\end{bmatrix}, \quad \text{and} \quad
\tilde{\mathbf{M}} = \mathbf{M} + \mathbf{\Delta} = 
\begin{bmatrix} 
\tilde{\mathbf{U}}_1 & \tilde{\mathbf{U}}_2 
\end{bmatrix} 
\begin{bmatrix} 
\tilde{\mathbf{\Sigma}}_1 & 0 \\ 
0 & \tilde{\mathbf{\Sigma}}_2 
\end{bmatrix}
\begin{bmatrix} 
\tilde{\mathbf{V}}_1^\top \\ 
\tilde{\mathbf{V}}_2^\top 
\end{bmatrix}.
\end{equation*}

If $\delta = \min \left\{ \min_{1 \leq i \leq r, r+1 \leq j \leq n} |\sigma_i - \tilde{\sigma}_j|, \min_{1 \leq i \leq r} \sigma_i \right\} > 0$, then:

\begin{equation*}
\left\| \sin 
\theta(\tilde{\mathbf{U}}_1, \mathbf{U}_1) 
\right\|_F^2 + \left\| \sin \theta(\tilde{\mathbf{V}}_1, \mathbf{V}_1) \right\|_F^2 
\leq \frac{\left\| \mathbf{U}_1^\top \mathbf{\Delta} \right\|_F^2 + \left\| \mathbf{\Delta} \mathbf{V}_1 \right\|_F^2}{\delta^2}
\end{equation*}
\end{theorem}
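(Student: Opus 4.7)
The plan is to reduce the generalization error to a prediction error via the elementary bound $\|\mathbf{W}^\star - \sum_{k=1}^r \mathbf{b}_k\mathbf{a}_k^\top\|_F \leq \|\mathbf{W}^\star\mathbf{X} - \sum_{k=1}^r \mathbf{b}_k\mathbf{a}_k^\top\mathbf{X}\|_F / \sigma_{\min}(\mathbf{X})$, which immediately accounts for the $\kappa(\mathbf{X})$ and $1/\sigma_{\min}(\mathbf{X})$ prefactors in \eqref{eq:noise_gen}. To control the prediction error, I would introduce two reference matrices: $\mathbf{Y}^{\star(r)}$, the truncated rank-$\min\{r,r^\star\}$ SVD of the \emph{noiseless} signal $\mathbf{Y}^\star = \mathbf{W}^\star\mathbf{X}$, and $\mathbf{Y}^{(r)}$, the truncated rank-$r$ SVD of the \emph{noisy} observation $\mathbf{Y}$. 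A triangle inequality then splits the prediction error into three pieces:
\begin{equation*}
\|\mathbf{W}^\star\mathbf{X} - \textstyle\sum_{k=1}^r\mathbf{b}_k\mathbf{a}_k^\top\mathbf{X}\|_F \leq \underbrace{\|\mathbf{W}^\star\mathbf{X} - \mathbf{Y}^{\star(r)}\|_F}_{\text{(i) signal truncation}} + \underbrace{\|\mathbf{Y}^{\star(r)} - \mathbf{Y}^{(r)}\|_F}_{\text{(ii) noise-induced SVD shift}} + \underbrace{\|\mathbf{Y}^{(r)} - \textstyle\sum_{k=1}^r\mathbf{b}_k\mathbf{a}_k^\top\mathbf{X}\|_F}_{\text{(iii) training error}}.
\end{equation*}
The color split in \eqref{eq:noise_gen} already anticipates this decomposition: terms (i) and (iii) produce the teal block, term (ii) produces the orange block.

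\textbf{Plugging in existing results for (i) and (iii).} Term (iii) is exactly what Theorem~\ref{thm:main_theorem_1_gen} bounds: applying it with $r' = r$ kills the $\sum_{k=r+1}^{r'}\sigma_k^\star$ tail and leaves only the numerical-error sum $\sigma_{\max}(\mathbf{X})\sum_{k=1}^r\sum_{k'=0}^k\|\bm{\delta}_{k'}\|_F \prod_{j=k'+1}^k(2 + 6\sigma_j^\star/\mathcal{T}_k^\star)$. For term (i), Eckart--Young--Mirsky applied to $\mathbf{W}^\star\mathbf{X}$ gives $\|\mathbf{W}^\star\mathbf{X} - \mathbf{Y}^{\star(r)}\|_F \leq \sum_{k=r+1}^{r^\star}\sigma_k(\mathbf{W}^\star\mathbf{X}) \leq \sigma_{\max}(\mathbf{X})\sum_{k=r+1}^{r^\star}\sigma_k(\mathbf{W}^\star)$, which vanishes when $r\geq r^\star$. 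After dividing by $\sigma_{\min}(\mathbf{X})$, the sum of (i) and (iii) reproduces the teal line of \eqref{eq:noise_gen} with the right $\kappa(\mathbf{X})$ prefactor.

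\textbf{Main obstacle: term (ii) via Lemma~\ref{lem:noise_comp}.} The bulk of the work is bounding $\|\mathbf{Y}^{\star(r)} - \mathbf{Y}^{(r)}\|_F$ under additive Gaussian noise, which is precisely Lemma~\ref{lem:noise_comp} with $\hat{m} = r$. I would execute a two-case argument. If $r \leq r^\star$, expand $\mathbf{Y}^{\star(r)} - \mathbf{Y}^{(r)} = \sum_{k=1}^r(\hat{\sigma}_k\hat{\mathbf{u}}_k\hat{\mathbf{v}}_k^\top - \sigma_k^\star\mathbf{u}_k^\star\mathbf{v}_k^{\star\top})$ and separate a singular-value drift piece (bounded by $|\hat{\sigma}_k - \sigma_k^\star| \leq \sigma_{\max}(\bm{\mathcal{E}})$ via Weyl) from a singular-vector rotation piece $\|\hat{\mathbf{U}}_r - \mathbf{U}_r^\star\|_F + \|\hat{\mathbf{V}}_r - \mathbf{V}_r^\star\|_F$ (controlled by Wedin's $\sin\theta$ theorem). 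The noise-scale assumption $\varepsilon \leq O(\mathcal{T}_{\min}^\star / (\sqrt{n} + \sqrt{\log(1/\gamma)}))$ is used exactly once but crucially: combined with $\sigma_{\max}(\bm{\mathcal{E}}) \leq O(\varepsilon(\sqrt{n}+\sqrt{\log(1/\gamma)}))$ from Lemma~\ref{lem:gaus_concentration}, it forces the Wedin denominator to remain at least $\mathcal{T}_{\min}^\star/2$, preventing blow-up. Wedin then reduces the rotation to $O((\|\bm{\mathcal{E}}^\top \mathbf{U}_r^\star\|_F^2 + \|\bm{\mathcal{E}}\mathbf{V}_r^\star\|_F^2)/\mathcal{T}_{\min}^\star)$, and rotation-invariance of the Gaussian makes $\mathbf{U}_r^{\star\top}\bm{\mathcal{E}}$ and $\bm{\mathcal{E}}\mathbf{V}_r^\star$ matrices of I.I.D.\ $\mathcal{N}(0,\varepsilon^2)$ entries whose Frobenius norms concentrate at scale $\varepsilon\sqrt{rn\log(1/\gamma)}$, again by Lemma~\ref{lem:gaus_concentration}. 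The case $r > r^\star$ peels off the pure-noise tail $\sum_{k=r^\star+1}^r\sigma_k \leq (r-r^\star)\sigma_{\max}(\bm{\mathcal{E}})$ (since $\hat{\sigma}_k = 0$ there, so Weyl again does the job) and reduces to the previous case for the top $r^\star$ indices.

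\textbf{Assembly.} Summing the three bounds, dividing by $\sigma_{\min}(\mathbf{X})$, and union-bounding over the two concentration events (rescaling $\gamma$ by a constant) delivers \eqref{eq:noise_gen}: the factor of $r$ in the orange term comes from summing $r$ Weyl-type shifts, while the $\sqrt{\min\{r,r^\star\}/\mathcal{T}_{\min}^\star}$ factor comes from the Wedin piece. The most delicate part of the plan is exactly verifying the noise-scale threshold — it is the precise regime in which Weyl's inequality forces the top-$r$ singular subspace of $\mathbf{Y}$ to remain a stable proxy for that of $\mathbf{Y}^\star$, so that the deflation algorithm locks onto the correct components rather than chasing noise directions, and it also drives the bias--variance trade-off encoded by the interplay of the two colored terms.
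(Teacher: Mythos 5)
Your proposal does not address the statement you were asked to prove. The statement is Wedin's $\sin\theta$ theorem itself --- the classical perturbation bound asserting that the rotation of the top-$r$ singular subspaces of $\mathbf{M}$ under an additive perturbation $\mathbf{\Delta}$ is controlled by $\bigl(\left\| \mathbf{U}_1^\top \mathbf{\Delta} \right\|_F^2 + \left\| \mathbf{\Delta} \mathbf{V}_1 \right\|_F^2\bigr)/\delta^2$ whenever the singular-value separation $\delta$ is positive. The paper does not prove this result; it imports it from the literature (\cite{10.1007/BF01932678}) as a supporting tool. What you have written instead is a proof sketch of Theorem~\ref{thm:noisy_gen}, the noisy-label generalization bound: your three-way decomposition into signal truncation, noise-induced SVD shift, and training error, the application of Theorem~\ref{thm:main_theorem_1_gen} and Lemma~\ref{lem:noise_comp}, and the assembly of the teal and orange terms of \eqref{eq:noise_gen} all concern that theorem, not the Wedin statement.

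The gap is therefore fundamental rather than technical: nothing in your argument establishes the claimed inequality for $\left\| \sin\theta(\tilde{\mathbf{U}}_1, \mathbf{U}_1)\right\|_F^2 + \left\| \sin\theta(\tilde{\mathbf{V}}_1, \mathbf{V}_1)\right\|_F^2$. Worse, your sketch explicitly \emph{invokes} Wedin's theorem as a black box (``controlled by Wedin's $\sin\theta$ theorem'' in your treatment of the singular-vector rotation piece), so if the goal were to prove the stated result your argument would be circular. A genuine proof of the statement would have to engage with singular-subspace perturbation directly --- for instance by passing to the symmetric dilation $\begin{bmatrix} \mathbf{0} & \mathbf{M} \\ \mathbf{M}^\top & \mathbf{0}\end{bmatrix}$ and running a Davis--Kahan-type argument, or by bounding the residuals $\tilde{\mathbf{M}}\mathbf{V}_1 - \mathbf{U}_1\mathbf{\Sigma}_1$ and $\tilde{\mathbf{M}}^\top\mathbf{U}_1 - \mathbf{V}_1\mathbf{\Sigma}_1$ and exploiting the separation condition $\delta > 0$. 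None of these ingredients appears in your proposal.
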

\begin{theorem}[Weyl's Theorem for Singular Values(\cite{Weyl1912})]
Let \( \mathbf{M} \) and \( \bm{\Delta} \) be \( m \times n \) matrices. If \( \tilde{\mathbf{M}} = \mathbf{M} + \bm{
\Delta} \), then the singular values \(\sigma_i\) of \( \mathbf{M} \) and the singular values \(\tilde{\sigma}_i\) of \(\tilde{\mathbf{M}}\) satisfy the following inequality for all \(i = 1, 2, \ldots, \min(m, n)\):
\[
\left|\tilde{\sigma}_i - \sigma_i \right|\leq \left\|\bm{\Delta}\right\|_2,
\]
\end{theorem}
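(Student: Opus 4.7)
The plan is to derive Weyl's singular value inequality from the Courant--Fischer min--max characterization of singular values combined with the submultiplicativity of the spectral norm. Concretely, I will use the representation
\[
    \sigma_i(\mathbf{A}) \;=\; \max_{\substack{S \subseteq \mathbb{R}^n \\ \dim S = i}} \; \min_{\substack{x \in S \\ \|x\|_2 = 1}} \|\mathbf{A}x\|_2,
\]
valid for every $m \times n$ matrix $\mathbf{A}$ and every $i \in \{1, \dots, \min(m,n)\}$.

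First, I would record the elementary pointwise perturbation bound: for any unit vector $x \in \mathbb{R}^n$,
\[
    \bigl|\,\|\tilde{\mathbf{M}}x\|_2 - \|\mathbf{M}x\|_2\,\bigr| \;\leq\; \|\tilde{\mathbf{M}}x - \mathbf{M}x\|_2 \;=\; \|\bm{\Delta}x\|_2 \;\leq\; \|\bm{\Delta}\|_2,
\]
using the reverse triangle inequality in $\ell_2$ and the definition of the spectral norm. This is the key mechanism through which $\|\bm{\Delta}\|_2$ enters the bound uniformly over $x$.

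Next, I would fix $i$ and feed this uniform bound into the min--max formula on both sides. For the upper bound on $\tilde{\sigma}_i - \sigma_i$: pick an $i$-dimensional subspace $S^\star$ that attains the max for $\tilde{\mathbf{M}}$, so that $\tilde{\sigma}_i = \min_{x \in S^\star,\,\|x\|=1} \|\tilde{\mathbf{M}}x\|_2$. For the same $S^\star$, the pointwise bound yields $\|\tilde{\mathbf{M}}x\|_2 \leq \|\mathbf{M}x\|_2 + \|\bm{\Delta}\|_2$ for every unit $x \in S^\star$; taking the minimum over $x \in S^\star$ and then using that $\min_{x \in S^\star} \|\mathbf{M}x\|_2 \leq \sigma_i(\mathbf{M})$ (by maximality of $\sigma_i$ over all $i$-dimensional subspaces) delivers $\tilde{\sigma}_i \leq \sigma_i + \|\bm{\Delta}\|_2$. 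The lower bound $\sigma_i - \tilde{\sigma}_i \leq \|\bm{\Delta}\|_2$ follows by an identical argument with the roles of $\mathbf{M}$ and $\tilde{\mathbf{M}} = \mathbf{M} + \bm{\Delta}$ swapped (noting that the perturbation from $\tilde{\mathbf{M}}$ to $\mathbf{M}$ is $-\bm{\Delta}$, which has the same spectral norm). Combining the two inequalities yields $|\tilde{\sigma}_i - \sigma_i| \leq \|\bm{\Delta}\|_2$.

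The only delicate step is making sure the min--max juggling is done on the correct side of the inequality: the argument requires choosing the subspace that is optimal for one matrix and then \emph{comparing} against the other through the pointwise bound, rather than trying to use a single optimal subspace for both. An alternative route, which I would mention but not execute, is to apply the classical Weyl inequality for eigenvalues of Hermitian matrices to the Jordan--Wielandt dilations $\bigl(\begin{smallmatrix} 0 & \mathbf{M} \\ \mathbf{M}^\top & 0 \end{smallmatrix}\bigr)$ and $\bigl(\begin{smallmatrix} 0 & \tilde{\mathbf{M}} \\ \tilde{\mathbf{M}}^\top & 0 \end{smallmatrix}\bigr)$, whose nonzero eigenvalues are $\pm \sigma_i$ and $\pm \tilde{\sigma}_i$ and whose difference has spectral norm equal to $\|\bm{\Delta}\|_2$; this reduces the statement to the Hermitian case but requires importing that result as a black box, so I prefer the direct min--max proof above.
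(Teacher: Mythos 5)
The paper does not prove this statement at all: it is imported verbatim as a classical result with a citation to Weyl (1912), listed among the supporting theorems in the appendix precisely so it can be used as a black box in the proofs of Lemma~\ref{lem:sum_of_rank_1_diff}, Theorem~\ref{thm:main_theorem_1_gen}, and Lemma~\ref{lem:noise_comp}. Your proposal therefore supplies something the paper deliberately omits, and it does so correctly. The Courant--Fischer argument is sound: the uniform pointwise bound $\bigl|\,\|\tilde{\mathbf{M}}x\|_2 - \|\mathbf{M}x\|_2\,\bigr| \leq \|\bm{\Delta}\|_2$ over unit vectors, combined with choosing the subspace optimal for $\tilde{\mathbf{M}}$ and comparing against $\sigma_i(\mathbf{M})$ via the maximality of the min--max value, gives $\tilde{\sigma}_i \leq \sigma_i + \|\bm{\Delta}\|_2$, and the symmetric swap closes the absolute value. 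You also correctly flag the one place a careless writeup could go wrong (using a single subspace for both matrices) and identify the standard alternative via the Jordan--Wielandt dilation. No gaps; this is a complete and standard proof of a result the paper treats as known.
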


\section{Experimental analysis on linear matrix regression.}\label{app:3}

We present experiments that validate our theory on error propagation in sequential rank-1 learning. 
Our experiments aim to demonstrate how the distribution of computational resources across rank-1 components affects the overall approximation quality, particularly focusing on how errors in early components propagate to later stages of the sequential learning process.

\textbf{Problem setting.}
Per our theory, we consider the low-rank linear regression problem of finding $\mathbf{W}^\star \in \mathbb{R}^{m \times n}$ with rank $\leq r$ such that $\mathbf{Y} = \mathbf{W}^\star \mathbf{X} + \bm{\mathcal{E}}$ where $\bm{\mathcal{E}}$ is the noise term.  
This corresponds to finding a low-rank approximation of $\mathbf{W}^\star$.
We investigate the following settings: \begin{enumerate}[leftmargin=*]
\item \textit{Singular Value Profiles:} We vary the singular value distribution of $\mathbf{W}^\star$ to analyze how the spectrum of ground truth influences error propagation. \vspace{-0.1cm}
\item \textit{Noise Variations:} We introduce different types and levels of noise to assess the robustness of sequential rank-1 learning to perturbations. \vspace{-0.1cm}
\item \textit{Iteration allocation strategies:} We evaluate three different iteration allocation strategies: 
\vspace{-0.1cm}
\begin{enumerate}[leftmargin=*]
\item \textbf{Equal:} Same number of optimization iterations to each rank-1 component. \vspace{-0cm}
\item \textbf{More First:} More iterations allocated to the earlier components and fewer to later ones. \vspace{-0cm}
\item \textbf{Less First:} Fewer iterations allocated to the earlier components and more to later ones. \vspace{-0cm}
\end{enumerate}
\end{enumerate}

To ensure statistical robustness, all experiments are repeated across 5 independent trials. 
We report the mean performance across these trials, and visualize variability using shaded bands that represent the standard deviation.

We consider matrix dimensions $\mathbf{W}^\star \in \mathbb{R}^{500 \times 1000}$; we observed that experiments varying the dimensions of $\mathbf{W}^\star$ do not introduce any additional value to the main messages of this section. 
We generate $\mathbf{W}^\star$ with different singular value profiles, as in:
\begin{itemize}[leftmargin=*]
\item \textbf{Uniform:} $\sigma_i = 10$ for all $i = 1, \ldots, r^\star$;
\item \textbf{Exponential decay:} $\sigma_i = 100 \cdot \left( \frac{1}{100} \right)^{\frac{i - 1}{r^\star - 1}}$ for $i = 1, \ldots, r^\star$;
\item \textbf{Power-law decay:} $\sigma_i = \frac{100}{i^2}$ for $i = 1, \ldots, r^\star$;
\end{itemize}
where $r^\star$ is the true rank of $\mathbf{W}^\star$. 
Without loss of generality, we fix the rank $r^\star$ to $20$ as we did not observe unexpected behaviors in the performance of the algorithms for different rank values.

We also consider several noise scenarios to evaluate robustness: $i)$ Noiseless; $ii)$ Gaussian where $\bm{\mathcal{E}}$ has i.i.d. entries from $\mathcal{N}(0,\kappa)$ with $\kappa \in \{0.01, 0.05, 0.1\}$, and $iii)$ Sparse where $\bm{\mathcal{E}}$ is a sparse matrix (in our case 5\% of entries are non-zeros) with non-zero entries from $\mathcal{N}(0,\kappa)$ with $\kappa \in \{1, 10\}$.
Per our theory, \(\mathbf{X}\) is sampled from a standard Gaussian distribution, \(\mathcal{N}(0, 1)\).


\textbf{Effect of singular value profile.}
We study whether the singular value profile of $\mathbf{W}^\star$ has impact on error propagation through the term $\mathcal{T}_k^\star = \min\{\min_{j>k} |\sigma_k^\star - \sigma_j^\star|, \sigma_k^\star\}$ in our error bound. Figure~\ref{fig:SVDecay} shows the singular value decay patterns of both $\mathbf{W}^\star$ and the resulting $\mathbf{Y}$ under different spectral profiles. Figure~\ref{fig:SVError} illustrates the training and reconstruction errors under these three profiles.

To ensure a fair comparison across different spectral profiles, we normalize the singular values of $\mathbf{W}^\star$ such that all generated matrices have the same Frobenius norm. This avoids artificially inflating or deflating error magnitudes due to differences in matrix scale rather than the structure of singular value decay.

\begin{figure}[h]
    \centering
    \begin{subfigure}[b]{0.495\textwidth}
        \includegraphics[width=\linewidth]{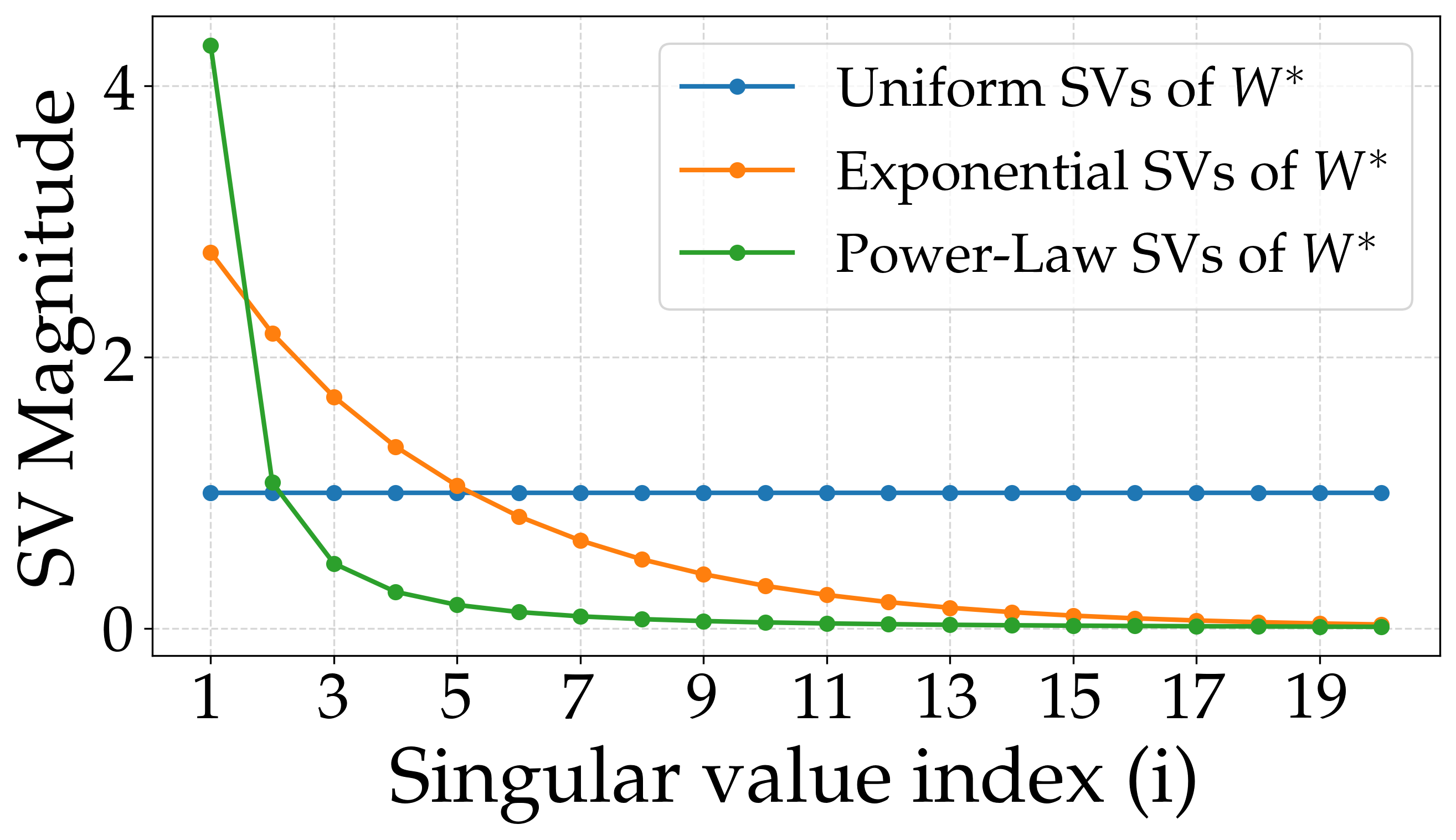}
        \label{fig:SVofW*}
    \end{subfigure}
    \hfill
    \begin{subfigure}[b]{0.495\textwidth}
        \includegraphics[width=\linewidth]{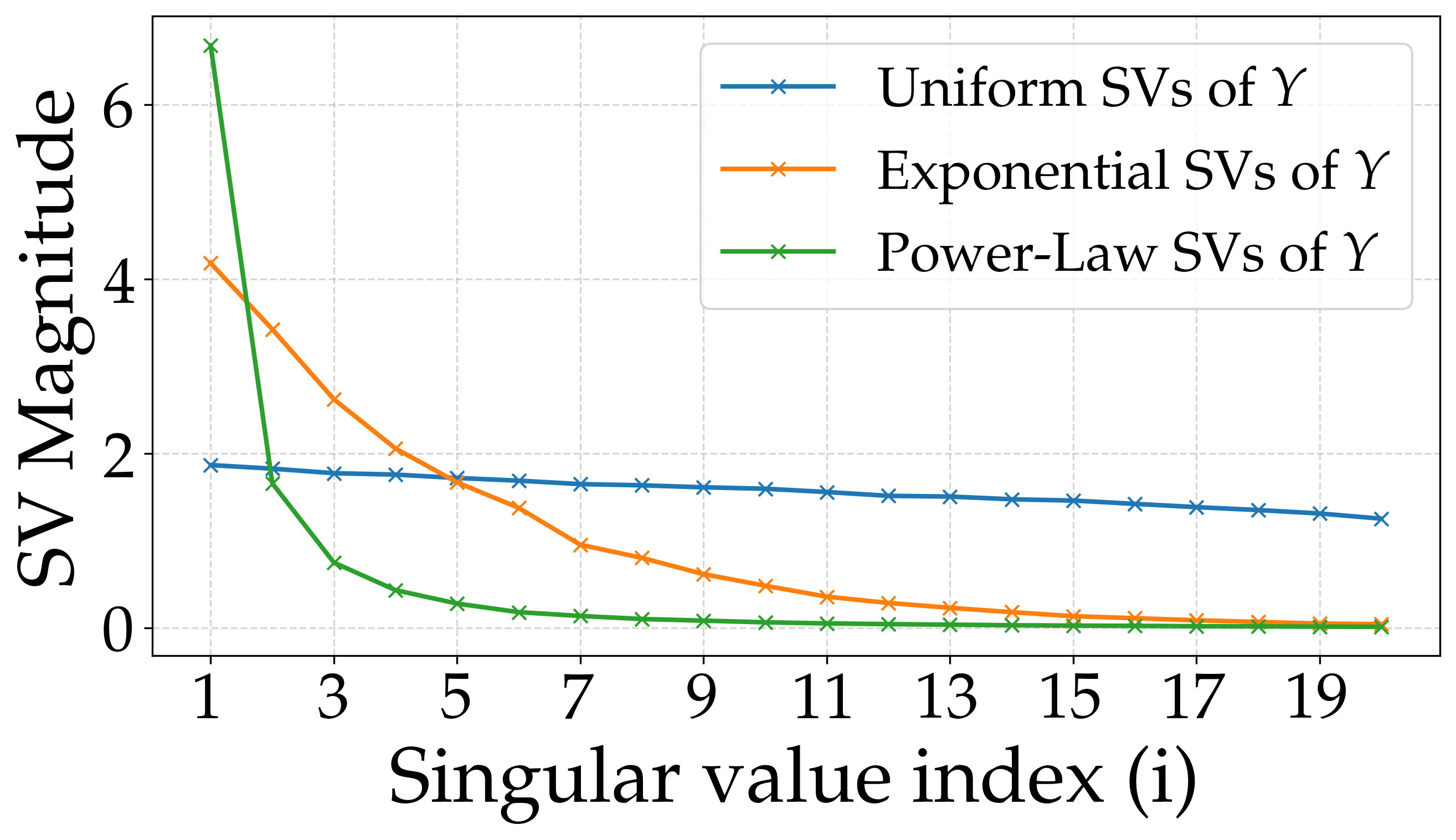}
        \label{fig:SVofY}
    \end{subfigure}
    \caption{Comparison of singular value decay under different profiles. \textit{Left:} Singular values of $\mathbf{W}^\star$. \textit{Right:} Singular values of $\mathbf{Y} = \mathbf{W}^\star \mathbf{X}$.}
    \label{fig:SVDecay}
\end{figure}

\begin{figure}[h]
    \centering
    \begin{subfigure}[b]{0.495\textwidth}
        \includegraphics[width=\linewidth]{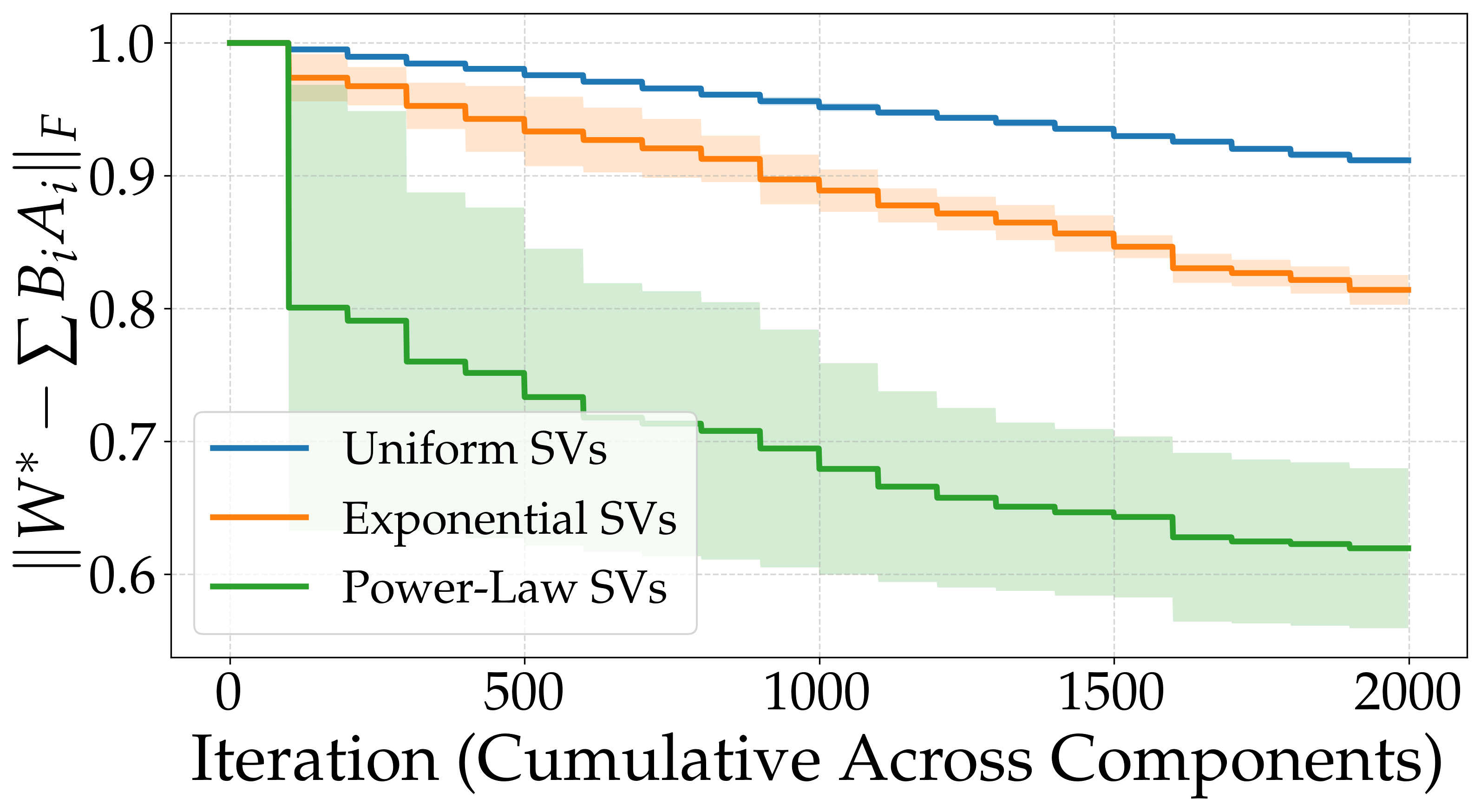}\label{fig:sv_reconstruction_error_comparison}
    \end{subfigure}
    \hfill
    \begin{subfigure}[b]{0.495\textwidth}
        \includegraphics[width=\linewidth]{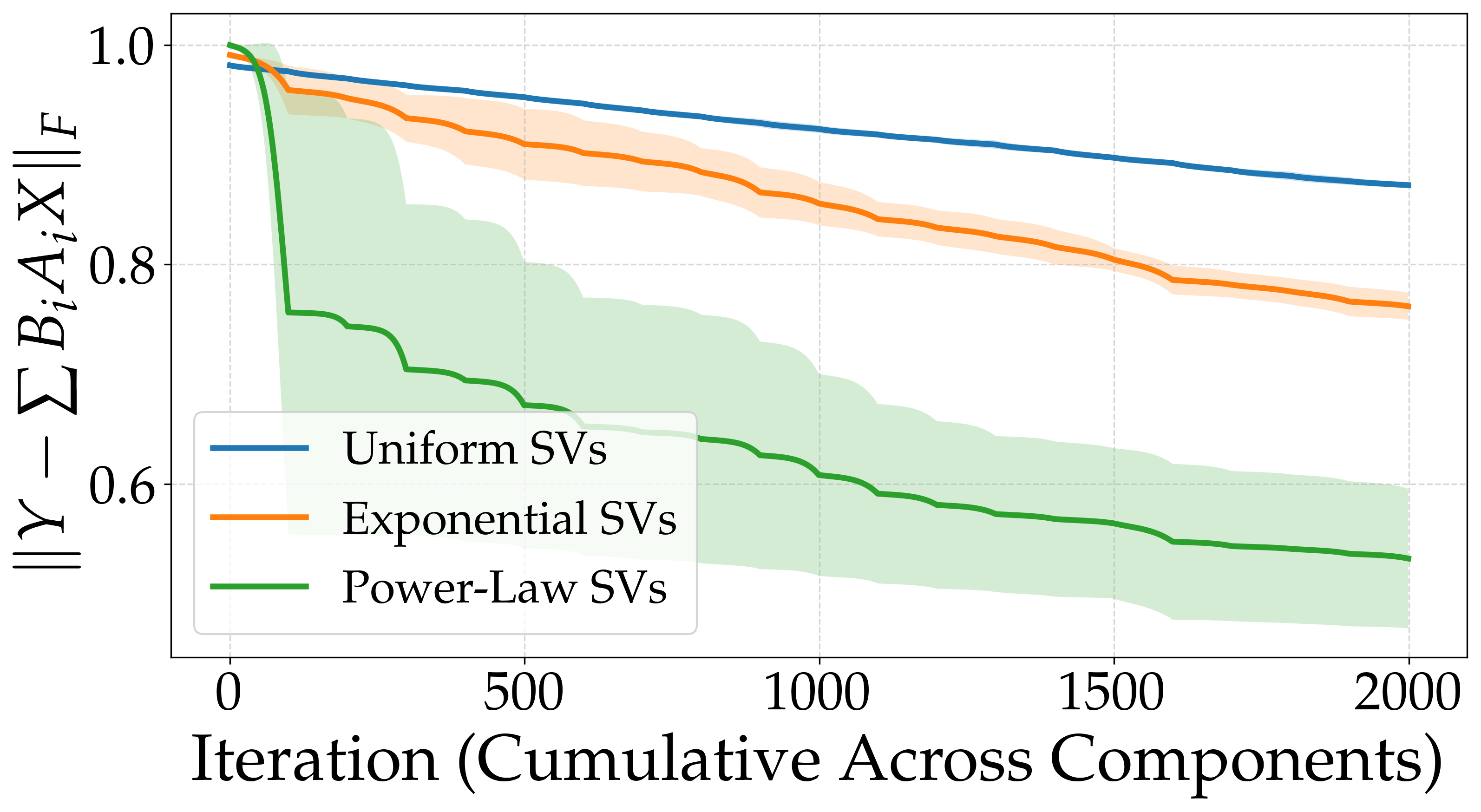}\label{fig:sv_training_error_comparison}
    \end{subfigure}
    \caption{Effect of singular value profile on sequential learning performance. \textit{Left}: $\mathbf{W}^\star$ reconstruction error. \textit{Right}: Objective's training error.}
    \label{fig:SVError}
\end{figure}

\textit{Observations:} The power-law decay profile shows the best performance, followed by the exponential decay, with the uniform profile performing worst.
This matches the theoretical insight that large singular value gaps reduce the compounding of downstream error.
Notably, power-law decay starts steep at the head—its first few singular values are significantly larger—creating large gaps for early components.
In contrast, exponential decay is smoother initially and decays more evenly.
Uniform singular values exhibit no decay, leading to minimal or zero gaps throughout.

\textbf{Impact of noise level.}
Our theoretical analysis extends to noisy settings through Theorem~\ref{thm:noisy_gen}, which characterizes how additive noise impacts generalization performance. Figure~\ref{fig:noise} illustrates the effect of increasing noise levels $\kappa$ on both the training and reconstruction error under Gaussian and sparse noise levels.

\begin{figure}[h]
    \centering
    \begin{subfigure}[b]{0.495\textwidth}
        \includegraphics[width=\linewidth]{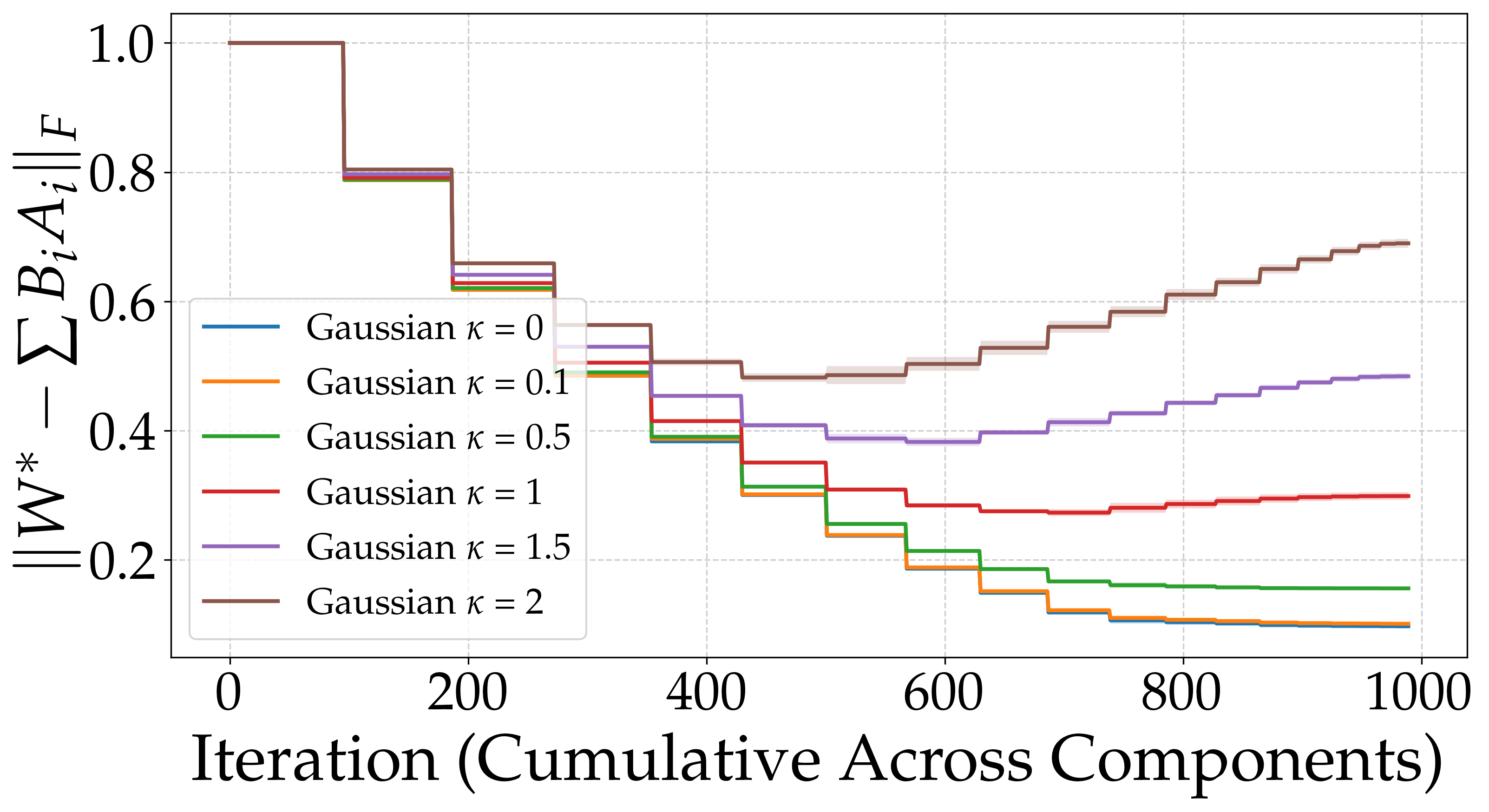}
        \label{fig:Gaussian_noise_effect}
    \end{subfigure}
    \hfill
    \begin{subfigure}[b]{0.495\textwidth}
        \includegraphics[width=\linewidth]{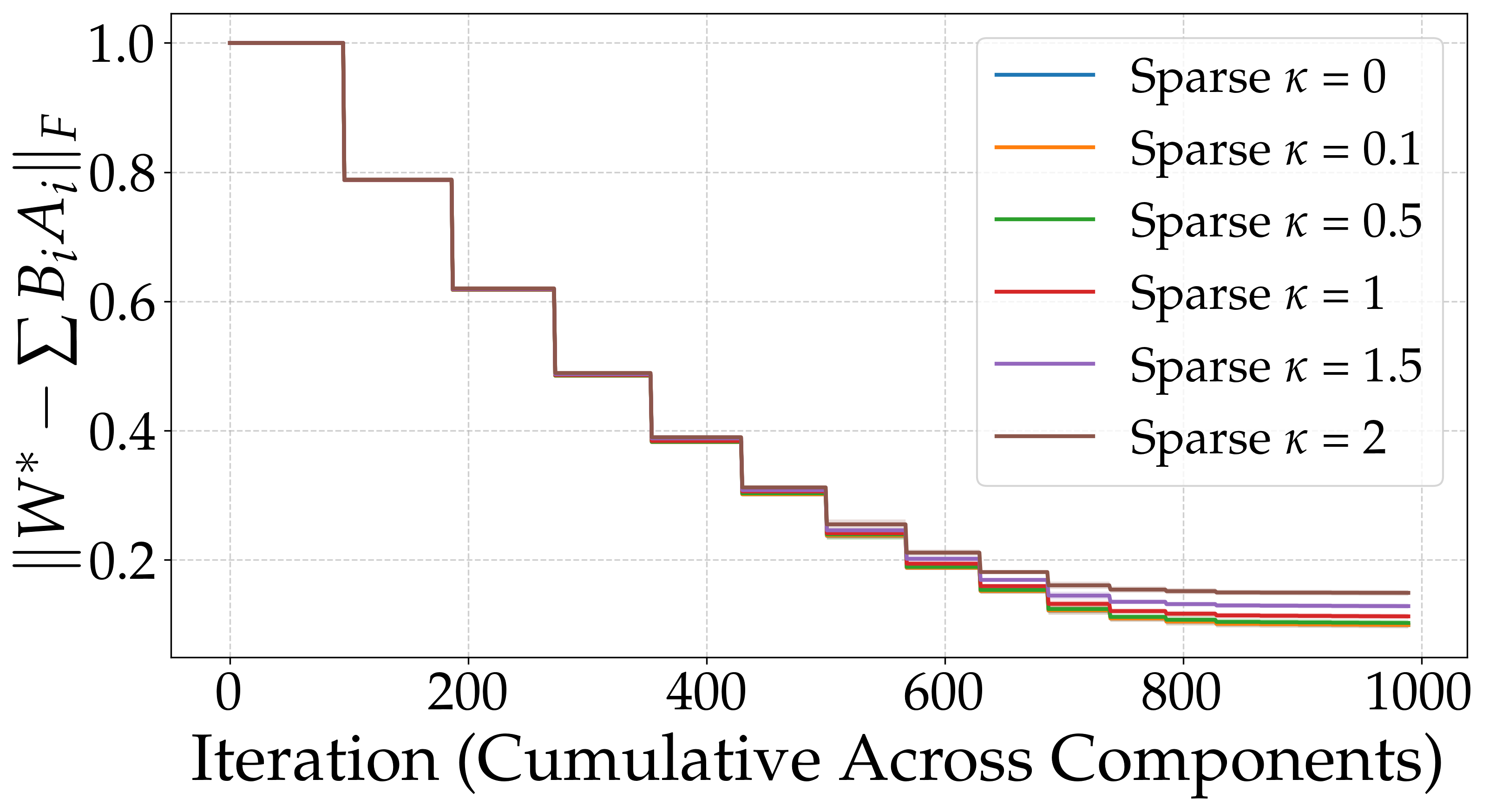}
        \label{fig:Sparse_noise_effect}
    \end{subfigure}
    \caption{Impact of noise level. \textit{Left:} Gaussian noise. \textit{Right:} Sparse noise.}
    \label{fig:noise}
\end{figure}

\textit{Observations:} As expected, increasing the noise level $\kappa$ leads to higher reconstruction error in both Gaussian and sparse settings. Higher noise levels tend to corrupt the smaller singular values of $\mathbf{Y}$, making it difficult to distinguish low-rank structure from noise. This can lead to overfitting in later components of the sequential learner, as the algorithm begins to capture noise rather than signal.

\textbf{Effect of iteration allocation strategies in noisy settings.}
To investigate mitigation strategies, we first evaluate how different iteration allocation strategies perform under noisy conditions. Figure~\ref{fig:allocation_under_noise} shows that the "more-first" strategy consistently outperforms others across varying noise levels by concentrating effort where it matters most—early in the sequence.

\begin{figure}[h]
    \centering
    \begin{subfigure}[b]{0.48\textwidth}
        \includegraphics[width=\linewidth]{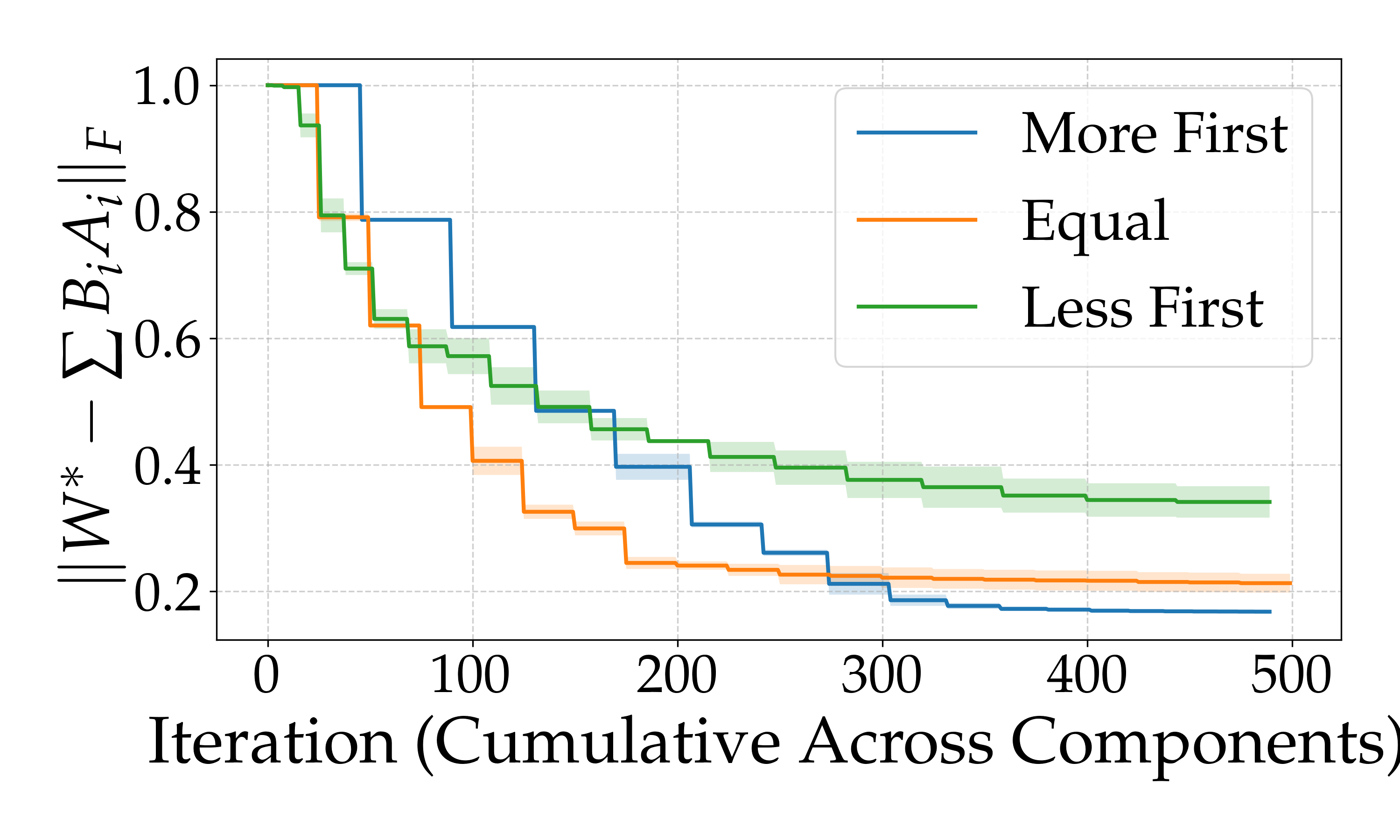}
        \caption{$\kappa = 0.1$}
    \end{subfigure}
    \hfill
    \begin{subfigure}[b]{0.48\textwidth}
        \includegraphics[width=\linewidth]{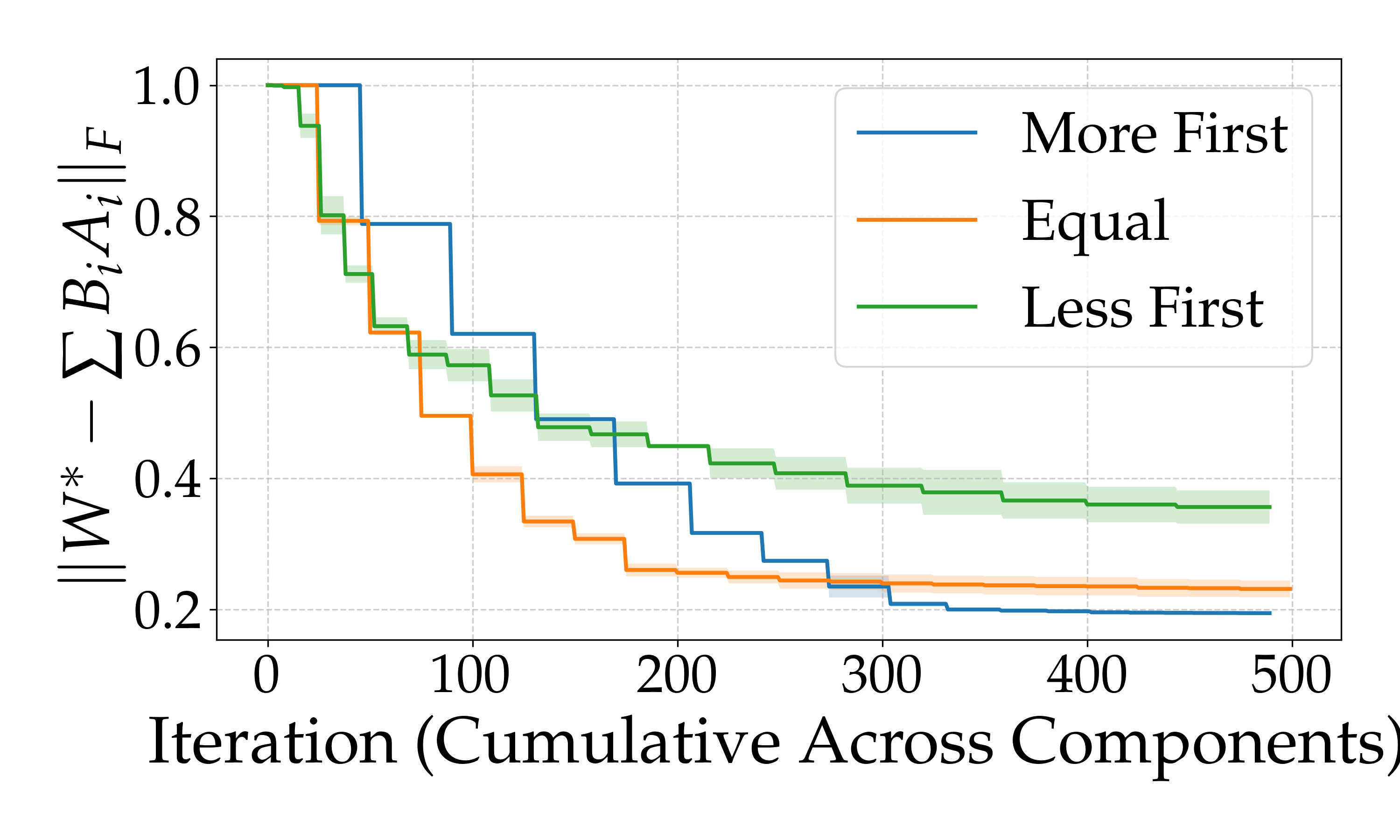}
        \caption{$\kappa = 0.5$}
    \end{subfigure}

    \vspace{0.5em}

    \begin{subfigure}[b]{0.48\textwidth}
        \includegraphics[width=\linewidth]{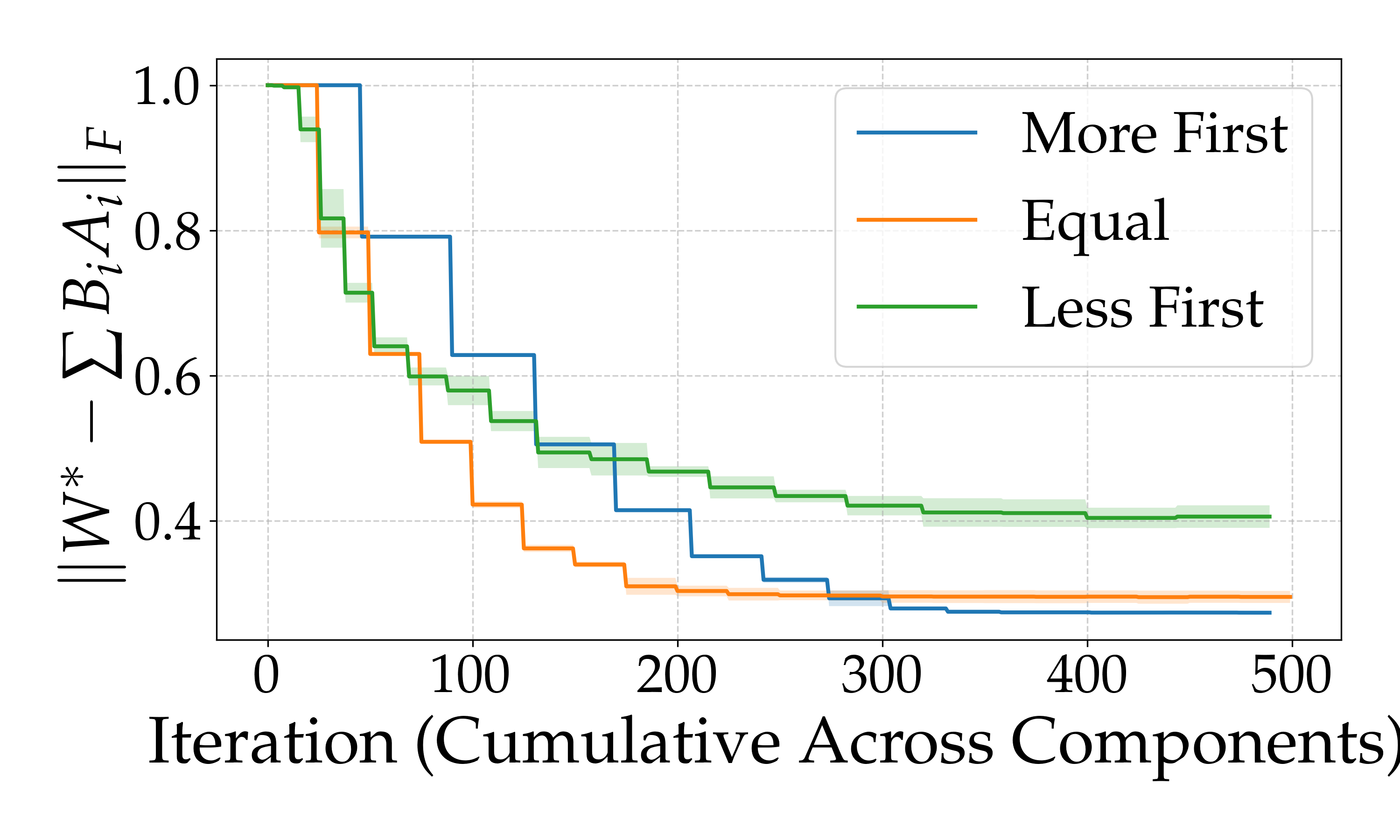}
        \caption{$\kappa = 1$}
    \end{subfigure}
    \hfill
    \begin{subfigure}[b]{0.48\textwidth}
        \includegraphics[width=\linewidth]{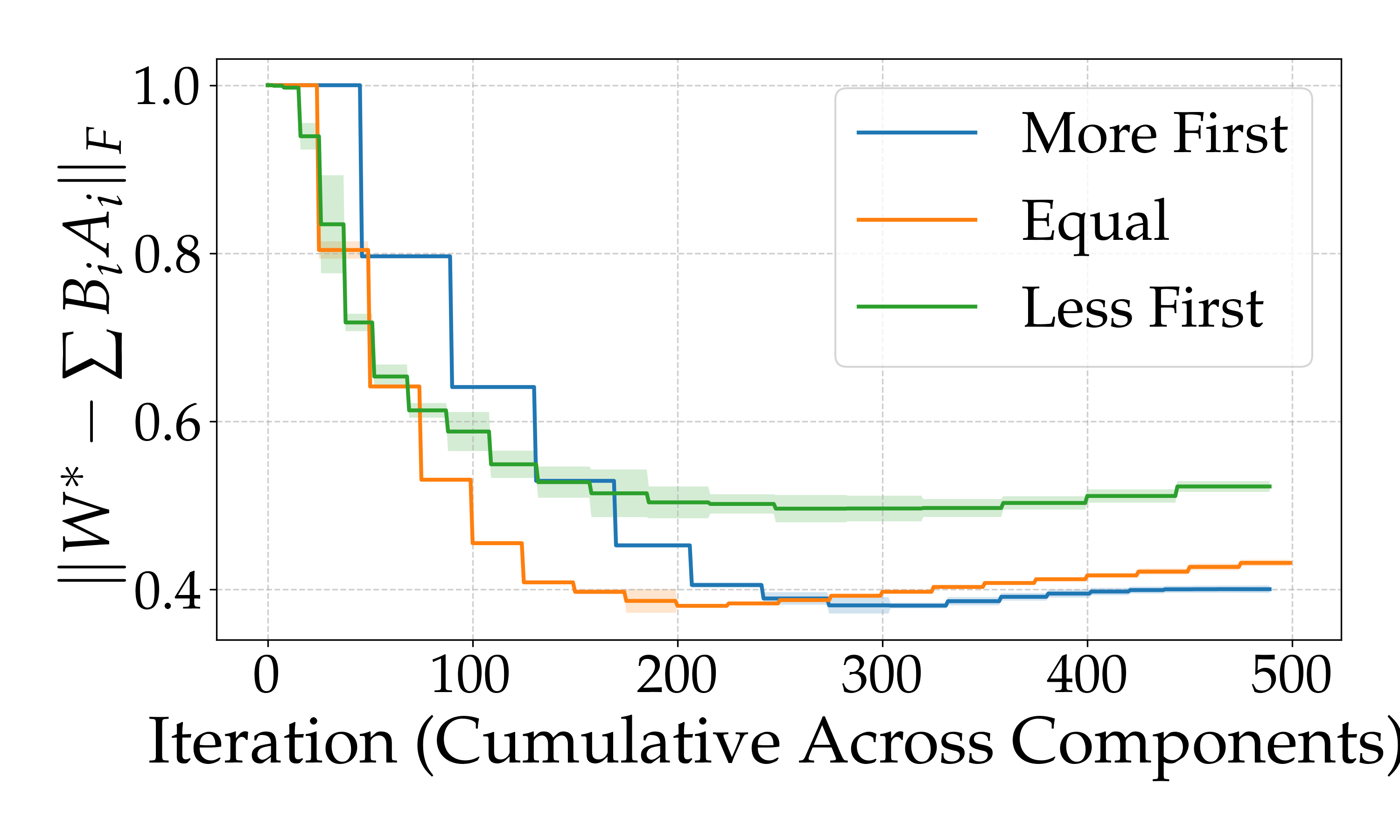}
        \caption{$\kappa = 1.5$}
    \end{subfigure}
    \caption{Comparison of iteration allocation strategies under different noise levels. The "more-first" strategy achieves better reconstruction error across all $\kappa$ values.}
    \label{fig:allocation_under_noise}
\end{figure}
\textit{Observation:} Even in noisy settings, the \textit{more-first} strategy consistently outperforms \textit{equal}, which in turn outperforms \textit{less-first}, across all noise levels~$\kappa$. This highlights the importance of prioritizing early iterations to mitigate error amplification under noise.

\textbf{Effect of singular value profiles in noisy settings.}
We further examine how spectral decay influences robustness under noise. Using the \textit{more-first} allocation strategy, Figure~\ref{fig:svp_under_noise} shows that power-law decay consistently achieves lower reconstruction error compared to exponential and uniform profiles across all noise levels~$\kappa$.
\begin{figure}[h]
    \centering
    \begin{subfigure}[b]{0.48\textwidth}
        \includegraphics[width=\linewidth]{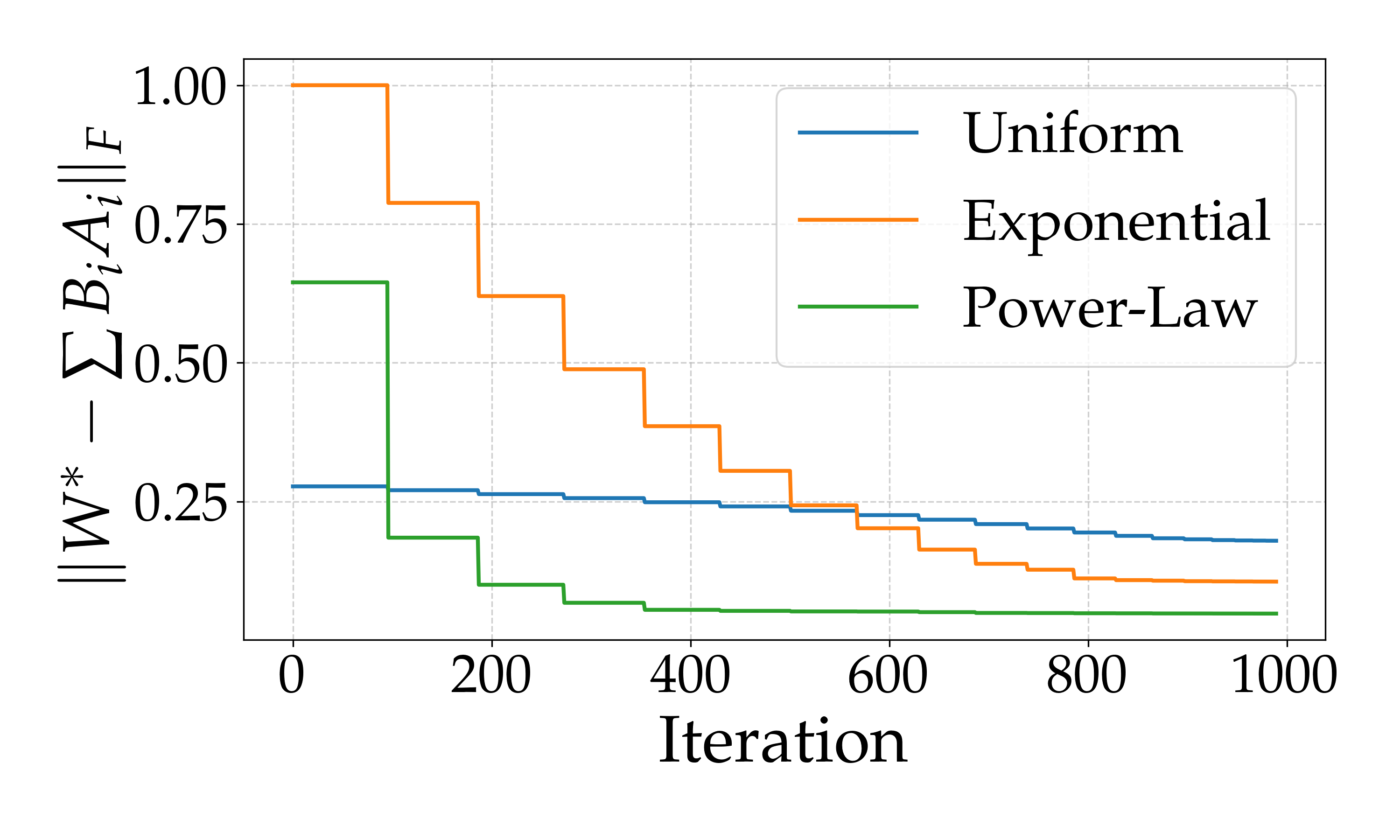}
        \caption{$\kappa = 0.05$}
    \end{subfigure}
    \hfill
    \begin{subfigure}[b]{0.48\textwidth}
        \includegraphics[width=\linewidth]{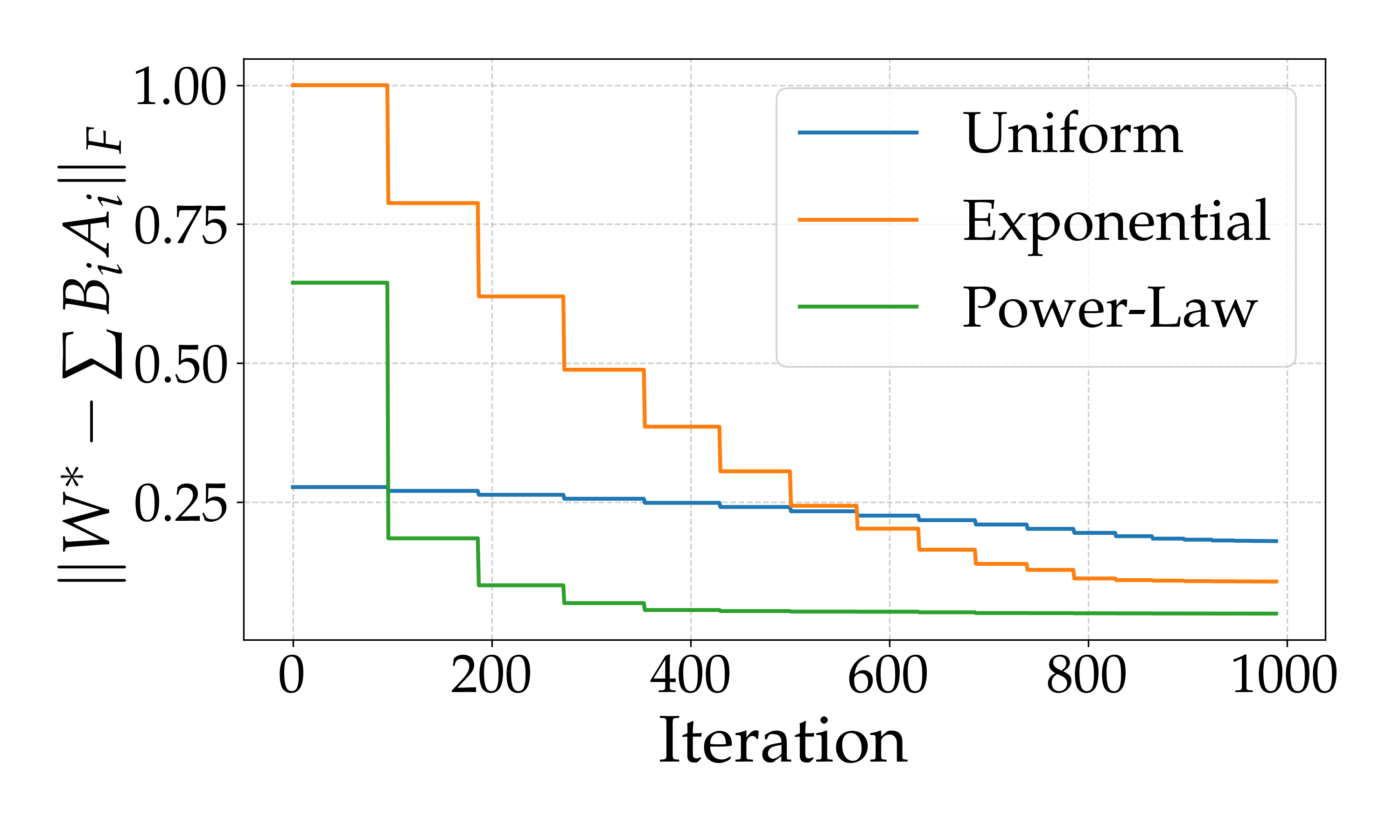}
        \caption{$\kappa = 0.1$}
    \end{subfigure}

    \vspace{0.5em}

    \begin{subfigure}[b]{0.48\textwidth}
        \includegraphics[width=\linewidth]{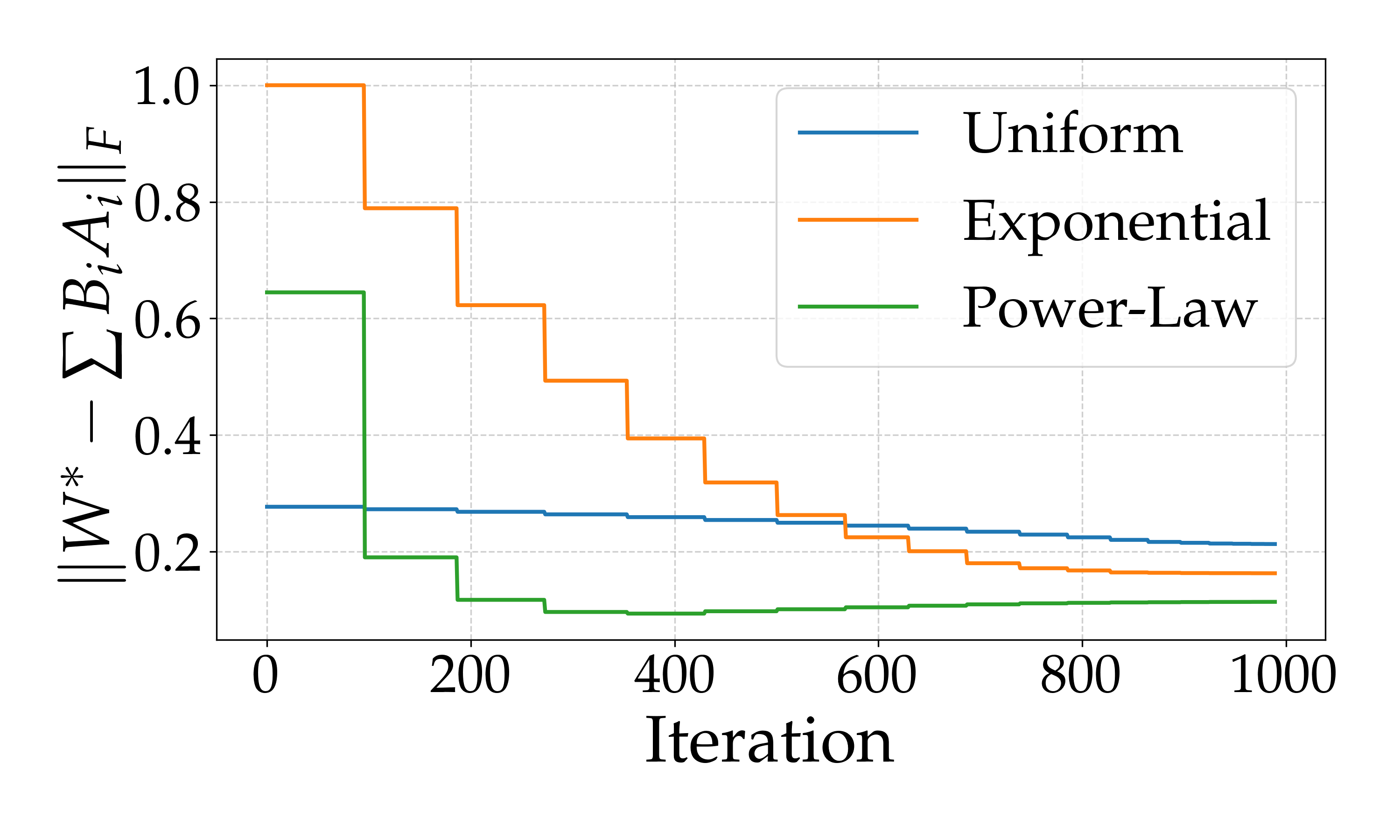}
        \caption{$\kappa = 0.5$}
    \end{subfigure}
    \hfill
    \begin{subfigure}[b]{0.48\textwidth}
        \includegraphics[width=\linewidth]{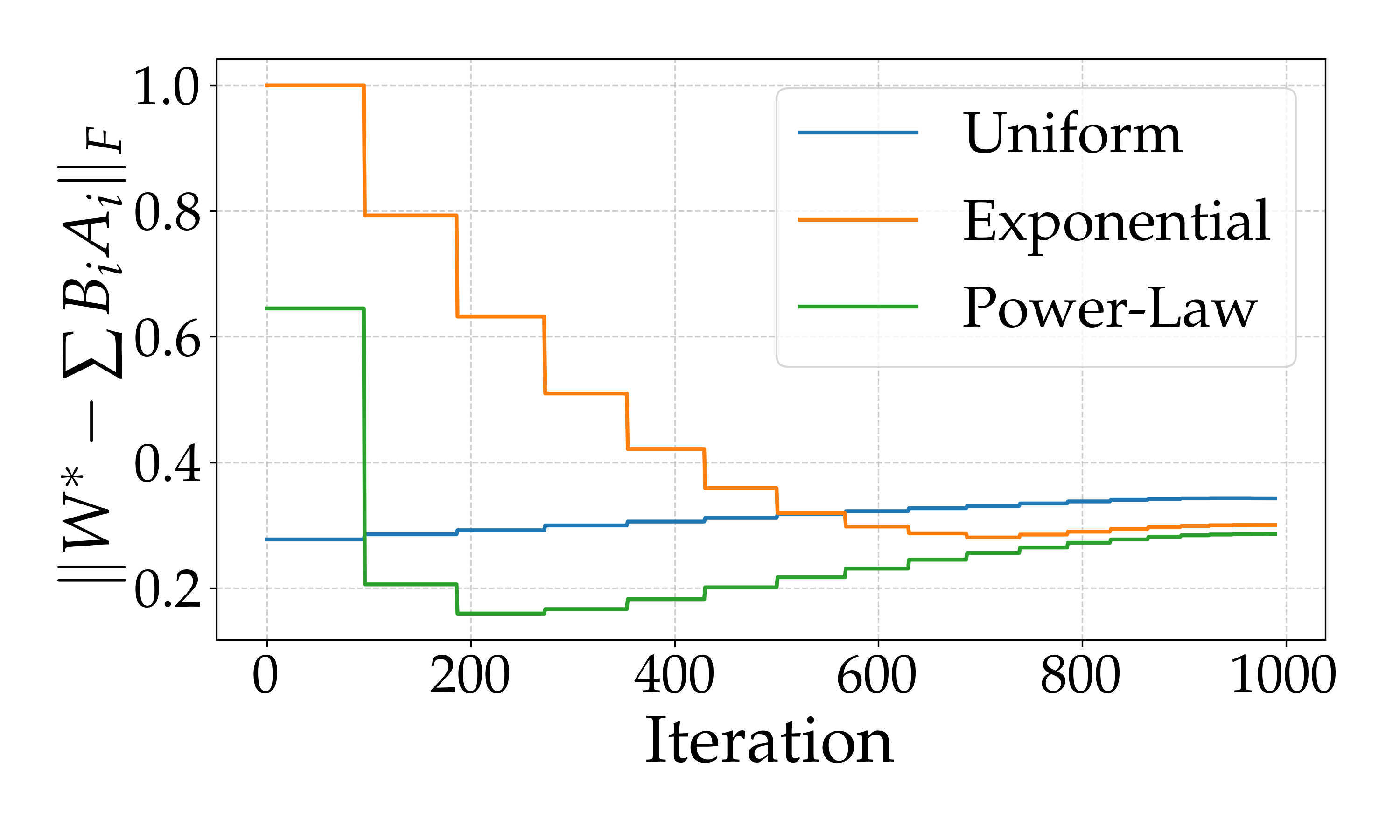}
        \caption{$\kappa = 1$}
    \end{subfigure}
    
    \caption{Effect of singular value profiles under noise. Power-law decay consistently achieves lower reconstruction error, followed by exponential and then uniform profiles, highlighting the benefit of spectral decay even in noisy settings.}
    \label{fig:svp_under_noise}
\end{figure}

\textit{Observation:} Spectral decay plays a critical role in robustness. Power-law decay, with its large leading singular values and wider gaps, allows early components to capture most of the signal, mitigating downstream error propagation. In contrast, uniform profiles lack this protective structure, making them more vulnerable to noise.

\textit{Implications for Practical Use:}
These results suggest that sequential learners can be more robust in the presence of noise by combining two strategies: allocating more iterations to early components and leveraging spectral decay. By front-loading optimization effort where it is impactful—at the beginning of the sequence—and favoring matrices with decaying singular values (especially power-law decay), models maintain lower reconstruction error despite increasing noise levels.

\textbf{Computational efficiency analysis.}
Beyond approximation quality, we also analyze the computational efficiency of different iteration allocation strategies. Specifically, we investigate how quickly each strategy reduces the reconstruction error to a desired threshold. Figure~\ref{fig:iters_vs_threshold} illustrates, for a range of target error thresholds, the number of iterations required by each allocation strategy to reach that threshold. 

\begin{figure}[h]
    \centering
    \begin{subfigure}[b]{0.48\textwidth}
        \includegraphics[width=\linewidth]{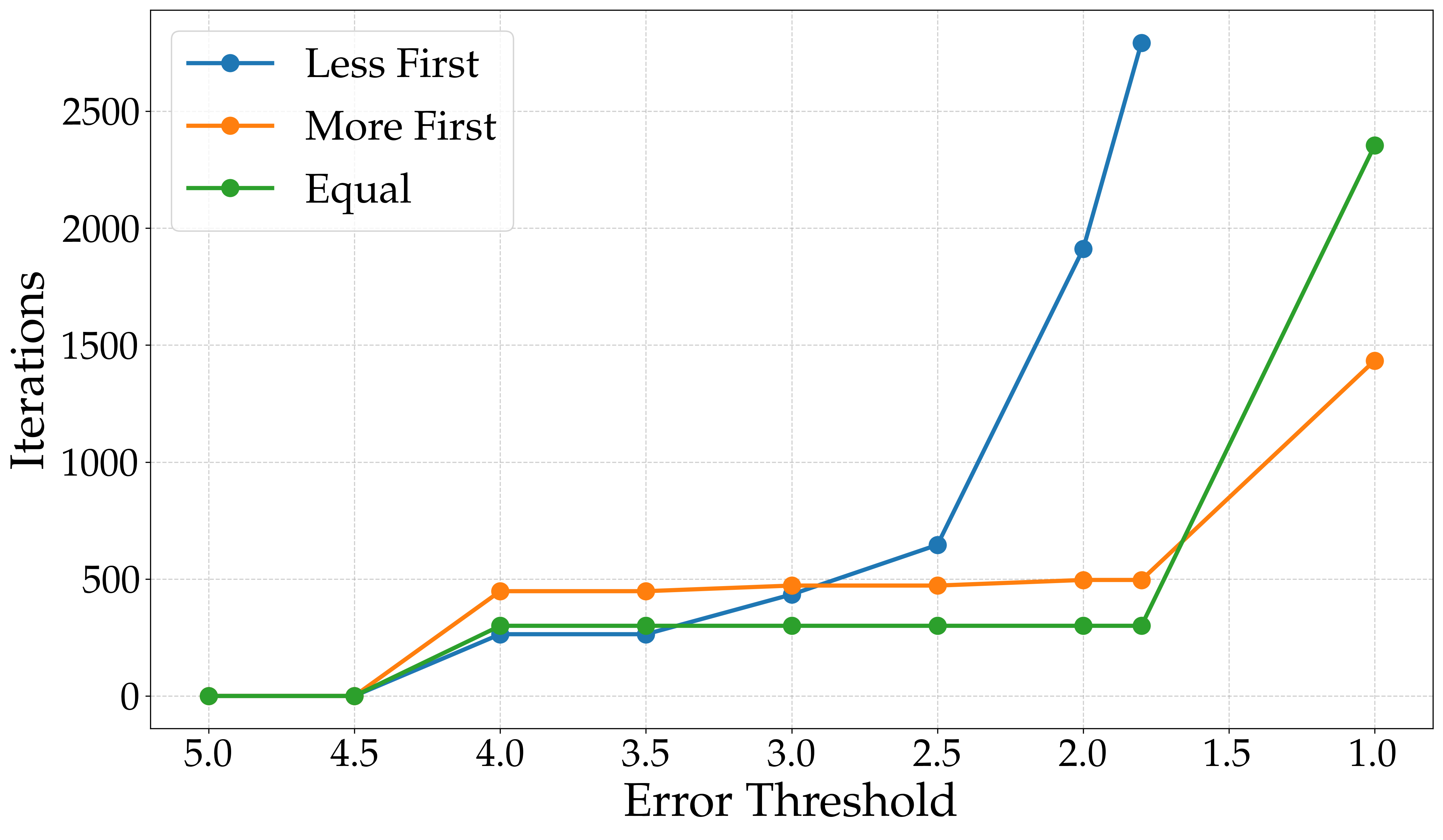}
        \caption{Threshold: 1}
        \label{fig:iters_thresh_1}
    \end{subfigure}
    \hfill
    \begin{subfigure}[b]{0.48\textwidth}
        \includegraphics[width=\linewidth]{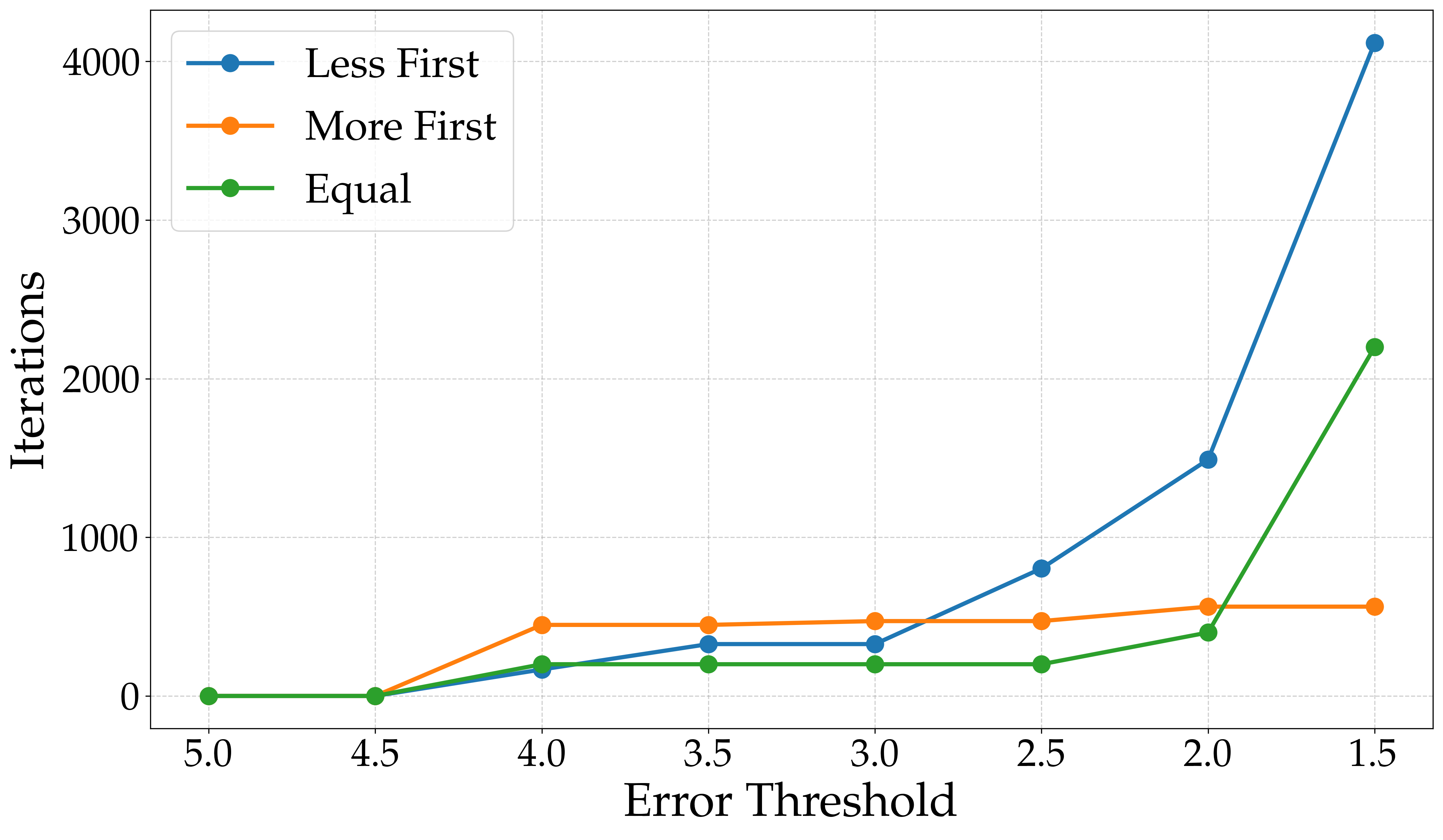}
        \caption{Threshold: 1.5}
        \label{fig:iters_thresh_1.5}
    \end{subfigure}
    
    \vspace{0.5em}
    
    \begin{subfigure}[b]{0.48\textwidth}
        \includegraphics[width=\linewidth]{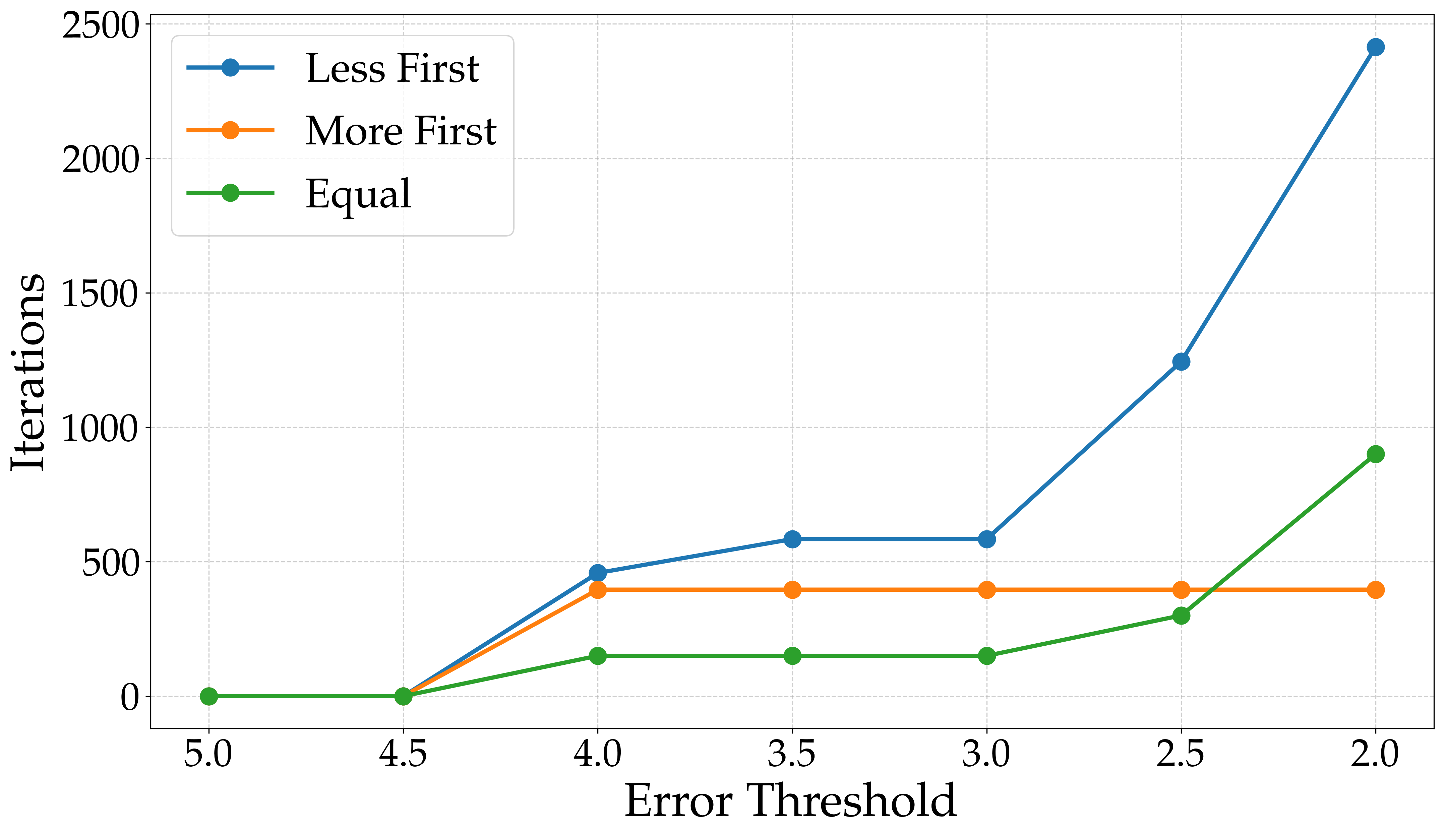}
        \caption{Threshold: 2}
        \label{fig:iters_thresh_2}
    \end{subfigure}
    \hfill
    \begin{subfigure}[b]{0.48\textwidth}
        \includegraphics[width=\linewidth]{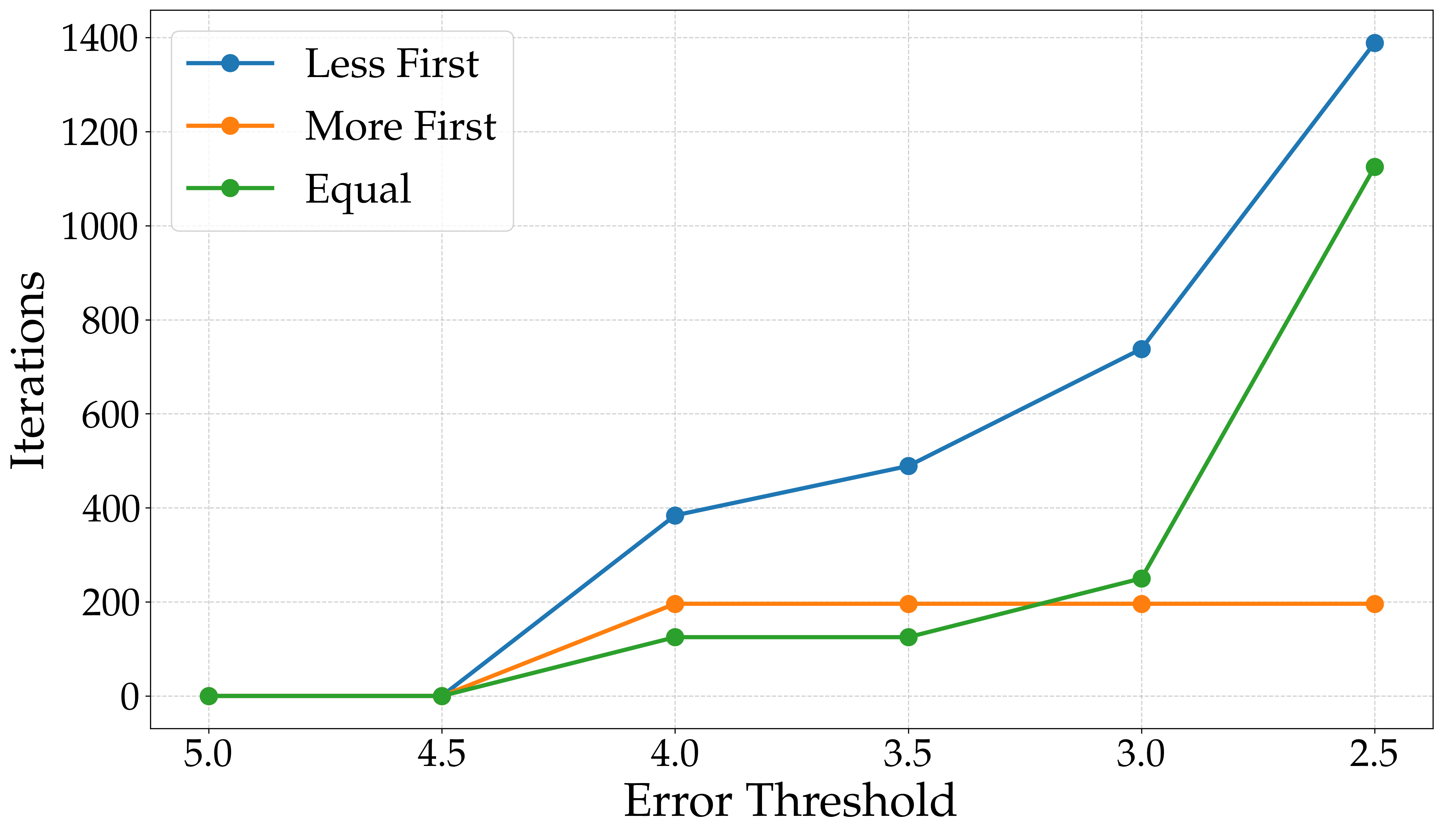}
        \caption{Threshold: 2.5}
        \label{fig:iters_thresh_2.5}
    \end{subfigure}

    \caption{Number of iterations required to reach reconstruction error thresholds for different allocation strategies. Each subplot corresponds to a fixed error threshold. The “more-first” strategy consistently reaches the thresholds faster, especially for tighter reconstruction targets. In subplot (a), the “less-first” strategy fails to reach the threshold even after 10,000 iterations.}
    \label{fig:iters_vs_threshold}
\end{figure}

\textit{Observations:} The more-first iterations strategy consistently reaches target reconstruction thresholds faster than the equal or less-first strategies. This aligns with our intuition that prioritizing the early components—those with the greatest influence on downstream error propagation—leads to quicker convergence. In contrast, less-first allocation delays learning the principal directions, requiring more total iterations to reach the same accuracy. This suggests that our theoretical insights can lead to more computationally efficient algorithms for low-rank approximation.

\section{More results on the LoRA experiments}\label{app:4}

\textbf{Adaptation performance across datasets.} 
Figure \ref{fig:bubbleplot} displays the relationship between parameter efficiency (measured by test accuracy per training epoch) and total training epochs for different model architectures. 
Here, Rank-1 architectures correspond to just using $r=1$ for different number of epochs; Rank-2 architectures correspond to $r=2$, where the components are trained for different \textit{combinations} of total epochs (e.g., some models have been trained with 1$\rightarrow$1 epochs, while others have been trained with $10 \rightarrow 10$ epochs; more about this in the next paragraph), and so on.
The bubble sizes represent the relative efficiency of each configuration.

Across all datasets, we observe that sequential rank-1 approaches (Rank-1, Rank-2, and Rank-3) consistently achieve higher parameter efficiency compared to standard LoRA. 
However, sequential rank-1 models require more total training to achieve comparable accuracy, thus creating a tradeoff to be taken in consideration in practice, but still maintain favorable parameter-to-performance ratios for some cases.

\textbf{Sequential training paths.} 
Figure \ref{fig:sequential_paths} illustrates the effectiveness of different sequential training paths for all the cases, where each path represents a sequence of component training durations. 
For example, path ``1$\rightarrow$3$\rightarrow$5'' indicates a rank-3 LoRA where the first component received 1 epoch of training, the second component 3 epochs, and the third component 5 epochs.

In all cases, it is evident that good first component implies (almost all the times) a better combined final model: front-loaded training schedules perform better, indicating that the first component captures most of the necessary adaptation, with diminishing returns for extensive training of later components.

Figure \ref{fig:low_rank_architectures} depicts a similar picture, where for every rank-$r$ architecture, we depict how well the model performs (the variance bars indicate how good or bad the model ends up, depending on the different number of epochs we spend on each component of the low-rank sequential architecture).

\textbf{Impact of baseline model quality.}
Our experiments across the three datasets reveal that the effectiveness of sequential LoRA adaptation is influenced by, but not dependent on, the quality of the baseline model. 
Even with the relatively poor CIFAR100 baseline, sequential LoRA successfully adapts to new classes, albeit with lower absolute performance compared to the better-initialized MNIST case.

This observation has practical implications: sequential rank-1 adaptation offers a viable approach for model extension even when the initial model is suboptimally trained. 
The method provides a parameter-efficient way to incrementally improve model capabilities without full retraining, regardless of the starting point quality, often leading to better results than regular LoRA, but with the expense of more computation. 
\textit{We again note the interesting property our approach introduces: sequential rank-1 does not require to know apriori the rank of the adaptation; one could check online whether accuracy is sufficient and stop further training. Such a property lacks in standard LoRA: either the user needs to know a good value for $r$, or one needs to consider different $r$ values from scratch before making the final decision.}

\textbf{Error propagation analysis.}
Our theoretical analysis predicted that errors in early components of sequential learning would propagate to later stages. The empirical results across all three datasets confirm this prediction. For all datasets, we observe that when the first component is poorly trained (1 epoch), the final performance of rank-3 models is substantially lower than when the first component receives adequate training (5-10 epochs), even when later components are well-trained.

This error propagation effect is most pronounced for MNIST, where the accuracy difference between paths ``1$\rightarrow$10$\rightarrow$10'' and ``10$\rightarrow$1$\rightarrow$1'' can be as large as 5-7 percentage points. The effect is less dramatic but still observable for CIFAR100, where the overall lower performance baseline makes the relative impact of component quality more uniform.

These findings validate our theoretical error bounds and highlight the importance of carefully allocating computational resources across sequential components, with particular attention to early components that form the foundation for subsequent adaptation steps.

\begin{figure}[h]
    \centering
    \begin{subfigure}[b]{0.75\textwidth}   \includegraphics[width=\linewidth]{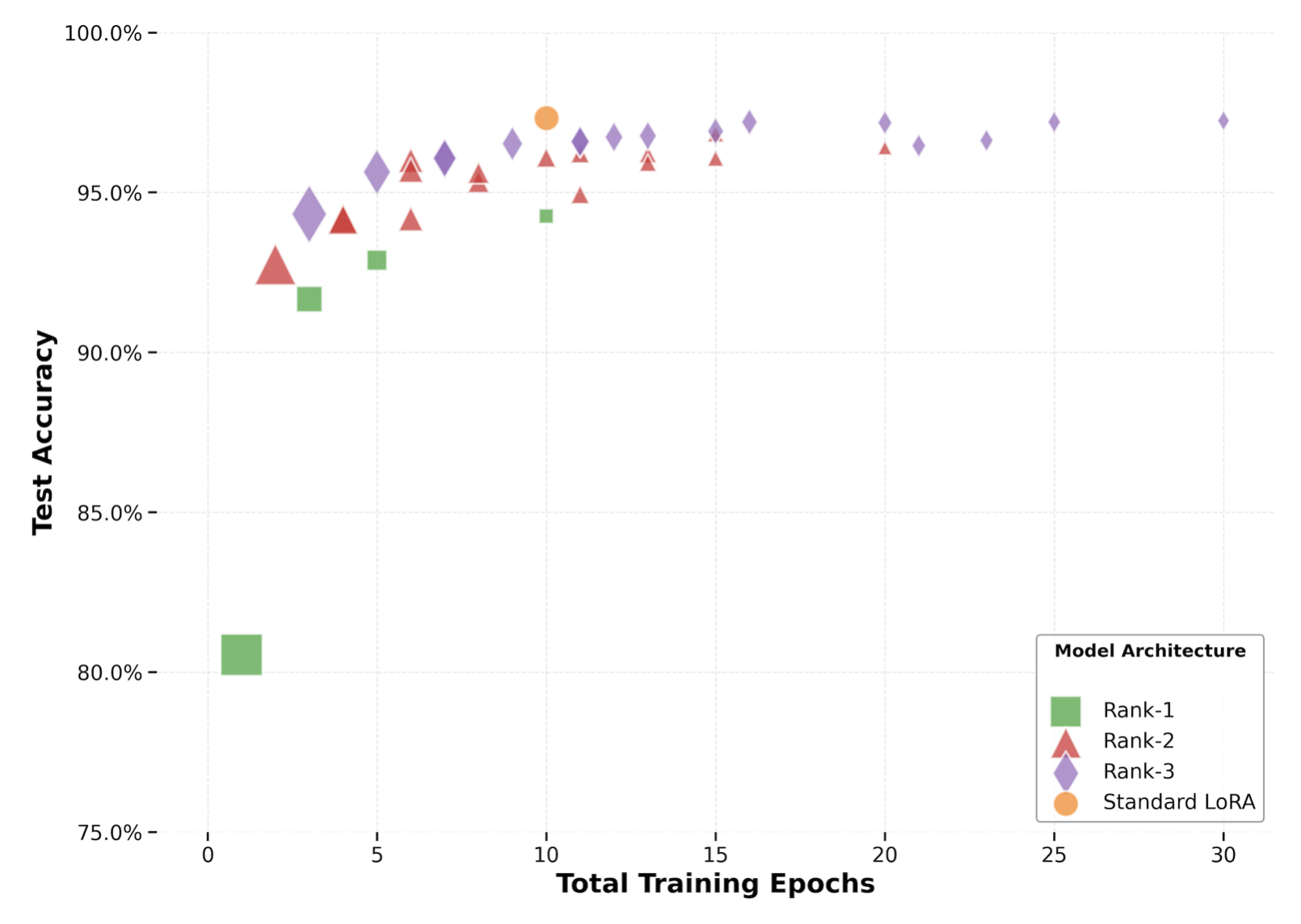}
        \label{fig:MNIST_sequential_paths}
    \end{subfigure}
    \vspace{-0.2cm}
    \inftybar
    \vspace{-0.1cm}
    \begin{subfigure}[b]{0.75\textwidth}
    \includegraphics[width=\linewidth]{CIFAR10_bubbluplot.png}
        \label{fig:CIFAR10_sequential_paths}
    \end{subfigure}
    \vspace{-0.2cm}
    \inftybar
    \vspace{-0.1cm}
    \begin{subfigure}[b]{0.75\textwidth}
    \includegraphics[width=\linewidth]{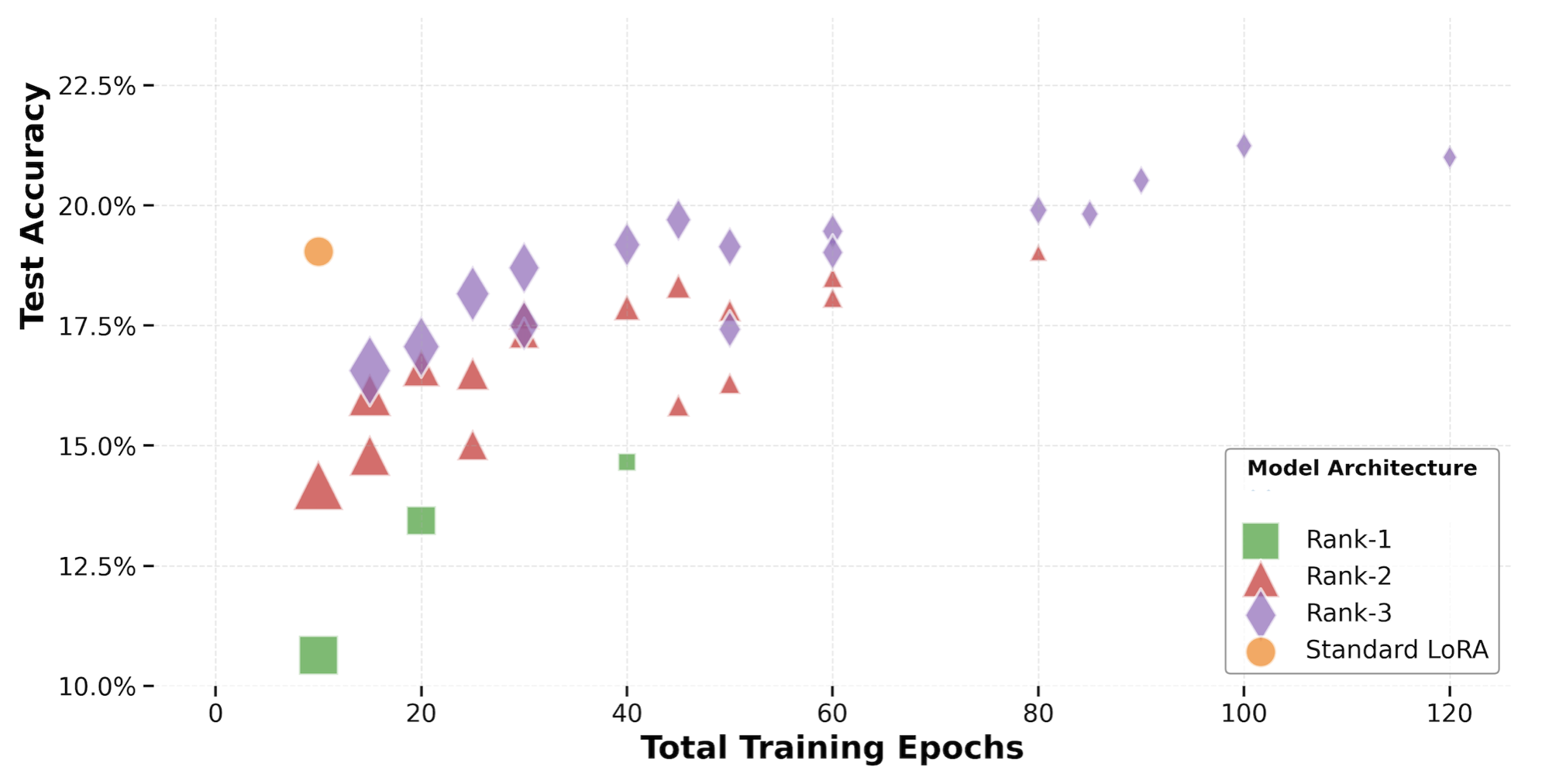}
        \label{fig:CIFAR100_sequential_paths}
    \end{subfigure}
    \caption{Test accuracy of sequential rank-1 LoRA components when adapting to new classes across the three datasets. \textit{Top:} MNIST. \textit{Center:} CIFAR10. \textit{Bottom:} CIFAR100. Note that, on purpose, the pretrained models are trained with good (MNIST), mediocre (CIFAR10) and bad (CIFAR100) accuracy.}
    \label{fig:bubbleplot}
\end{figure}

\begin{figure}[h]
    \centering
    \begin{subfigure}[b]{0.7\textwidth}        \includegraphics[width=\linewidth]{MNIST_sequential_paths.png}
        \label{fig:MNIST_sequential_paths}
    \end{subfigure}
    \begin{subfigure}[b]{0.7\textwidth}
    \includegraphics[width=\linewidth]{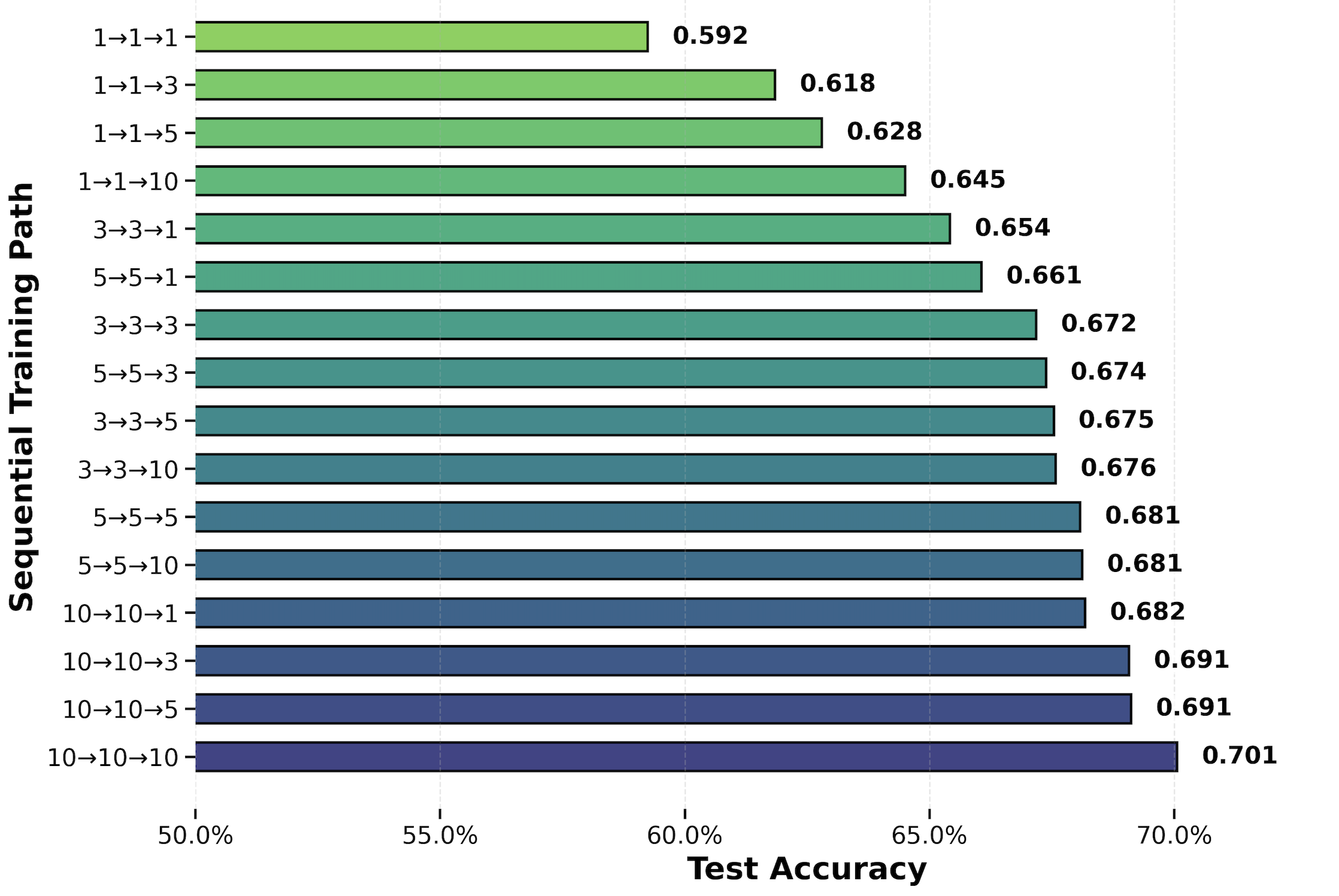}
        \label{fig:CIFAR10_sequential_paths}
    \end{subfigure}

    \begin{subfigure}[b]{0.7\textwidth}
    \includegraphics[width=\linewidth]{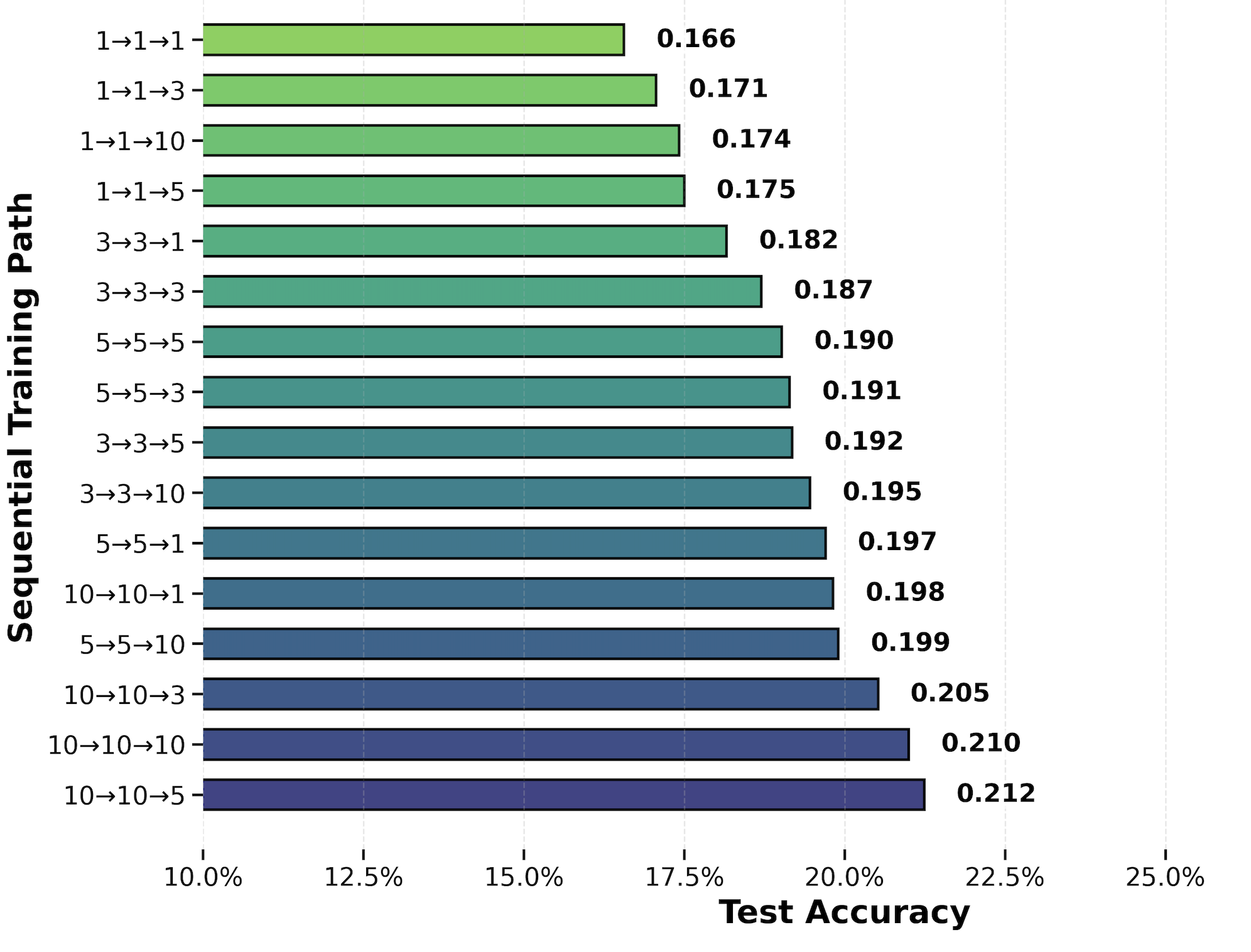}
        \label{fig:CIFAR100_sequential_paths}
    \end{subfigure}
    \caption{The bubble sizes represent the relative efficiency of each configuration.}
    \label{fig:sequential_paths}
\end{figure}

\begin{figure}[h]
    \centering
    \begin{subfigure}[b]{0.7\textwidth}        \includegraphics[width=\linewidth]{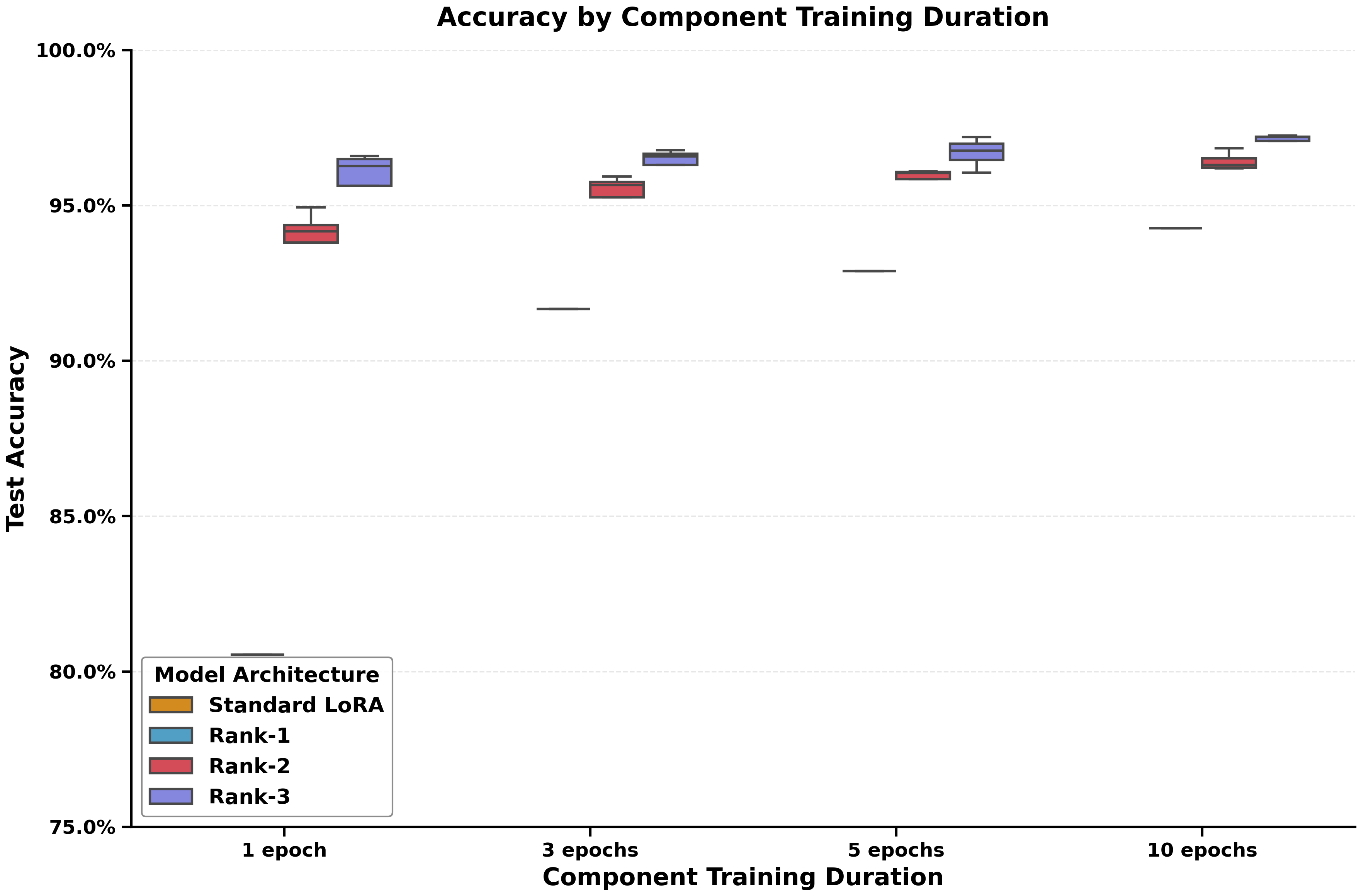}
        \label{fig:MNIST_sequential_paths}
    \end{subfigure}
    \begin{subfigure}[b]{0.8\textwidth}
    \includegraphics[width=\linewidth]{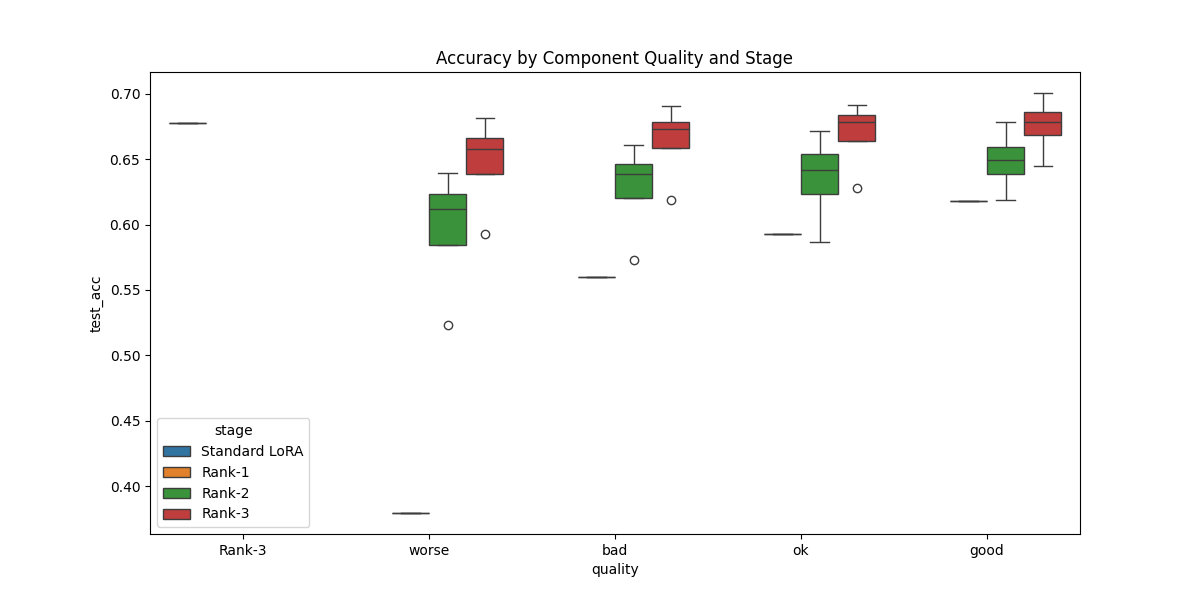}
        \label{fig:CIFAR10_sequential_paths}
    \end{subfigure}

    \begin{subfigure}[b]{0.7\textwidth}
    \includegraphics[width=\linewidth]{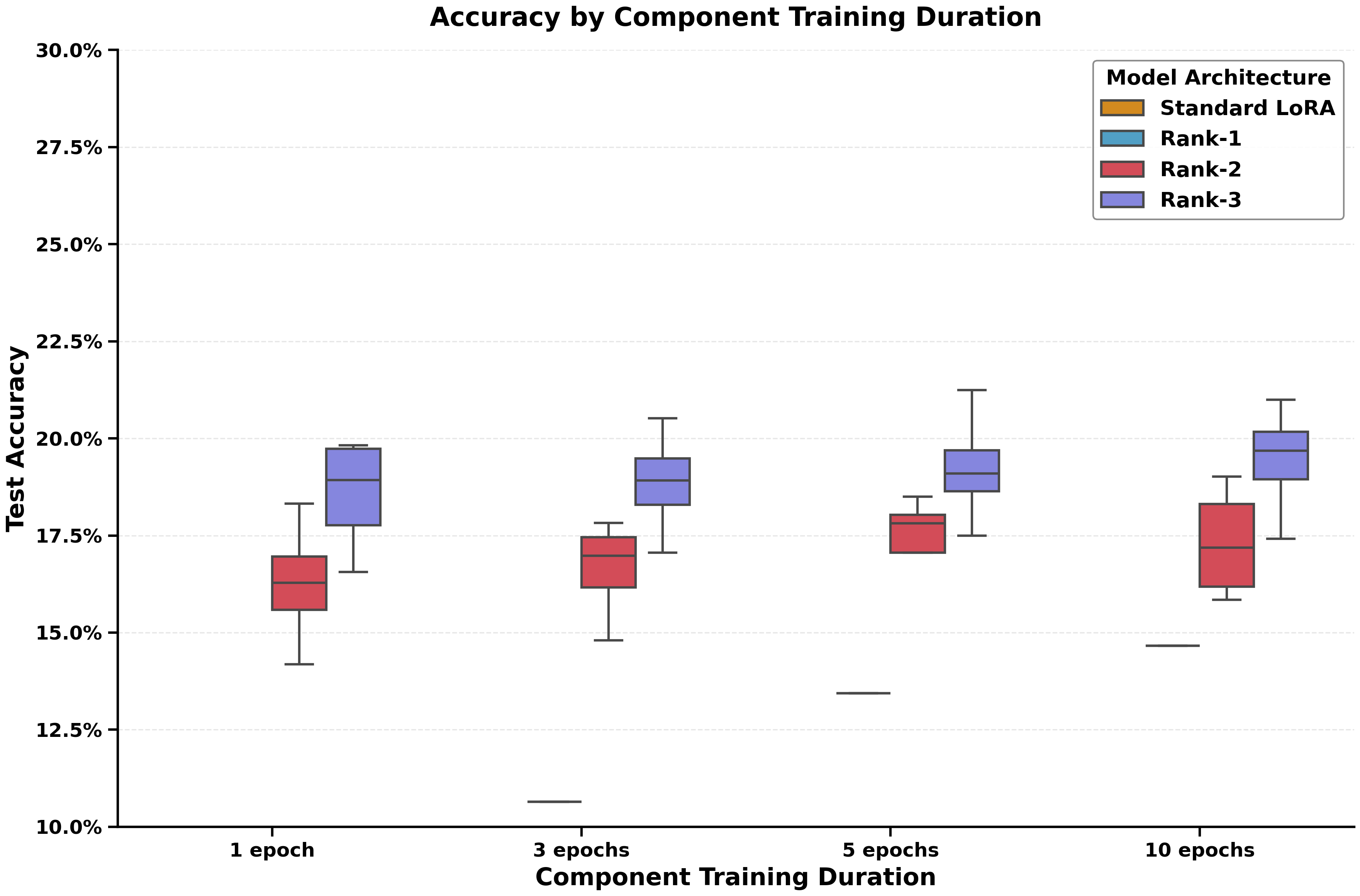}
        \label{fig:CIFAR100_sequential_paths}
    \end{subfigure}
    \caption{For every rank-$r$ architecture, we depict how well the model performs (the variance bars indicate how good or bad the model ends up, depending on the different number of epochs we spend on each component of the low-rank sequential architecture).}
    \label{fig:low_rank_architectures}
\end{figure}

\end{document}